\theoremstyle{plain}
\newtheorem{theorem}{Theorem}[section]
\newtheorem{proposition}[theorem]{Proposition}
\newtheorem{lemma}[theorem]{Lemma}
\theoremstyle{definition}
\newtheorem{assumption}[theorem]{Assumption}
\theoremstyle{remark}
\newtheorem{remark}[theorem]{Remark}
\icmltitlerunning{Langevin Monte Carlo for Contextual Bandits}
\def \hbtheta{\widehat{\btheta}}
\def \tbtheta{\widetilde{\btheta}}
\def \dd{\text{d}}
\def \algname{\text{LMC-TS}}
\def \tO{\widetilde{O}} 
\begin{document}

\twocolumn[
\icmltitle{
Langevin Monte Carlo for Contextual Bandits
}



\icmlsetsymbol{equal}{*}

\begin{icmlauthorlist}
\icmlauthor{Pan Xu}{caltech}
\icmlauthor{Hongkai Zheng}{caltech}
\icmlauthor{Eric Mazumdar}{caltech}
\icmlauthor{Kamyar Azizzadenesheli}{purdue}
\icmlauthor{Anima Anandkumar}{caltech}
\end{icmlauthorlist}

\icmlaffiliation{caltech}{Department of Computing and Mathematical Sciences, California Institute of Technology, Pasadena, CA, USA}
\icmlaffiliation{purdue}{Department of Computer Science, Purdue University, West Lafayette, IN, USA}

\icmlcorrespondingauthor{Pan Xu}{pan.xu@duke.edu}

\icmlkeywords{Machine Learning, ICML}

\vskip 0.3in
]



\printAffiliationsAndNotice{}  

\begin{abstract}
We study the efficiency of Thompson sampling for contextual bandits.  Existing Thompson sampling-based algorithms need to construct a Laplace approximation (i.e., a Gaussian distribution) of the posterior distribution, which is inefficient to sample in high dimensional applications for general covariance matrices. Moreover, the Gaussian approximation may not be a good surrogate for the posterior distribution for general reward generating functions. We propose an efficient posterior sampling algorithm, viz., Langevin Monte Carlo  Thompson Sampling (LMC-TS), that uses Markov Chain Monte Carlo (MCMC) methods to directly sample from the posterior distribution in contextual bandits. Our method is computationally efficient since it only needs to perform noisy gradient descent updates without constructing the Laplace approximation of the posterior distribution. We prove that the proposed algorithm achieves the same sublinear regret bound as the best Thompson sampling algorithms for a special case of contextual bandits, viz., linear contextual bandits. We conduct experiments on both synthetic data and real-world datasets on different contextual bandit models, which demonstrates that directly sampling from the posterior is both computationally efficient and competitive in  performance. 
\end{abstract}

\section{Introduction}

A bandit problem is a sequential decision-making problem wherein an agent, in each round,  observes an action set, chooses an action (or arm) from the set, and then observes a reward from the environment. A bandit learning algorithm aims to learn a policy for the agent to maximize its cumulative rewards based on its historical observations of actions and rewards. In the vast majority of real-world applications, each arm is usually associated with side information in the form of a feature or context vector that describes the arm. The mean reward for an arm is expressed as some unknown function of the arm's feature vector and an unknown weight parameter that is shared across different arms. This setting---known as the contextual bandit problem--- has been extensively studied in the literature~\citep{langford2007epoch,chu2011contextual,abbasi2011improved,agrawal2013thompson,filippi2010parametric,li2017provably,lale2019stochastic,kveton2020randomized}. 

The main challenge in contextual bandit problems is addressing the well known exploitation versus exploration trade-off, which requires a careful balance between choosing the myopically better arm and choosing an under-sampled worse arm. Existing algorithms for maximizing the cumulative reward in bandit problems mainly follow either one of the following two algorithmic frameworks. The first framework follows the principle of optimism in the face of uncertainty (OFU), and algorithms designed using such ideas have been widely applied to both finite armed bandits, also known as multi-armed bandits (MAB)~\citep{auer2002finite,menard2017minimax}, and contextual bandits~\citep{chu2011contextual,abbasi2011improved,li2017provably,zhou2020neural,xu2022neural}. The second dominant category of bandit algorithm makes use of the idea of Thompson or posterior sampling~\citep{thompson1933likelihood}. Such algorithms have been widely used in practice due to their ease of implementation and impressive empirical performance, and have only recently started to be well understood theoretically in multi-armed bandits~\citep{agrawal2012analysis,kaufmann2012thompson,russo2014learning,jin2021mots} and contextual bandits~\citep{chapelle2011empirical,agrawal2013thompson,riquelme2018deep,wang2020thompson,zhang2021neural}. 

One crucial area in which the two types of algorithms differ is in their ease of implementation. In contextual bandit problems, algorithms based on the OFU principle usually need to solve a bi-linear optimization problem, making them computationally expensive to implement outside of simple problems despite coming with stronger theoretical guarantees. In contrast, Thompson sampling algorithms only need to solve a linear optimization problem over the arm set since the uncertainty in the posterior distribution automatically accounts for the exploration in the parameter space. Furthermore, Thompson sampling has been observed to be empirically competitive to---or sometimes even better---than OFU based algorithms~\citep{chapelle2011empirical}.

Most existing Thompson sampling algorithms first construct a Laplace approximation (which is essentially a Gaussian distribution)~\citep{chapelle2011empirical} of the underlying posterior distribution on the data and then sample from the Gaussian distribution to explore in the parameter space. The Laplace approximation in Thompson sampling usually leads to a non-isotropic covariance matrix. It is well known that sampling from a Gaussian distribution with a general covariance matrix is usually computationally expensive in high dimensional applications. Moreover, when the reward generating function is nonlinear with respect to the weight parameter such as in generalized linear bandits~\citep{kveton2020randomized} and neural contextual bandits~\citep{riquelme2018deep,zhang2021neural}, the true posterior distribution may not be well approximated by a Gaussian distribution and thus the Laplace approximation could be a poor surrogate for the posterior distribution. 

\textbf{Our approach:} In this paper, we propose  an algorithm, viz., Langevin Monte Carlo Thompson sampling ($\algname$), which directly samples from the data posterior distribution instead of a Laplace approximation in contextual bandits. In particular, by incorporating Langevin Monte Carlo~\citep{bakry2014analysis}, our algorithm only needs to perform noisy gradient descent updates, which can generate samples that provide a good approximation of the posterior with arbitrary accuracy if it is run for sufficiently many steps. This is contrast with Laplace approximation for Thompson sampling~\citep{chapelle2011empirical}, which has a fixed approximation error for the posterior distribution and thus the covariance matrix needs to be carefully redesigned in different contextual bandit problems to achieve reasonable performance~\citep{chapelle2011empirical,kveton2020randomized,riquelme2018deep,zhang2021neural}. Moreover, due to the simplicity of noisy gradient descent updates, the proposed algorithm is directly applicable to many bandit problems where deep neural network function classes are used. 

\textbf{Contributions} of this paper are summarized as follows:
\begin{itemize}[leftmargin=*]
    \item We propose a practical and efficient bandit algorithm $\algname$, which only needs to perform noisy gradient descent updates to approximately sample from the data posterior distribution.
    $\algname$ is easily implementable and scalable to large-scale and high dimensional problems including deep learning applications. It also works simultaneously for a large class of contextual bandit models including linear contextual bandits, generalized linear bandits, and neural contextual bandits.
    \item We theoretically prove that $\algname$ achieves a $\widetilde O(d\sqrt{dT})$ regret for linear contextual bandits, where $d$ is the dimension of the problem and $T$ is the time horizon. This result matches the best regret bound for Thompson sampling algorithms in linear contextual bandits~\citep{agrawal2013thompson}.
    \item We further conduct thorough experiments on both synthetic datasets and real-world datasets (UCI machine learning datasets and a high dimensional image dataset CIFAR10) to show that one algorithm is enough for learning many different complex bandit models by comparing it with different baseline algorithms in linear contextual bandits, generalized bandits, and neural contextual bandits respectively. 
\end{itemize}

\paragraph{Notation} We use $[k]$ to denote a  set $\{1,\ldots,k\}$, $k\in\NN^+$. $\|\xb\|_2=\sqrt{\xb^{\top}\xb}$ is the Euclidean norm of a vector $\xb\in\RR^d$. For a matrix $\Vb\in\RR^{m\times n}$, we denote by $\|\Vb\|_2$ and $\|\Vb\|_F$ its operator norm and Frobenius norm respectively. For a semi-positive definite matrix $\Vb\in\RR^{d\times d}$ and a vector $\xb\in\RR^d$, we denote the Mahalanobis norm as $\|\xb\|_{\Vb}=\sqrt{\xb^{\top}\Vb\xb}$. For an event $E$ on a probability space, we denote $E^c$ as its complement event such that $\PP(E)+\PP(E^c)=1$. For a function $f(T)$, we use the common big O notation  $O(f(T))$ to hide constant factors with respect to $T$ and use $\tO(f(T))$ to omit the logarithmic dependence on $T$.

\section{Preliminary}\label{sec:preliminary}
\paragraph{Contextual Bandits} Contextual bandits are a wide class of sequential decision problems, where the player makes the decision based on an observation of an action set consisting of feature vectors as contexts for different actions. In particular, at round $t$, the player observes an action set $\cX_t\subseteq\RR^d$, and chooses an arm or action which is represented by a feature vector $\xb_t\in\cX_t$. Note that in this paper we do not assume the action set is finite nor is fixed in each round. Then a reward $r_t$ is immediately revealed to the agent by the environment. In contextual bandit problems, it is often assumed that the mean reward of an action with feature $\xb\in\RR^d$ is given by a reward generating function $f(\xb,\btheta^*)$ and the observed reward is $r(\xb)=f(\xb,\btheta^*)+\xi$, where $\btheta^*\in\RR^{d'}$ is an unknown weight parameter that is shared across all arms, and $\xi$ is a random noise incurred in the observation. For instance, in linear contextual bandits~\citep{chu2011contextual,abbasi2011improved,agrawal2013thompson}, we have $\btheta^*\in\RR^d$ and $f(\xb,\btheta^*)=\xb^{\top}\btheta^*$; in generalized linear bandits~\citep{filippi2010parametric,li2017provably,kveton2020randomized,ding2021efficient}, we have $f(\xb,\btheta^*)=\mu(\xb^{\top}\btheta^*)$ for some link function $\mu(\cdot)$; and for neural contextual bandits~\citep{riquelme2018deep,zhou2020neural,zhang2021neural,xu2022neural}, $f(\xb,\btheta^*)$ is a neural network, where $\btheta^*$ is the concatenation of all weight parameters  and $\xb$ is the input.

The objective of a bandit algorithm is to maximize the cumulative rewards over a time horizon $T$, which is equivalent to minimizing the following pseudo regret~\citep{lattimore2020bandit}:
\begin{align}\label{def:pseudo_regret}
    R(T)=\EE\Bigg[\sum_{t=1}^{T}(r(\xb_t^*)-r(\xb_t))\Bigg],
\end{align}
where $\xb_t\in\cX_t$ is the arm chosen by the bandit algorithm at round $t$, and $\xb_t^*=\argmax_{\xb\in\cX_t} \EE[r(\xb)]$ is the arm with the maximum expected reward at round $t$. Note that this definition of regret is based on the best oracle arm $\xb_t^*$, which is more general than the definition based on the best policy achievable within a predefined policy class in adversarial bandits~\citep{bubeck2012regret}.

\paragraph{Laplace Approximation Thompson Sampling}

Among the most popular bandit algorithms, Thompson sampling~\citep{thompson1933likelihood,chapelle2011empirical,russo2018tutorial} is known to be simple and efficient in practice, which uses a Laplace approximation to approximate the posterior distribution of the data. After $t-1$ rounds of the bandit problem, assume we have collected data $\{\xb_1,r_1,\xb_2,r_2,\ldots,\xb_{t-1},r_{t-1}\}$.  Define the following quantities based on historical data.
\begin{align}\label{eq:V_b_update}
    \Vb_t=\lambda\Ib+\sum_{s=1}^{t-1}\xb_{s}\xb_{s}^{\top}, \quad\bbb_t=\sum_{s=1}^{t-1}r_{s}\xb_{s},
\end{align}
where $\lambda>0$ is a regularization parameter. Denote $\widehat\btheta_t=\Vb_t^{-1}\bbb_t$. At round $t$, the agent receives an action set $\cX_t\subseteq\RR^d$ which consists of feature vectors of candidate actions at round $t$. Then linear Thompson sampling (LinTS)~\citep{agrawal2013thompson} samples a parameter $\widetilde\btheta_t$ from distribution $\cN(\hbtheta_t,v_t\Vb_t^{-1})$ and then chooses the arm as follows $\xb_t=\argmax_{\xb\in\cX_t}\xb^{\top}\tbtheta_t$. After that, it observes the reward $r_t$ for round $t$. Based on newly collected action feature $\xb_t$ and reward $r_t$, the quantities in \eqref{eq:V_b_update} can be updated and the learning process proceeds to the next round. 

Approximating the posterior distribution using a Gaussian distribution is also called Laplace Thompson sampling~\citep{chapelle2011empirical}. Note that sampling from $\cN(\hbtheta_t,v_t \Vb_t^{-1})$ is usually implemented as $\tbtheta_t=\hbtheta_t+\sqrt{v_t}\Vb_t^{-1/2}\bzeta$ in practice, where $\bzeta$ is sampled from $\cN(\zero,\Ib)$ and $v_t>0$ is a scaling parameter. The computation complexity of calculating $\Vb^{-1/2}$ is at least $O(d^3)$ with Cholesky decomposition, which is prohibitively high, especially for high-dimensional machine learning problems. On the other hand, the Gaussian distribution used in Thompson sampling might not be a good approximation of the posterior distribution for general bandit models with more complicated structures than linear contextual bandits. 

\section{Langevin Monte Carlo Thompson Sampling }\label{sec:alg}

\begin{algorithm}
\caption{Langevin Monte Carlo Thompson Sampling ($\algname$)\label{alg:ts_lmc_general}}
\begin{algorithmic}[1]
\STATE Input:  step sizes $\{\eta_t>0\}_{t\geq1}$, inverse temperature parameters  $\{\beta_t\}_{t\geq1}$, loss function $L_t(\btheta)$, and reward model function $f(\xb,\btheta)$. $\btheta_{1,0}=\zero$, $K_0=0$.
\FOR{$t=1,2,\ldots$}
\STATE $\btheta_{t,0}=\btheta_{t-1,K_{t-1}}$
\FOR{$k=1,\ldots,K_t$}
\STATE sample a standard normal vector $\bepsilon_{t,k}\sim\cN(\zero,\Ib)$
\STATE $\btheta_{t,k}=\btheta_{t,k-1}-\eta_t\nabla L_t(\btheta_{t,k-1})+\sqrt{2\eta_t\beta_t^{-1}}\bepsilon_{t,k}$
\ENDFOR
\STATE Play arm $\xb_t=\argmax_{\xb\in\cX_t}f(\xb,\btheta_{t,K_t})$ and observe reward $r_t$
\ENDFOR
\end{algorithmic}
\end{algorithm}

In this paper, we propose the Langevin Monte Carlo Thompson Sampling ($\algname$) algorithm, which is presented in Algorithm~\ref{alg:ts_lmc_general}. Unlike existing work that use Laplace Approximation~\citep{chapelle2011empirical,agrawal2013thompson,kveton2020randomized,zhang2021neural}, which is essentially a Gaussian distribution, to approximate the unknown posterior distribution, we use Langevin Monte Carlo~\citep{roberts1996exponential,bakry2014analysis} to learn the exact posterior distribution of parameter $\btheta^*$ up to a high precision. One closely related work to ours is~\citet{pmlr-v119-mazumdar20a} which proposed to combine LMC and SGLD with Thompson sampling algorithms in finite-armed bandit problems without any contextual features. Their analysis heavily depends on their assumption on prior distributions and is hard to extend to contextual bandits (with potentially infinite arms) even for the simplest linear contextual bandits, which we will discuss in further details in the next section. 

In specific, Algorithm~\ref{alg:ts_lmc_general} works as follows. At the $t$-th round of the algorithm, we run the following subroutine for $K_t$ steps. For each $k=1,\ldots,K_t$, we have
\begin{align}\label{eq:def_lmc}
   \btheta_{t,k}=\btheta_{t,k-1}-\eta\nabla L_t(\btheta_{t,k-1})+\sqrt{2\eta_t\beta_t^{-1}}\bepsilon_{t,k},
\end{align}
where $\bepsilon_{t,k}$ is an isotropic Gaussian random vector in $\RR^d$,  $\eta>0$ is a step size parameter, $\beta_t$ is the inverse temperature parameter, and $L_t(\btheta)$ is loss function between the observed rewards $\{r_i\}_{i=1,\ldots,t-1}$ and estimated rewards $\{f(\xb,\btheta)\}$ that is specified by the user. \eqref{eq:def_lmc} is called the Langevin Monte Carlo (LMC) method in the approximate sampling literature~\citep{roberts1996exponential,bakry2014analysis,dalalyan2017theoretical,dalalyan2017further}, which could be viewed as the Euler-Maruyama discretization of the following stochastic differential equation from physics called Langevin dynamics~\citep{langevin1908theory}:
\begin{align}\label{eq:langevin_dynamics}
\dd  \btheta(s) = -\nabla L_t\big(\btheta(s)\big) \dd s + \sqrt{2\beta_t^{-1}}\dd \bB(s),
\end{align}
where $s>0$ is a continuous time index, $\beta>0$ is  the inverse temperature parameter and $\bB(t)\in\RR^d$ is a Brownian motion. 
It has been showed that under certain conditions on the drift term $-\nabla L(\btheta(t))$, Langevin dynamics will converge to a unique stationary distribution $\pi(\dd \xb)\propto e^{-\beta L(\xb)}\dd \xb$. Therefore, one  can use \eqref{alg:ts_lmc_general} to approximately sample from an arbitrary distribution $\pi_t\propto\exp(-\beta_t L_t(\btheta))$.

Note that Algorithm~\ref{alg:ts_lmc_general} is applicable to various different contextual bandit settings if we choose the corresponding reward model $f(\xb,\btheta)$ and log-density function $L_t(\btheta)$. Another advantage of our algorithm is that only noisy gradient descent update is performed in order to do proper exploration in different bandit problems. Thus our $\algname$ is both flexible in design and is easy to implement in practice. 
In the next few subsections, we show that how we can instantiate our algorithm for linear contextual bandits, generalized linear bandits, and neural contextual bandits respectively.
\subsection{Implication to Linear Contextual Bandits}
In linear contextual bandits, it is assumed that the reward generating function is $f(\xb)=\xb^{\top}\btheta^*$ for all $\xb\in\cX$.  Define the following loss function
\begin{align}\label{def:loss_time_t}
    L_t(\btheta)&=\sum_{i=1}^{t-1}\big(\xb_{i}^{\top}\btheta-r_i\big)^2+\lambda\|\btheta\|^2,
\end{align}
where $\lambda>0$ is a regularization parameter. Then we have the gradient of $L_t(\btheta)$ as $\nabla L_t(\btheta)=2(\Vb_t\btheta-\bbb_t)$, where $\Vb_t$ and $\bbb_t$ are defined in the same way as in \eqref{eq:V_b_update}. 

Based on the linear bandit model  and the loss function chosen in \eqref{def:loss_time_t}, we can show that the inner loop of Algorithm~\ref{alg:ts_lmc_general} generates samples approximately from the Gaussian posterior distribution.
\begin{proposition}\label{prop:equivalence_LMC_TS}
If the epoch length $K_t$ in Algorithm~\ref{alg:ts_lmc_general} is sufficiently large, the distribution of $K_t$ converges to  Gaussian distribution $\cN(\Vb_t^{-1}\bbb_t,\beta_{t}^{-1}\Vb_t^{-1})$ up to an arbitrary accuracy.
\end{proposition}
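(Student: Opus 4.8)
The plan is to exploit the fact that, for the quadratic loss in \eqref{def:loss_time_t}, the gradient $\nabla L_t(\btheta)=2(\Vb_t\btheta-\bbb_t)$ is \emph{affine} in $\btheta$, so the inner loop \eqref{eq:def_lmc} is an explicit linear--Gaussian (vector autoregressive) recursion
\begin{align*}
\btheta_{t,k}=\Ab_t\,\btheta_{t,k-1}+2\eta_t\bbb_t+\sqrt{2\eta_t\beta_t^{-1}}\,\bepsilon_{t,k},\qquad \Ab_t:=\Ib-2\eta_t\Vb_t.
\end{align*}
Since $\btheta_{t,0}$ is measurable with respect to the data through round $t-1$ and the $\bepsilon_{t,k}$ are independent standard Gaussians, a one-line induction shows that, conditioned on that data, $\btheta_{t,k}$ is Gaussian for every $k$. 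Hence it suffices to track its mean $\bmu_{t,k}$ and covariance $\bSigma_{t,k}$ and show $(\bmu_{t,K_t},\bSigma_{t,K_t})\to(\Vb_t^{-1}\bbb_t,\beta_t^{-1}\Vb_t^{-1})$ as $K_t\to\infty$.

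First I would unroll the two deterministic recursions. Taking expectations gives $\bmu_{t,k}=\Ab_t\bmu_{t,k-1}+2\eta_t\bbb_t$, hence $\bmu_{t,k}=\Ab_t^{k}\bmu_{t,0}+2\eta_t\sum_{j=0}^{k-1}\Ab_t^{j}\bbb_t$; and by independence of the noise, $\bSigma_{t,k}=\Ab_t\bSigma_{t,k-1}\Ab_t^{\top}+2\eta_t\beta_t^{-1}\Ib$, hence $\bSigma_{t,k}=\Ab_t^{k}\bSigma_{t,0}(\Ab_t^{\top})^{k}+2\eta_t\beta_t^{-1}\sum_{j=0}^{k-1}\Ab_t^{j}(\Ab_t^{\top})^{j}$. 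Because $\Vb_t=\lambda\Ib+\sum_{s<t}\xb_s\xb_s^{\top}\succeq\lambda\Ib\succ\zero$ has a bounded largest eigenvalue, taking the step size small enough that $\eta_t<1/\lambda_{\max}(\Vb_t)$ makes $\Ab_t$ symmetric with spectrum in $(-1,1)$, so $\|\Ab_t\|_2<1$ and $\Ab_t^{k}\to\zero$ geometrically. Summing the resulting Neumann series yields $\bmu_{t,k}\to 2\eta_t(\Ib-\Ab_t)^{-1}\bbb_t=\Vb_t^{-1}\bbb_t$, and $\bSigma_{t,k}$ converges to the unique solution $\bSigma_t^{\infty}$ of the discrete Lyapunov equation $\bSigma_t^{\infty}=\Ab_t\bSigma_t^{\infty}\Ab_t^{\top}+2\eta_t\beta_t^{-1}\Ib$; diagonalizing in the eigenbasis of $\Vb_t$ shows $\bSigma_t^{\infty}$ shares that basis, with eigenvalues $\beta_t^{-1}\big(2\lambda_i(1-\eta_t\lambda_i)\big)^{-1}$, which agree with those of $\beta_t^{-1}\Vb_t^{-1}$ up to the $O(\eta_t)$ discretization bias (and up to the normalization convention for the gradient). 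Sending $\eta_t\to0$ — or, more carefully, fixing a small $\eta_t$ and absorbing the bias into the "arbitrary accuracy" — gives the claimed limiting Gaussian.

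Finally I would make "up to arbitrary accuracy" quantitative. Since both $\btheta_{t,K_t}$ and the target are Gaussian, I would invoke the closed-form expression for the KL divergence (or the $2$-Wasserstein/total-variation distance) between two Gaussians: the gap is controlled by $\|\bmu_{t,K_t}-\Vb_t^{-1}\bbb_t\|_2$ and $\|\bSigma_{t,K_t}-\beta_t^{-1}\Vb_t^{-1}\|_2$, which the previous step bounds by $C(1-2\eta_t\lambda)^{K_t}+O(\eta_t)$. Hence, for a fixed small $\eta_t$, choosing $K_t=\Theta\!\big(\tfrac{1}{\eta_t\lambda}\log\tfrac1\epsilon\big)$ drives the distance below any prescribed $\epsilon$ (plus the controllable $O(\eta_t)$ term). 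The only mildly delicate points are bookkeeping the discretization bias in the stationary covariance and selecting a metric on distributions under which the Gaussian-to-Gaussian bounds apply cleanly; crucially, no general Langevin-mixing machinery (log-Sobolev inequalities, Girsanov-type arguments) is needed here precisely because the iterates never leave the Gaussian family. That bookkeeping is the step I would treat most carefully, but it is routine rather than a genuine obstacle.
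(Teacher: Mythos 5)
Your proposal is correct, but it takes a genuinely different route from the paper. The paper's proof invokes general convergence theory for Langevin dynamics and its discretization (Roberts--Tweedie for the continuous-time stationary law, Dalalyan and Vempala--Wibisono for non-asymptotic LMC guarantees) and then merely identifies the target $\pi_t\propto\exp(-\beta_t L_t(\btheta))$ as a Gaussian by completing the square in the quadratic loss. You instead exploit the fact that, for this loss, the inner loop is an affine Gaussian (AR(1)) recursion, compute the conditional law of $\btheta_{t,K_t}$ in closed form, and pass to the limit of the mean and covariance recursions. Your route is self-contained, gives explicit geometric rates in $K_t$, and makes visible two points the paper glosses over: the $O(\eta_t)$ discretization bias of the stationary covariance (so ``arbitrary accuracy'' genuinely requires tuning $\eta_t$ as well as $K_t$, which you handle), and the factor-of-two normalization --- your Lyapunov fixed point is $\approx(2\beta_t)^{-1}\Vb_t^{-1}$, which matches the covariance of $\pi_t\propto e^{-\beta_t L_t}$, namely $(2\beta_t\Vb_t)^{-1}$, rather than the stated $\beta_t^{-1}\Vb_t^{-1}$; this constant looseness is present in the paper's own completing-the-square computation as well, so it is inherited rather than an error on your part. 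In fact your calculation essentially reproduces the paper's appendix Proposition~\ref{lemma:distribution_of_theta_t_k}, which the regret analysis needs anyway. What the paper's shorter route buys is generality: citing LMC convergence results applies verbatim when $L_t$ is not quadratic (generalized linear and neural bandits), where the iterates leave the Gaussian family and your explicit computation is unavailable. One small imprecision to fix: $\btheta_{t,0}=\btheta_{t-1,K_{t-1}}$ is not measurable with respect to the data alone, since it also depends on the Gaussian noise injected in earlier rounds; either condition on $\btheta_{t,0}$ itself and note that its contribution $\Ab_t^{K_t}\btheta_{t,0}$ is killed geometrically as $K_t\to\infty$, or observe that composing affine Gaussian maps across rounds keeps the conditional law Gaussian, as the paper does in its appendix.
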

Note that LMC does not converge exactly to the posterior distribution but instead converges to it with an arbitrarily pre-chosen prevision parameter for large enough $K_t$. In the next section, we will show this will be sufficient for the proposed bandit algorithm to achieve a sublinear regret.

\begin{proof}
According to~\citet{roberts1996exponential,bakry2014analysis}, we know the Markov chain generated by Langevin dynamics \eqref{eq:langevin_dynamics} converges to a stationary distribution $\pi_t$, which is defined as $\pi_t(\btheta)=Z^{-1}\exp(-\beta_t L_t(\btheta))$, 
where $Z=\int\exp(-\beta_t L_t(\btheta))\dd\btheta$ is the normalization term. In the inner loop of Algorithm~\ref{alg:ts_lmc_general}, we apply the LMC update defined in \eqref{eq:def_lmc} which is a discretization of \eqref{eq:langevin_dynamics} and thus obtain another Markov chain $\{\btheta_{t,k}\}_{k=0,1,\ldots}$. A recent line of non-asymptotic analyses show that the Markov chain $\{\btheta_{t,k}\}_{k=0,1,\ldots}$ generated by LMC converges to $\pi_t$ up to an arbitrary accuracy for (strongly)-convex $L_t$ \citep{dalalyan2017theoretical} and nonconvex $L_t$ \citep{vempala2019rapid} respectively, as long as the epoch length $K_t$ of Algorithm~\ref{alg:ts_lmc_general} is large enough. 

In what follows, we show that $\pi_t$ is the Gaussian distribution we need in linear contextual bandits. By the definition in \eqref{def:loss_time_t}, we have
\begin{align*}
   L_t(\btheta)&=\sum_{i=1}^{t-1}\big(\xb_{i}^{\top}\btheta-r_i\big)^2+\lambda\|\btheta\|^2\\
   &=\sum_{i=1}^{t-1}\big(\btheta^{\top}\xb_{i}\xb_{i}^{\top}\btheta-\la\btheta,2r_i\xb_{i}\ra+r_i^2\big)+\lambda\|\btheta\|^2\\
   &=\btheta^{\top}\bigg[\lambda\Ib+\sum_{i=1}^{t-1}\xb_{i}\xb_{i}^{\top}\bigg]\btheta-2\bigg\la\btheta,\sum_{i=1}^{t-1}r_i\xb_{a_i}\bigg\ra+\sum_{i=1}^{t-1}r_i^2\\
   &=\btheta^{\top}\Vb_t\btheta-2\btheta^{\top}\bbb_t+\sum_{i=1}^{t-1}r_i^2,
\end{align*}
where the last equality is due to \eqref{eq:V_b_update}. 
We denote $\hbtheta_t=\argmin_{\btheta} L_t(\btheta)$ which is the solution of ridge regression~\citep{hoerl1970ridge}. It is easy to verify that the solution of the ridge regression problem has the form $\hbtheta_t=\Vb_t^{-1}\bbb_t$. Then, we have
\begin{align*}
    \big(\btheta-\hbtheta_t\big)^{\top}\Vb_t\big(\btheta-\hbtheta_t\big)&=\btheta^{\top}\Vb_t\btheta-2\btheta^{\top}\Vb_t\hbtheta_t+\hbtheta_t^{\top}\Vb_t\hbtheta_t\\
    &=\btheta^{\top}\Vb_t\btheta-2\btheta^{\top}\bbb_t+\hbtheta_t^{\top}\Vb_t\hbtheta_t,
\end{align*}
which immediately implies that
\begin{align*}
    \pi_t(\btheta)&\propto\exp(-\beta_t L_t(\btheta))\\
    &\propto\exp\big(-\beta_t\big(\btheta-\hbtheta_t\big)^{\top}\Vb_t\big(\btheta-\hbtheta_t\big)\big).
\end{align*}
Therefore, we can conclude that the  distribution of $\btheta_{t,K_t}$ converges to  Gaussian distribution $\cN(\hbtheta_t,\beta_t^{-1}\Vb_t^{-1})$. 
\end{proof}

Although the update in Algorithm~\ref{alg:ts_lmc_general} is presented as a full gradient descent step plus an isotropic noise, one can also replace the full gradient $\nabla L_t(\btheta_{t,k-1})$ with a stochastic gradient or a variance reduced stochastic gradient of the loss function $L_t(\btheta_{t,k-1})$ calculated from a mini-batch of data, which leads to Stochastic Gradient Langevin Dynamics (SGLD) algorithm~\citep{welling2011bayesian} and Stochastic Variance Reduced Gradient Langevin Dynamics (SVRG-LD) \citep{dubey2016variance,xu2018global}. And similar results to Proposition~\ref{prop:equivalence_LMC_TS} can be obtained by following the proofs in \citet{dalalyan2019user,xu2018global,zou2021faster}.

Now we have shown that our Algorithm~\ref{alg:ts_lmc_general} is similar to the Thompson sampling algorithm derived from Laplace approximation of the posterior distribution in linear contextual bandits. 
Nevertheless, our Algorithm~\ref{alg:ts_lmc_general} is more preferable than Thompson sampling in practice since at each iteration of $\algname$ we only need the computation of first order information, which is much more computationally efficient than computing the Cholesky decomposition for sampling from a general multivariate normal distribution in Thompson sampling when the feature dimension is high as we discussed in Section~\ref{sec:preliminary}.

\subsection{Implication to Generalized Linear Bandits}\label{sec:alg_glb}

In generalized linear bandits (GLB), the true reward $r$ for arm $\xb\in\cX_t$ at round $t$ is assumed to be from a generalized linear model (GLM)~\citep{mccullagh2019generalized}. Specifically, conditional on feature vector $\xb$, $r$ follows an exponential family distribution with mean $\mu(\xb^{\top}\btheta^*)$, where  
$\btheta^*$ is an unknown weight parameter that is shared across all arms, and $\mu(\cdot)$ is called the link function. It is worth noting that generalized linear bandits cover a class of common bandit models used in practice. For instance, when $\mu(z)=z$ is the identity function, it reduces to linear contextual bandits; when $\mu(z)=1/(1+e^{-z})$ is the sigmoid function, it reduces to the logistic bandits~\citep{dong2019performance}.  

Based on samples $\{\xb_1,r_1,\ldots,\xb_{t-1},r_{t-1}\}$, 
the negative log-likelihood function is defined as $\tilde L_t(\btheta)=\sum_{i=1}^{t-1}(m(\xb_{a_i}^{\top}\btheta)-r_i\xb_{a_i}^{\top}\btheta)$, 
where $m(z)$ is twice differentiable and $m'(z)=\mu(z)$ is the link function defined in the previous paragraph.
Existing Thompson sampling based algorithms on generalized linear bandits~\citep{filippi2010parametric,kveton2020randomized,ding2021efficient} usually first solve the following MLE estimator 
\begin{align}\label{eq:MLE_glb}
    \hat\btheta_t=\argmax_{\btheta}  \sum_{i=1}^{t-1}(r_i\xb_{a_i}^{\top}\btheta-m(\xb_{a_i}^{\top}\btheta)),
\end{align}
and then construct a Laplace approximation of the underlying posterior distribution, which is given by Gaussian distribution $\cN(\hbtheta_t,a^2(\nabla^2 \tilde L_t(\hbtheta_t))^{-1})$, where $a>0$ is a scaling parameter. Similar to Thompson sampling for linear contextual bandits discussed in Section~\ref{sec:preliminary}, sampling from $\cN(\hbtheta_t,a^2(\nabla^2 \tilde L_t(\hbtheta_t))^{-1})$ is computationally inefficient in high dimensional applications. Moreover, due to the existence of the link function, the posterior itself is not necessary a Gaussian distribution, and thus the Laplace approximation might cause a fixed approximation error.

In contrast, our $\algname$ algorithm can be easily applied to GLB by choosing the reward model as $f(\xb,\btheta^*)=\mu(\xb^{\top}\btheta^*)$ and the loss function $L_t(\btheta)$ as the following regularized negative log-likelihood function
\begin{align}\label{def:loss_glb_nll}
    L_t(\btheta)=\sum_{i=1}^{t-1}(m(\xb_{a_i}^{\top}\btheta)-r_i\xb_{a_i}^{\top}\btheta)+\lambda\|\btheta\|_2^2,
\end{align}
where $\lambda>0$ is a tuning parameter. Under some standard conditions on the link function in GLB, it could be shown that the posterior density $\pi_t\propto\exp(-\beta_t L_t(\btheta))$ is strongly log-concave and log-smooth. Similar to the proof of Proposition~\ref{prop:equivalence_LMC_TS}, by recalling results in the study of Langevin Monte Carlo~\citep{dalalyan2017theoretical}, we can show that the distribution of iterates $\btheta_{t,K_t}$ in Algorithm~\ref{alg:ts_lmc_general} converges to the true posterior distribution $\pi_t$ if the epoch length $K_t$ of the inner loop is sufficiently large.
In addition, due to the simplicity of our algorithm, we only need to perform gradient descent based updates, which is computationally more efficient than Laplace approximation based Thompson sampling for generalized linear bandits~\citep{kveton2020randomized}.

\subsection{Implication to Neural Contextual Bandits}
Our algorithm is also applicable to more general contextual bandit problems, where the reward function $f(\xb,\btheta^*)$ is a neural network with $\xb$ as its input and $\btheta^*$ as the collection of all weight matrices. This type of bandit model is usually referred to as the neural contextual bandits  in the literature~\citep{riquelme2018deep,zhou2020neural,zhang2021neural,xu2022neural}. One possible choice of the  function $L_t(\btheta)$ used in Algorithm~\ref{alg:ts_lmc_general} is  the squared loss: 
\begin{align}
    L_t(\btheta)&=\sum_{i=1}^{t-1}\big(f(\xb_{i},\btheta)-r_i\big)^2+\lambda\|\btheta\|^2,
\end{align}
where $\lambda>0$. Due to the flexibility in the choice of loss functions in our method, one can always choose another loss function $L_t(\btheta)$ based on the belief in the prior and posterior distributions in specific applications to boost the empirical performance. This makes our method directly applicable to complicated deep learning applications. 

\section{Theoretical Analysis of $\algname$ for Linear Contextual Bandits}

In this section, we provide the regret analysis of our proposed $\algname$ algorithm when we apply it to a specific contextual bandit problem, viz., the linear contextual bandit. We first state the assumption on the details of the model.
\begin{assumption}\label{assump:linear_bandit_sugaussian}
There is an unknown parameter $\btheta^*\in\RR^d$ such that for any arm $\xb\in\cX\subseteq\RR^d$, the reward is $r(\xb)=\xb^{\top}\btheta^*+\xi$, where $\xi$ is assumed to be a $R$-subGaussian random variable for some constant $R>0$. 
\end{assumption}

The following theorem states the regret bound of $\algname$. 
\begin{theorem}\label{thm:regret_lcb}
Under Assumption~\ref{assump:linear_bandit_sugaussian}, we choose a linear reward model $f(\xb,\btheta)=\xb^{\top}\btheta$ in Algorithm~\ref{alg:ts_lmc_general}. Let $\delta\in(0,1)$. For any $j=1,2,\ldots$, let the step size $\eta_j=1/(4\lambda_{\max}(\Vb_t))$, the epoch length $K_j=\kappa_j\log(3R\sqrt{2dT\log(T^3/\delta)})$, and the inverse temperature $\beta_j^{-1}=4(R\sqrt{d\log(T^3/\delta)})$, where $\kappa_j=\lambda_{\max}(\Vb_j)/\lambda_{\min}(\Vb_j)$ is the condition number of $\Vb_j$. Then with probability $1-\delta$, it holds that
\begin{align*}
    R(T)\leq CRd\log(1/\delta)\sqrt{dT\log^3(1+T/(\lambda d))},
\end{align*}
where $C>0$ is an absolute constant that is independent of the problem.
\end{theorem}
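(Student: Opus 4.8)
The plan is to follow the standard Thompson-sampling regret analysis of \citet{agrawal2013thompson}, but carefully accounting for the extra error introduced by the fact that $\btheta_{t,K_t}$ is only an \emph{approximate} sample from the Gaussian posterior $\cN(\hbtheta_t,\beta_t^{-1}\Vb_t^{-1})$ rather than an exact one. First I would set up the usual concentration events: by the self-normalized martingale bound (Abbasi-Yadkori et al.), with probability $1-\delta/T^2$ (or a similar per-round allocation), the ridge estimator satisfies $\|\hbtheta_t-\btheta^*\|_{\Vb_t}\le \gamma_t := R\sqrt{d\log(T^3/\delta)}+\sqrt{\lambda}S$ for all $t$; call this event $E^{\hat\theta}$. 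Second, I would control the Gaussian sample: conditioned on the history, if $\bar\btheta_t\sim\cN(\hbtheta_t,\beta_t^{-1}\Vb_t^{-1})$ were exact, then $\|\bar\btheta_t-\hbtheta_t\|_{\Vb_t}$ is $\beta_t^{-1/2}$ times a $\chi_d$-type quantity, so it is $O(\beta_t^{-1/2}\sqrt{d\log T})$ with high probability, giving $|\xb^\top(\bar\btheta_t-\hbtheta_t)|\le \|\xb\|_{\Vb_t^{-1}}\cdot O(\beta_t^{-1/2}\sqrt{d\log T})$ for every fixed $\xb$; call this $E^{\mathrm{samp}}$. The choice $\beta_t^{-1}=4R\sqrt{d\log(T^3/\delta)}$ makes this sampling radius $\tilde O(\sqrt{d}\cdot d^{1/4})$, comparable to $\gamma_t$ up to logs, which is exactly the ``$v_t\asymp \gamma_t$'' scaling needed for LinTS.

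Third, and this is where the $K_t$ choice enters, I would invoke Proposition~\ref{prop:equivalence_LMC_TS} quantitatively: since $L_t$ is $2\lambda_{\min}(\Vb_t)$-strongly convex and $2\lambda_{\max}(\Vb_t)$-smooth, the non-asymptotic LMC convergence results of \citet{dalalyan2017theoretical} (in $W_2$ or TV) with step size $\eta_t=1/(4\lambda_{\max}(\Vb_t))$ contract the error geometrically in $K_t$ with rate governed by the condition number $\kappa_t$. Thus $K_t=\kappa_t\log(3R\sqrt{2dT\log(T^3/\delta)})$ suffices to drive the Wasserstein (or coupling) distance between the law of $\btheta_{t,K_t}$ and the exact Gaussian below $1/\mathrm{poly}(T)$; by a coupling argument there is an exact sample $\bar\btheta_t$ with $\|\btheta_{t,K_t}-\bar\btheta_t\|_2$ small with high probability, hence $|\xb^\top(\btheta_{t,K_t}-\bar\btheta_t)|\le \|\xb\|_2\cdot(\text{small})\le 1/T$ after using $\|\xb\|_2\le 1$. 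So on the intersection of all these events, the played parameter behaves like an exact LinTS sample up to an additive $O(1/T)$ distortion, which contributes only $O(1)$ to the total regret.

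Fourth, with the three events in hand I would run the core LinTS argument. Define the ``optimistic'' event at round $t$: the sampled parameter $\btheta_{t,K_t}$ is such that the estimated best-arm value $\xb_t^\top\btheta_{t,K_t}$ is at least the true optimal value $\xb_t^{*\top}\btheta^*$ minus a slack; anti-concentration of the Gaussian (the lower bound on its variance along the relevant direction) guarantees this happens with constant probability conditioned on the history. Then decompose the per-round regret $r(\xb_t^*)-r(\xb_t)$ into (i) the gap between $\xb_t^*$'s true value and the sampled value of the chosen arm, bounded via optimism, and (ii) $|\xb_t^\top(\btheta_{t,K_t}-\btheta^*)|\le \|\xb_t\|_{\Vb_t^{-1}}(\gamma_t + (\text{sampling radius}) ) + 1/T$. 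Summing (ii) over $t$ and applying the elliptical-potential lemma $\sum_{t=1}^T \|\xb_t\|_{\Vb_t^{-1}}^2 \le 2d\log(1+T/(\lambda d))$ with Cauchy–Schwarz yields $\sqrt{T}\cdot\sqrt{d\log(1+T/(\lambda d))}$ times the max radius $\tilde O(\sqrt{d\log(1/\delta)})$, and the standard trick of relating the regret of sub-optimal rounds to the regret on optimistic rounds (paying an extra $O(1/p)$ factor where $p$ is the constant anti-concentration probability) produces the final $\tilde O(d\sqrt{dT}\log(1/\delta))$ bound. I expect the main obstacle to be step three: making the discretization/coupling error genuinely uniform over all $T$ rounds and over the (possibly infinite, history-dependent) arm sets $\cX_t$, and checking that the $W_2$-to-TV or $W_2$-to-pointwise-functional transfer is clean enough that the $K_t$ stated in the theorem—with its precise logarithmic factor—actually delivers the $\le 1/T$ slack; everything else is a careful but routine adaptation of the LinTS analysis with $v_t$ replaced by $\beta_t^{-1}$.
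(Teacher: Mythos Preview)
Your high-level structure (concentration of $\hbtheta_t$, concentration/anti-concentration of the sampled parameter, then the Agrawal--Goyal saturated/unsaturated decomposition with the $1/p$ inflation and the elliptical-potential sum) matches the paper. But your treatment of the crucial step---controlling the law of $\btheta_{t,K_t}$---is genuinely different from what the paper does, and your version has a gap you should be aware of.

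\textbf{What the paper actually does.} The paper never invokes Dalalyan-type $W_2$/TV convergence or a coupling to an exact Gaussian sample. Instead it observes that, because $L_t$ is \emph{quadratic} and the injected noise is Gaussian, the iterate $\btheta_{t,K_t}$ is \emph{exactly} Gaussian conditional on the arm/reward history. It then unrolls the LMC recursion across \emph{all} rounds (including the warm start $\btheta_{t,0}=\btheta_{t-1,K_{t-1}}$) to get closed-form expressions for $\bmu_{t,K_t}$ and $\bSigma_{t,K_t}$ in terms of the contraction matrices $\Ab_i=\Ib-2\eta_i\Vb_i$. The proof then bounds $|\xb^\top(\bmu_{t,K_t}-\hbtheta_t)|$ directly by telescoping the $\hbtheta_i$'s and using Sherman--Morrison, and bounds $\xb^\top\bSigma_{t,K_t}\xb$ both above (for concentration) and below (for anti-concentration/optimism) by similar telescoping. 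The geometric factors $(1-2\eta_j\lambda_{\min}(\Vb_j))^{K_j}$ arising from the unrolling are what dictate the precise value of $K_j$ in the theorem statement.

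\textbf{Where your route needs more work.} Your black-box convergence plan treats each round in isolation with a deterministic initializer, but the algorithm warm-starts from $\btheta_{t-1,K_{t-1}}$, which is random and correlated with all previous LMC noise. To apply a Dalalyan-style bound at round $t$ you must first control $W_2(\mathrm{Law}(\btheta_{t,0}),\pi_t)$, which requires an inductive argument the proposal does not mention. Moreover, the nonasymptotic LMC bounds carry an $O(\sqrt{\eta})$-type discretization bias that does not vanish with $K_t$; with $\eta_t=1/(4\lambda_{\max}(\Vb_t))$ this bias is not $1/T$, so your claim that the coupling slack contributes only $O(1)$ total regret is optimistic (it would still be lower order, but not for the reason you give). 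Finally, transferring the \emph{anti-concentration} lower bound through a $W_2$ coupling is delicate; the paper sidesteps this entirely by lower-bounding the exact variance $\xb_t^{*\top}\bSigma_{t,K_t}\xb_t^{*}$ directly. Your approach is more modular and would generalize beyond quadratic $L_t$, but for the linear case the paper's explicit-Gaussian route is both simpler and what makes the stated $K_j$ exactly the right choice.
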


Note that the regret upper bound in Theorem~\ref{thm:regret_lcb} is in the order of $\widetilde O(d\sqrt{dT})$ which matches the best result for Thompson sampling based algorithm in linear contextual bandits with infinite arms~\citep{agrawal2013thompson}. Moreover, if the arm set $|\cX_t|\leq N$ is finite in each round for some integer $N$, following a similar proof as in~\citet{agrawal2013thompson}, this regret bound can be improved to $\tO(d\sqrt{T})$, where a logarithmic dependence on the number of arms $N$ replaces the additional term $O(\sqrt{d})$ and is omitted in the $\tO(\cdot)$ notation. This shows that $\algname$ is theoretically comparable to Laplace approximation based Thompson sampling in linear contextual bandits. Nevertheless, due to the fact that we add multiple noises in Algorithm~\ref{alg:ts_lmc_general} at different time steps, the coupling of these noises makes the regret analysis of existing Laplace approximation Thompson sampling not directly applicable to our case. In specific, it has been shown by~\citet{phan2019thompson} that the approximation error caused by the posterior sampling step for Thompson sampling can yield a linear regret in general and thus we have to develop nontrivial proof techniques to achieve a sublinear regret for LMC-TS.

\begin{remark}
The only existing work that study the sublinear regret of TS with approximate sampling is given by~\citet{pmlr-v119-mazumdar20a}.
Compared with their work which combines Langevin algorithms with Thompson sampling for multi-armed bandits, our analysis applies to a more general class of bandit problems which covers MAB as a special case. Moreover, we do not assume that the reward distribution of a single data point is strongly log-concave, which is hard to be justified in contextual bandits. Specifically, they assume that the reward $r$ conditional on context $\xb_i$ has a distribution $p(r|\xb,\btheta^*)$ which is strongly log-concave w.r.t. both $\xb$ and $\btheta^*$. However, this assumption is not satisfied even for Gaussian rewards. For instance, if $r$ is a Gaussian reward with mean $\xb^{\top}\btheta^*$ and unit variance, then $-\log p(r|\xb,\btheta^*)$ is not strongly convex with respect to $\xb\in\RR^d$ when the dimension $d$ is larger than $1$.  In contrast, we only assume that the reward is subGaussian, which is among the most common assumptions in linear contextual bandits~\citep{abbasi2011improved,agrawal2013thompson}. 
\end{remark}

\begin{remark}
We note that the epoch length of the inner loop of Algorithm~\ref{alg:ts_lmc_general} depends on the condition number $\kappa_j=\lambda_{\max}(\Vb_j)/\lambda_{\min}(\Vb_j)$ which could be $O(j)$ in the worst case. This is due to that the optimization loss function $L_t(\btheta)$ is the sum of $t$ squared loss functions and we do not assume each loss function is strongly convex. 
Moreover, under certain assumption on the diversity of the arm set as is studied in
\citet{hamidi2020worst,wu2020stochastic}, the condition number will become $O(1)$. On the other hand, if we apply Newton's method to minimize the loss function $L_t(\btheta)$ in each round, we could get rid of the condition number $\kappa_j$ in the dependence of the epoch length of the inner loop $K_j$. Lastly, we observe from our empirical study that $\algname$ often requires a small number of iterations to achieve a good performance.
\end{remark}

\section{Empirical Evaluation of $\algname$}\label{sec:experiment}

In this section, we conduct experiments on both synthetic datasets and real-world datasets to show that the proposed algorithm achieves the best performance in terms of regret minimization and also is scalable to large-scale and high-dimensional problems. 
All experiments are conducted on Amazon EC2 P3 instances with NVIDIA V100 GPUs and Broadwell E5-2686 v4 processors. 
Our implementation can be found at \url{https://github.com/devzhk/LMCTS}.

\textbf{Benchmarks and baseline algorithms} As we discussed in Section~\ref{sec:alg}, our $\algname$ algorithm is applicable to many different contextual bandit problems. Hence we compare it with baseline algorithms in different bandit settings including linear contextual bandits, logistic bandits, quadratic bandits, and neural bandits (also known as deep bandits) respectively. 
For linear bandit problems, we compare our algorithm with baseline linear bandit algorithms such as LinUCB~\citep{chu2011contextual}, LinTS~\citep{agrawal2013thompson}, and the $\epsilon$-greedy algorithm. For logistic bandit problems, we compare our algorithm with existing state-of-the-art algorithms for generalized linear bandits including
UCB-GLM~\citep{li2017provably}, GLM-TSL~\citep{kveton2020randomized}, SGD-TS~\citep{ding2021efficient}, and $\epsilon$-greedy. For quadratic bandits and neural bandits, we compare our algorithm additionally with NeuralUCB~\citep{zhou2020neural}, NeuralTS~\citep{zhang2021neural}, Neural-LinUCB~\citep{xu2022neural}, and Neural $\epsilon$-greedy~\citep{riquelme2018deep} which applies $\epsilon$-greedy exploration to a neural network reward model trained by SGD.

\subsection{Simulation Study on Linear, Logistic, and Quadratic Contextual Bandits}\label{sec:simulation}
We first compare our algorithm with baseline methods on simulated bandit problems presented in Section~\ref{sec:preliminary} including linear bandits, logistic bandits, and quadratic bandits, where the true reward model $f(\xb,\btheta^*)$ is known but the weight parameter $\btheta^*\in\RR^d$ is unknown. Throughout this subsection, the context feature dimension is  $d=20$, the size of the arm set at round $t$ is $|\cX_t|=50$, and the time horizon of all learning algorithms is $T=10000$. 

\textbf{Linear contextual bandits:} We first generate a linear contextual dataset following the problem setup in Section~\ref{sec:preliminary}. To simulate the bandit environment, we first generate $\btheta^*\in\mathbb{R}^d$ with each coordinate randomly sampled from $\cN(0,1)$ and then scale $\btheta^*$ to unit norm. We consider the following two settings: (1) we have a fixed arm set $\cX\subseteq\RR^d$ that remains the same during the whole learning process; (2) we receive a new arm set $\cX_t\subseteq\RR^d$ at each round $t$. For the feature vectors, we generate $\xb\in\mathbb{R}^{d}$ with each coordinate randomly sampled from $\cN(0,1)$ and then scale each vector to unit norm. The true reward for an arm $\xb$ is then generated by $r(\xb)=\btheta^{*\top} \xb+\xi$, where the noise $\xi$ is sampled from $\cN(0,\sigma^2)$ with $\sigma^2=0.5$.

\textbf{Logistic bandits:} In the logistic bandit experiment, we follow the setting in~\citet{kveton2020randomized} and consider the fixed arm setting where $\cX \subseteq \RR^d$ with the context dimension $d=20$ and the size of arm set $|\cX|=50$. Each contextual vector is randomly generated from $\mathcal{N}(\zero, \Ib)$ and scaled to unit norm. The reward for arm $\xb\in\cX$ is generated by a Bernoulli distribution, viz., $r(\xb) \sim \mathrm{Ber}\left(\mu(\btheta^{*\top}\xb)\right)$, where $\btheta^*\in \mathbb{R}^d$ is sampled from $\mathcal{N}(\zero, \Ib)$ and scaled to unit norm, and $\mu(v)=1/(1+\exp(-v))$ is the logistic function.

\textbf{Quadratic bandits:} Following the setting in~\citet{zhou2020neural}, we generate a quadratic contextual bandit problem. We generate a changing  arm set $\cX_t\subseteq\RR^d$ at each round. The context dimension $d=20$. The size of arm set $|\cX_t|=50$ at each round. Each contextual vector $\xb\in\cX_t$ is randomly generated from $\mathcal{N}(\zero, \Ib)$ and scaled to unit norm. At round $t$, the reward function for the chosen arm $\xb_t\in\cX_t$ is given by 
$r_t = 10\left(\btheta^{*\top} \xb_t\right)^2 + \xi$, where $\btheta^*\in \mathbb{R}^d$ is sampled from $\mathcal{N}(\zero, \Ib)$ and scaled to unit norm, and the noise $\xi \sim \mathcal{N}(0, 1)$. 

\textbf{Implementation details:}  
For linear bandits, a linear reward model $f(\xb,\btheta)=\xb^{\top}\btheta$ is used in all algorithms. For LinUCB, we set the UCB bonus parameter as $\nu_t=c\sqrt{d\log t}$ following~\citet{li2010contextual} and find the best parameter $c$ by performing a grid search. For LinTS, we set the variance parameter as $\nu=c\sqrt{d\log T}$  following the theory in~\citet{agrawal2013thompson}, and pick the best hyperparameter constant $c$ from a grid search. For $\epsilon$-greedy, the exploration rate is $\frac{c}{\sqrt{t}}$ at time $t$, where $c$ is selected by a gird search. For $\algname$, we set the step size $\eta_t = \frac{\eta_0}{t}$ as suggested in our theory and do a grid search for the constant $\eta_0$ and the temperature parameter $\beta^{-1}$. We fix the epoch length for the inner loop of our algorithm as $K_t=100$ for all $t$. 

For logistic bandits, a generalized linear reward model $f(\xb,\btheta)=\mu(\xb^{\top}\btheta)$ is used, where $\mu(v)=1/(1+\exp(-v))$. The MLE estimator in \eqref{eq:MLE_glb} is solved via SGD for UCB-GLM, GLM-TSL, and $\epsilon$-greedy. For UCB-GLM, and GLM-TSL, we follow the same parameter setting proposed in~\citet{kveton2020randomized}. For $\algname$, we use the loss function defined in \eqref{def:loss_glb_nll}, and the step size and temperature parameters are tuned in the same way as in linear bandits.

For quadratic bandits, a 4-layer fully-connected neural network with width 20 is used in NeuralUCB, NeuralTS,  Neural-LinUCB, Neural $\epsilon$-greedy, and $\algname$. We try both ReLU and LeakyReLU and pick the best activation function for each algorithm. Neural networks are all updated by $100$ gradient descent steps every round. 
Following the original implementation of NeuralTS and NeuralUCB, the matrix inverse is approximated by the inverse of diagonal. We perform grid search for the regularization parameter $\lambda$ and variance parameter $\nu$ in their work. 

To make a fair comparison, we first perform grid searches for the parameters of all algorithms. We then fix the best hyperparameters, and shuffle the order of the dataset and repeat experiments for $10$ times with different random seeds. 

\begin{figure*}[!htbp]
    \centering
    \subfigure[Linear bandit (fixed arm set)]{\includegraphics[scale=0.3]{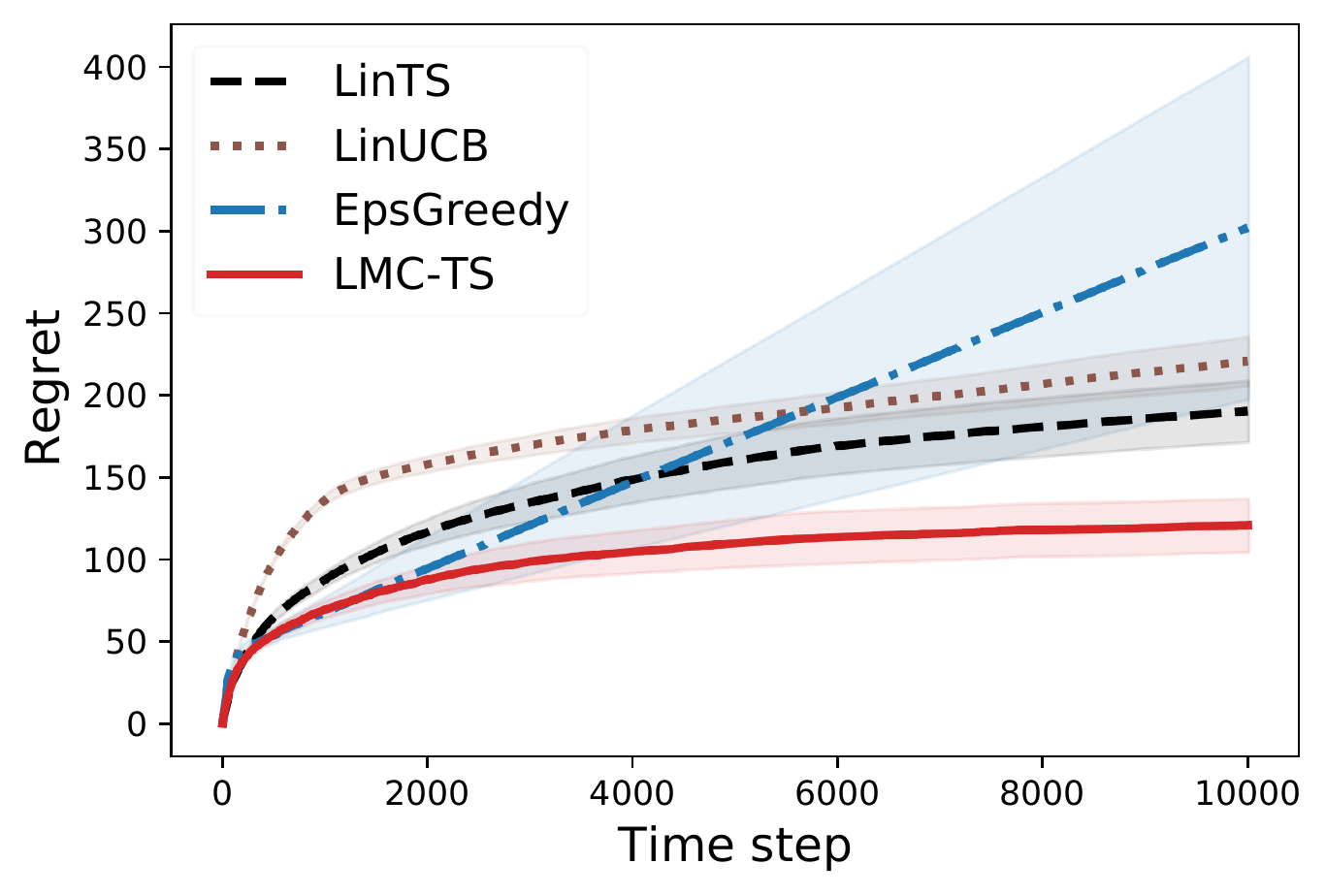}\label{fig:simu_linear_fix}}
    \subfigure[Linear bandit]{\includegraphics[scale=0.3]{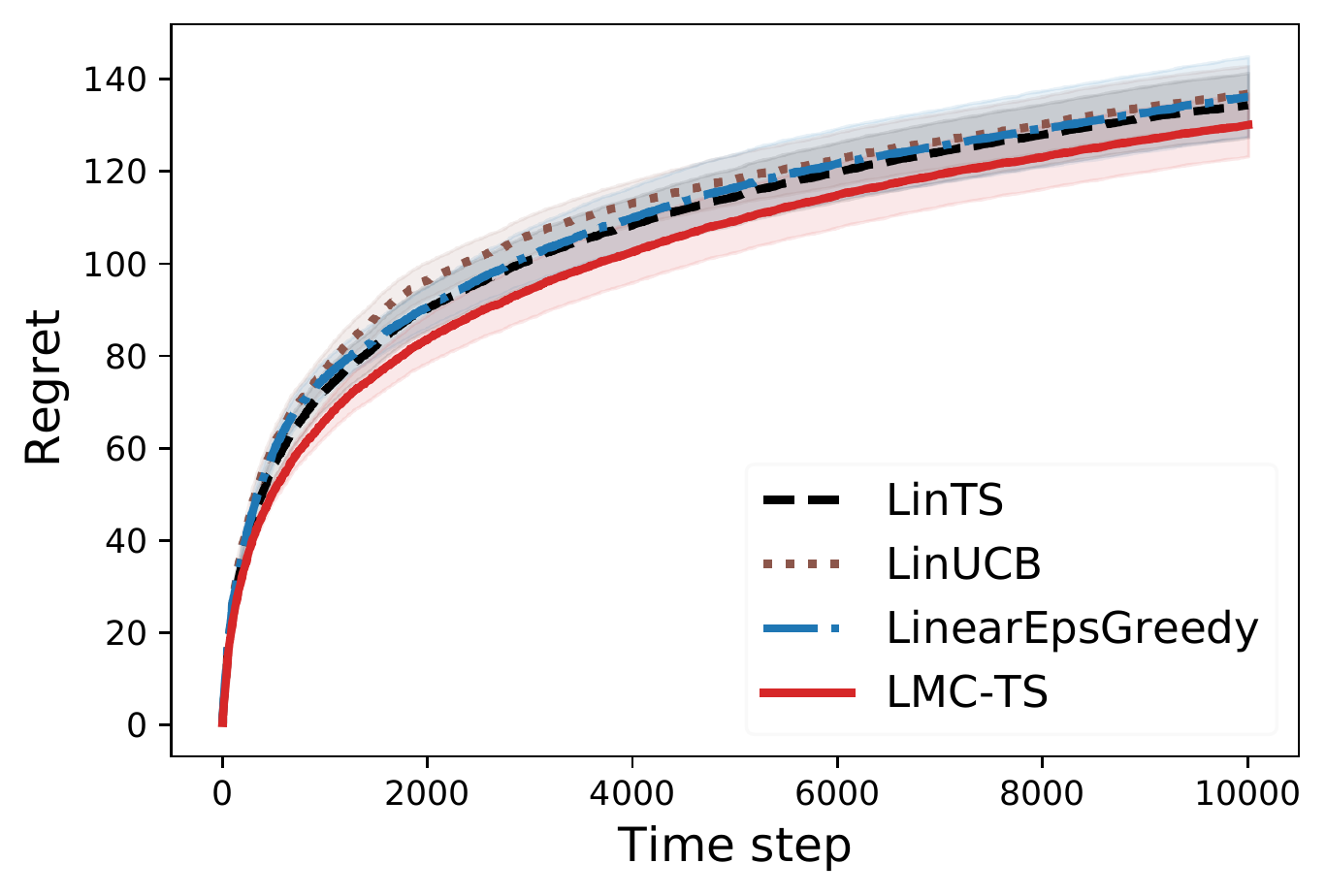}\label{fig:simu_linear_changing}}   
    \subfigure[Logistic bandit]{\includegraphics[scale=0.3]{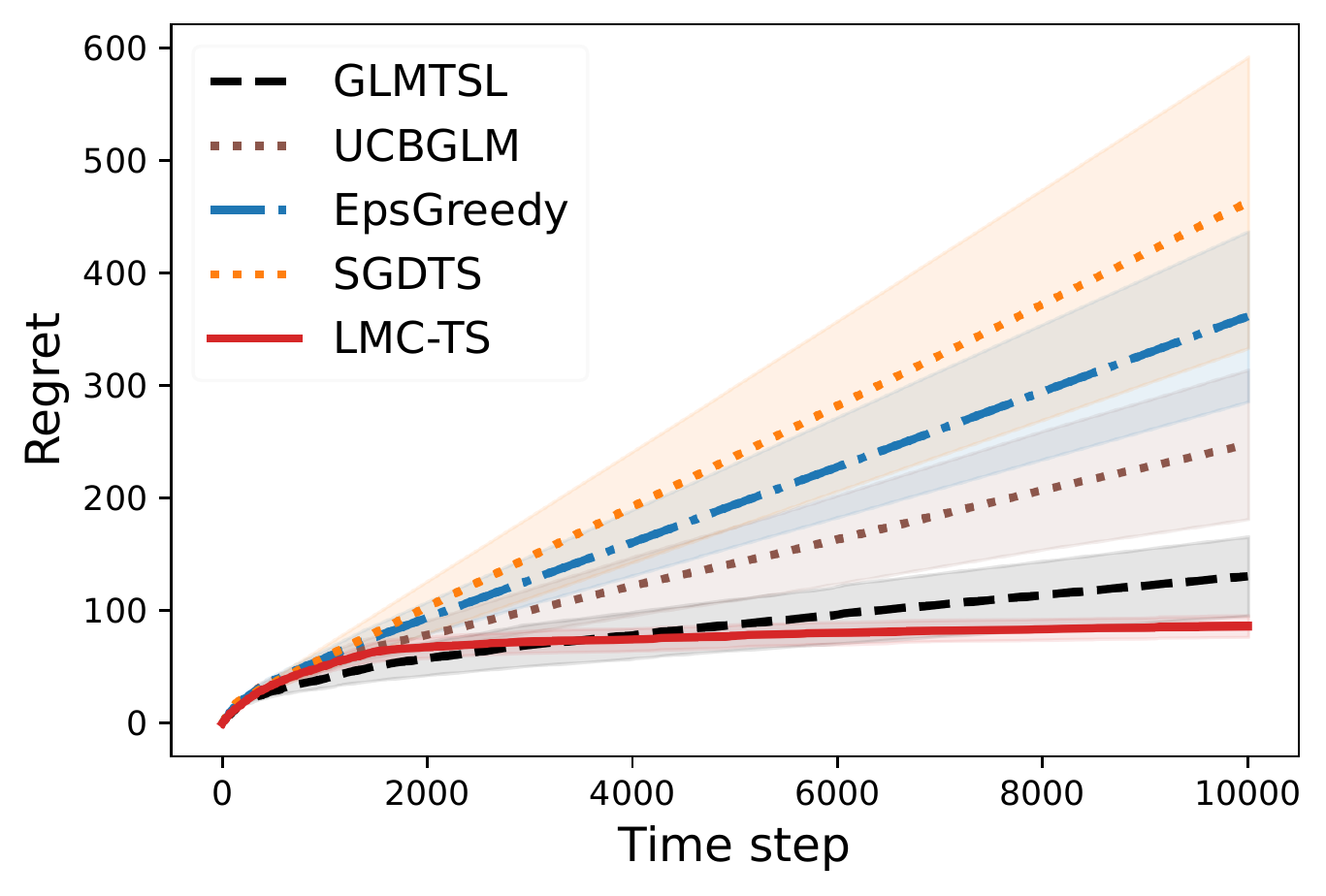}\label{fig:simu_logistic}}
    \subfigure[Quadratic bandit]{\includegraphics[scale=0.3]{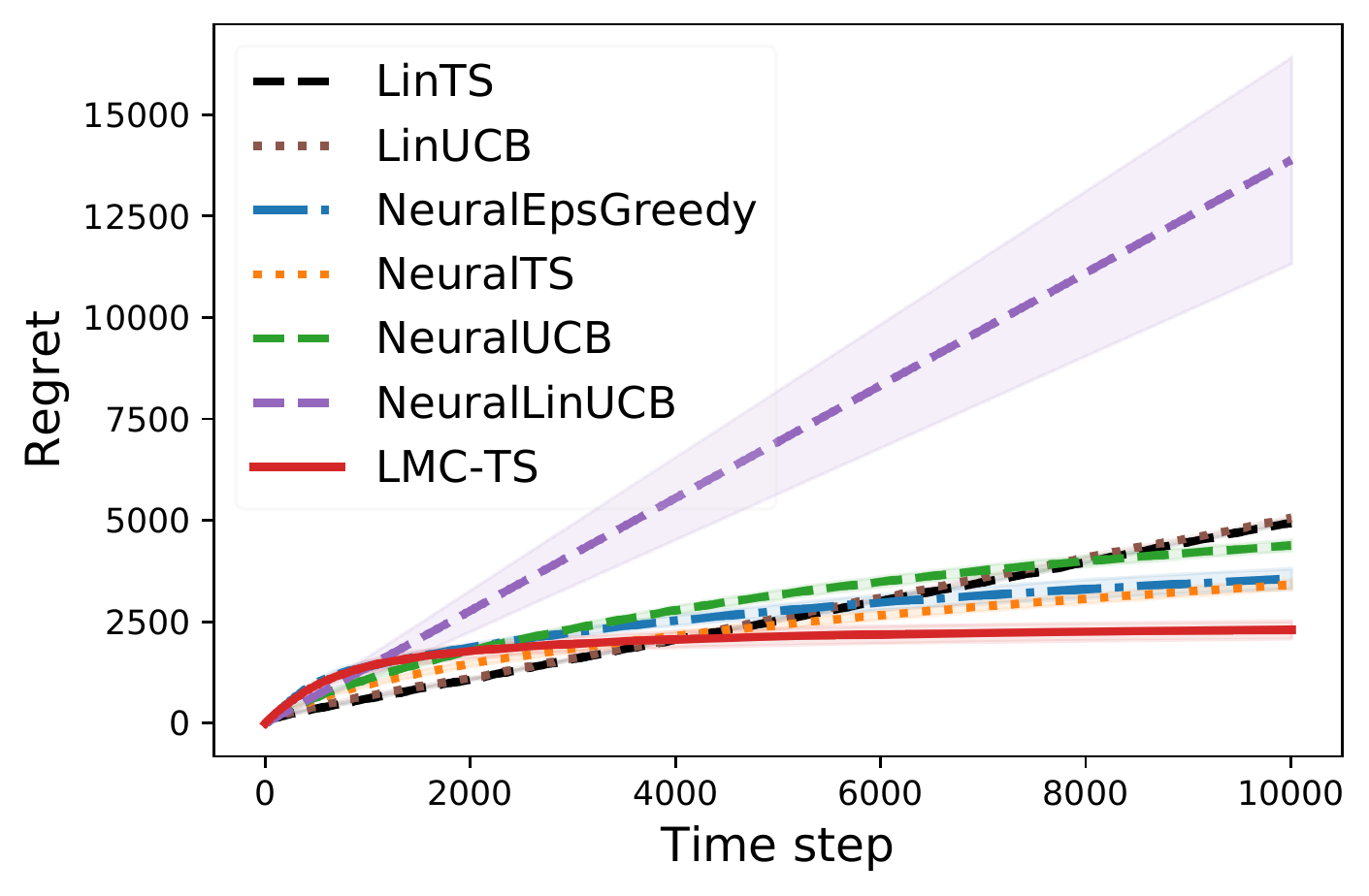}\label{fig:simu_quadratic}}    
    \caption{Regret comparison on simulated bandit problems. The mean and standard error are reported over 10 runs.
    }
    \label{fig:simulation}
\end{figure*}

\textbf{Results:} 
We report the mean and the standard error of the accumulative regret of different algorithms over $10$ runs on all simulated bandit problems in Figure~\ref{fig:simulation}. The results on linear contextual bandits are shown in Figure~\ref{fig:simu_linear_fix} with a fixed arm set and in Figure~\ref{fig:simu_linear_changing} with time-changing arm sets. Our method $\algname$ achieves the best performance in both settings, and the performance gain in the changing arm setting is slightly lower since it is a more challenging problem. The results for logistic bandits are shown in Figure~\ref{fig:simu_logistic}, and $\algname$ again outperforms baseline methods. The results for quadratic bandits are shown in Figure~\ref{fig:simu_quadratic}. In this setting, LinUCB and LinTS works poorly due to their dependence on the linear bandit structure. All the other algorithms use a neural network to model the reward, and our method achieves significant lower regret. Neural-LinUCB performs much worse than other baseline methods, possibly due to its insufficient exploration in this setting.

\begin{table*}[!ht]
    \centering
    \caption{Specifications of real-world datasets used in this paper.     \label{table:dataset}}
    \begin{sc}
    \begin{tabular}{lcccccc}
    \toprule
    &\emph{ Shuttle} & \emph{MagicTelescope} &\emph{Mushroom} &\emph{Covertype}&\emph{CIFAR10}\\
    \midrule
    Number of  attributes       & 9     & 10    & 22    & 54    & $3\times32\times32$\\
    Number of arms              & 7     & 2     & 2     & 7     & 10\\
    Dimension of context feature & 63 & 20& 48& 378  & 30720\\
    Number of instances         & 58,000& 19,020& 8124  &  581,012&  10,000\\ 
    \bottomrule
    \end{tabular}
    \end{sc}
\end{table*}
\begin{figure*}[!htbp]
    \centering
    \subfigure[Shuttle]{\includegraphics[scale=0.3]{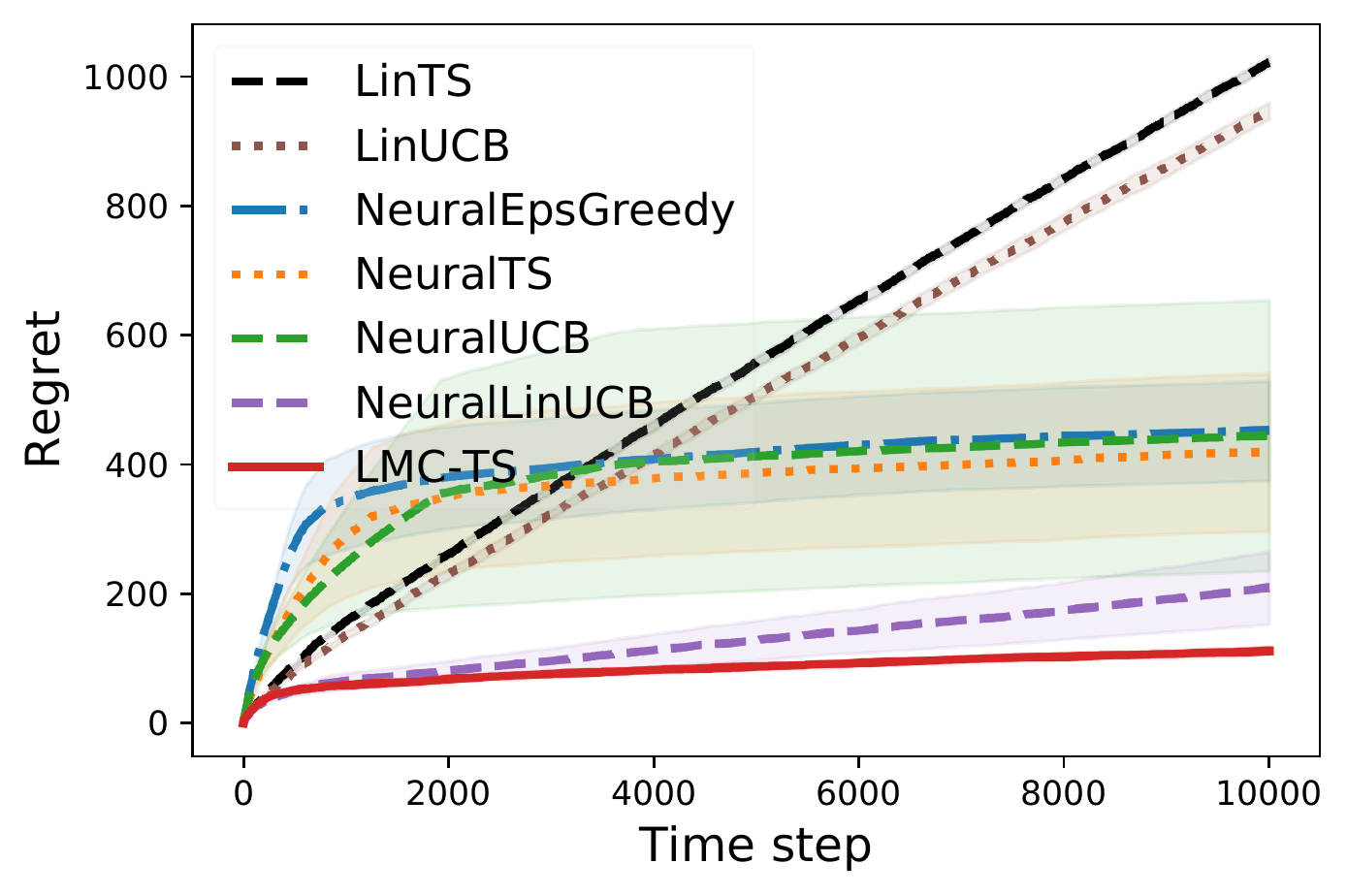}}
    \subfigure[MagicTelescope]{\includegraphics[scale=0.3]{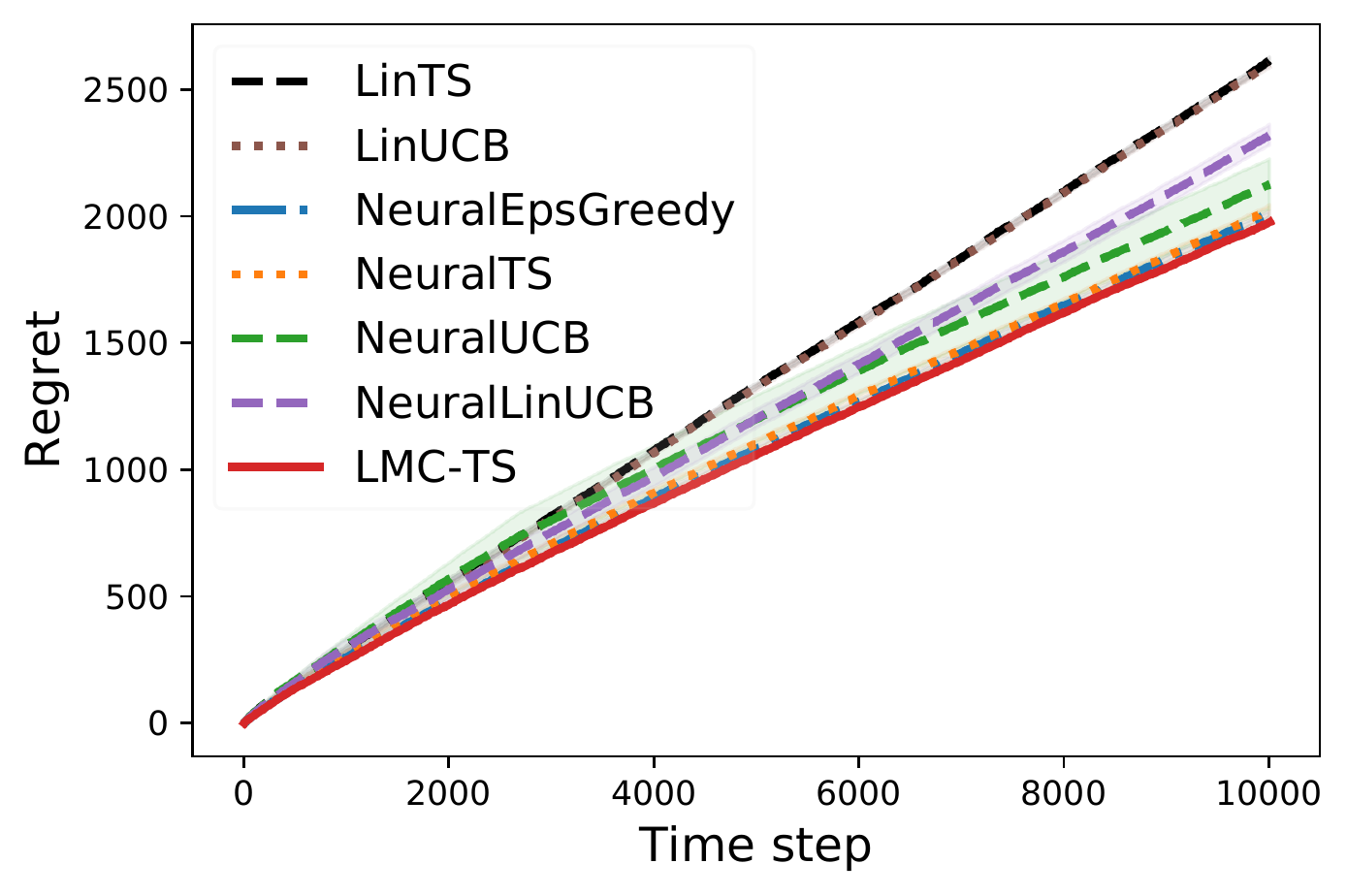}}    
    \subfigure[Mushroom]{\includegraphics[scale=0.3]{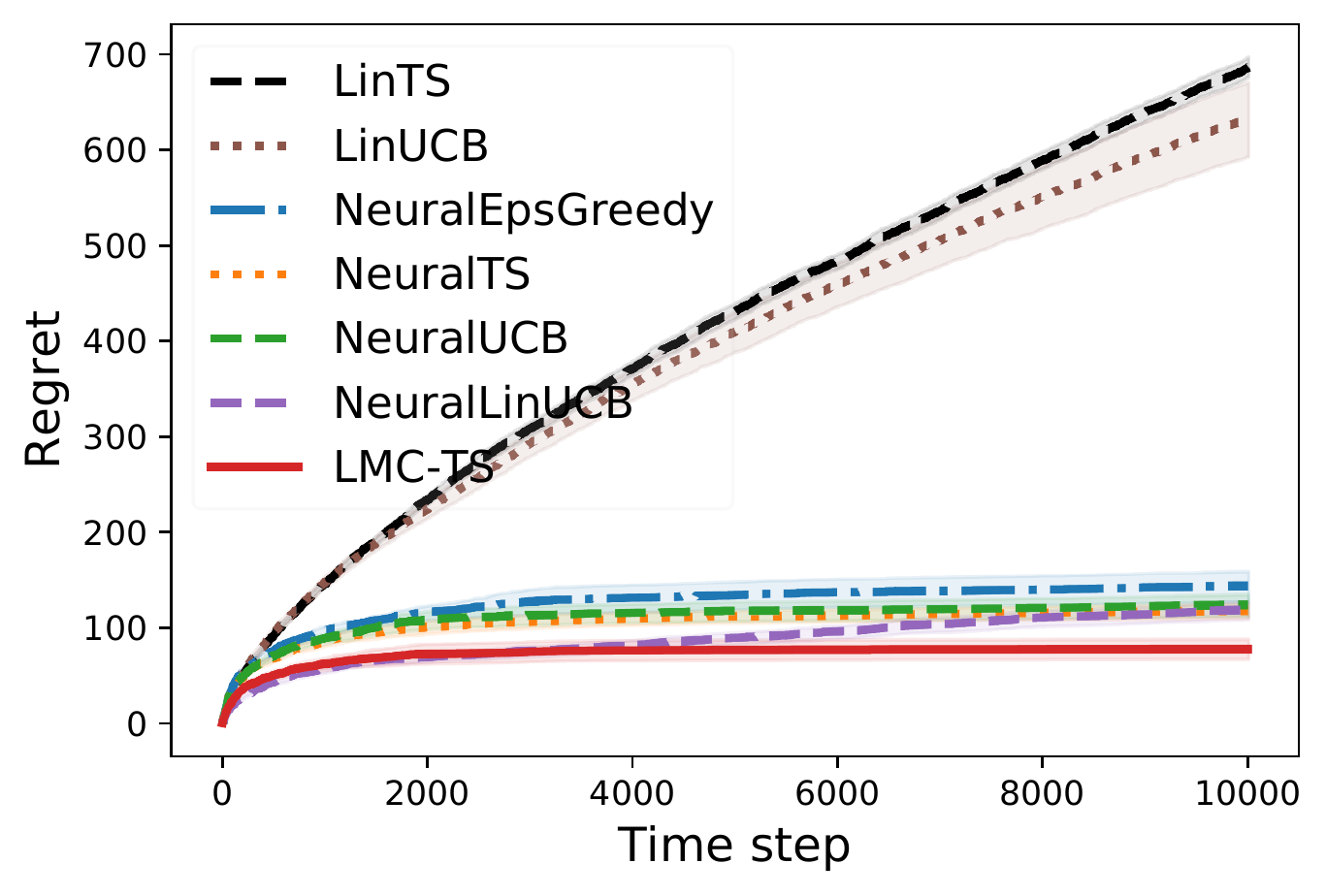}}       
    \subfigure[Covertype]{\includegraphics[scale=0.3]{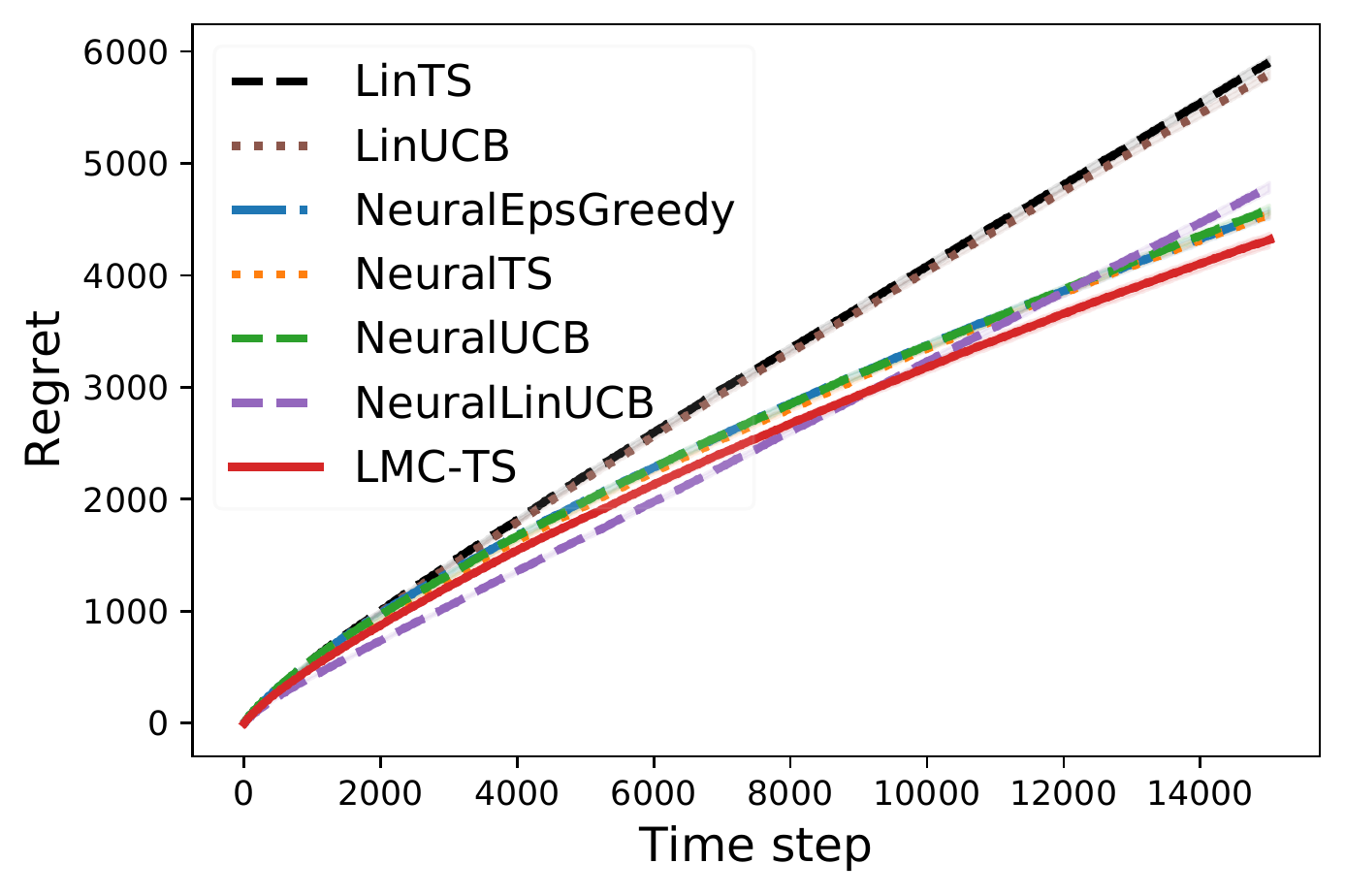}}
    \caption{Regret comparison on UCI datasets. The mean and standard error are reported over 10 runs. \label{fig:regret_uci}}
\end{figure*}

\begin{figure}[htbp]
    \centering
    \subfigure[Cumulative regret]{\includegraphics[scale=0.28]{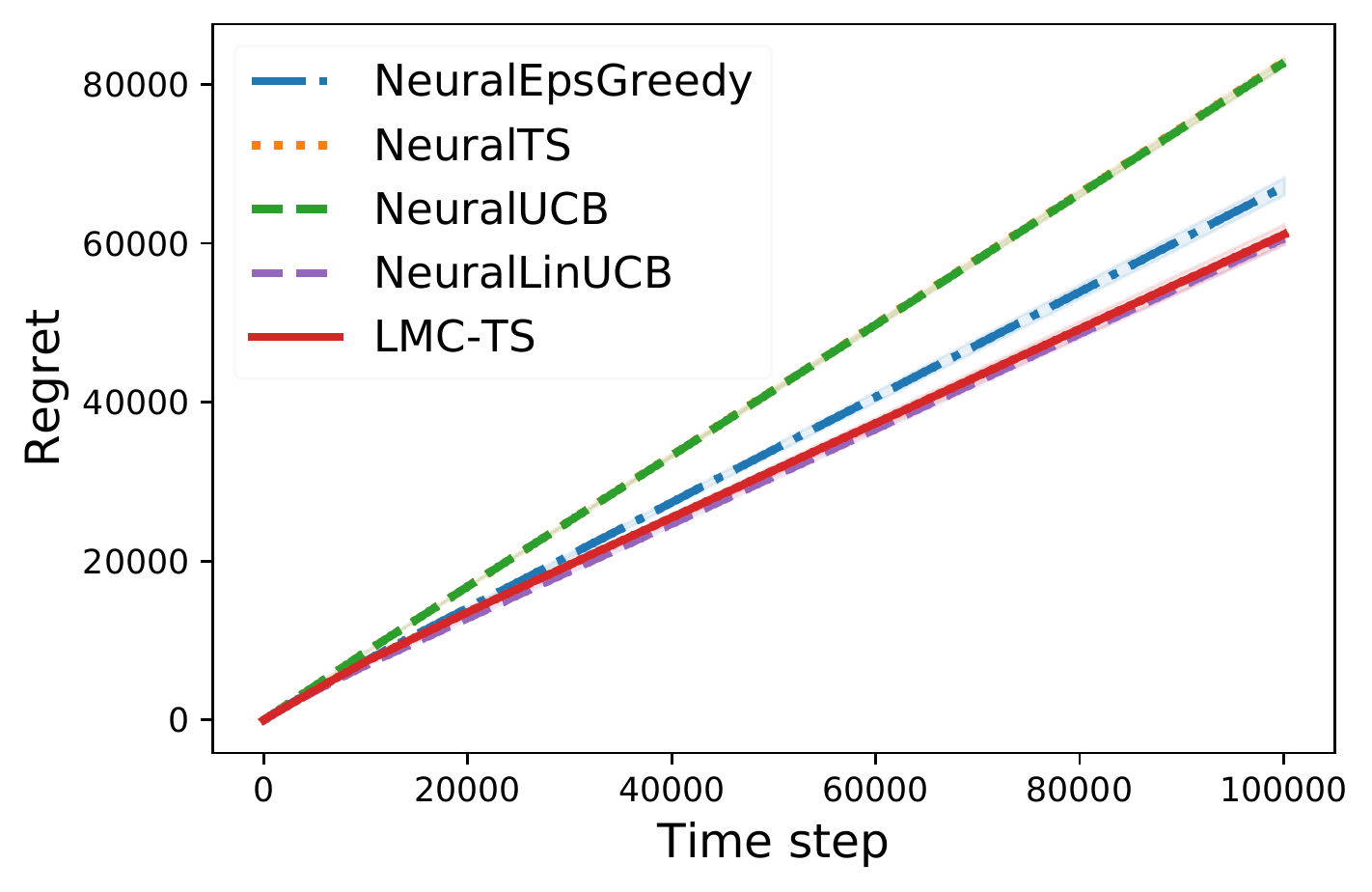}\label{fig:regret_cifar_cumulative}}    
    \subfigure[Averaged regret]{\includegraphics[scale=0.28]{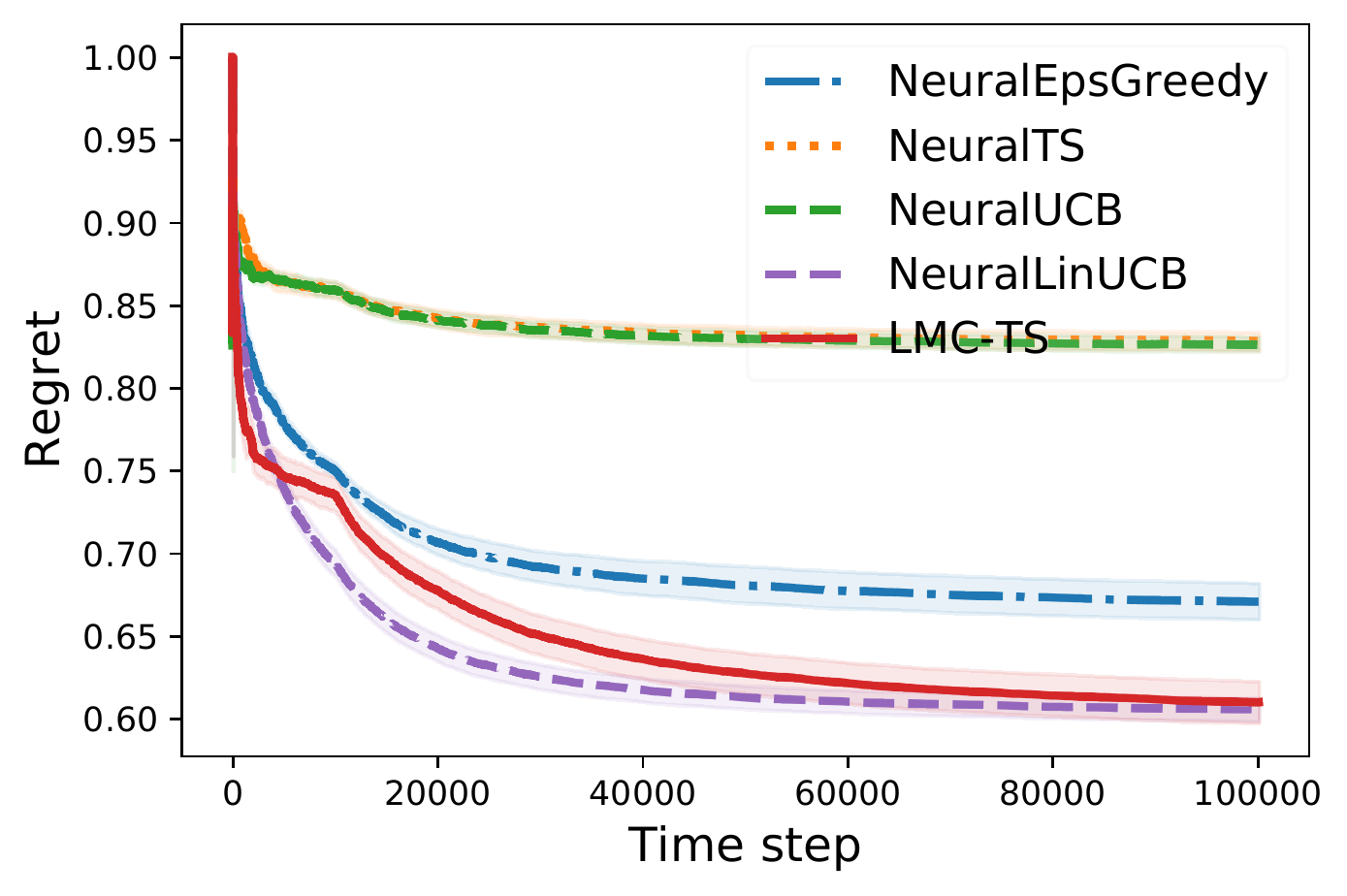}\label{fig:regret_cifar_average}}   
    \caption{Regret comparison on CIFAR10. The mean and standard error are reported over 5 runs. \label{fig:regret_cifar}}
\end{figure}
\subsection{Real-World Datasets}

In this subsection, we consider UCI machine learning datasets~\citep{dua2019uci} including \emph{Shuttle}, \emph{MagicTelescope}, \emph{Mushroom}, and \emph{Covertype},  and a high dimension image dataset \emph{CIFAR10}~\citep{krizhevsky2009learning}. 
The specifications of these datasets are summarized in Table~\ref{table:dataset}. 
To use these $N$-class classification datasets for contextual bandit problems, we follow~\citet{riquelme2018deep,kveton2020randomized} to construct context vectors for different arms in the following way: given a data feature $\xb\in \RR^d$, we transform it into $N$ contextual vectors $\xb^{(1)}=(\xb,\zero,\dots,\zero)$, $\ldots$, $\xb^{(N)}=(\zero,\ldots,\zero,\xb)\in \RR^{Nd}$. Only the arm $\xb^{(j)}$ where $j$ matches the correct class of this data has reward 1 and all other arms have reward 0. 

\textbf{Implementation details:} For all methods using neural networks, we choose the best architecture between a two-layer neural network with width 100 and a four-layer neural network with width 50. Activation function is also selected by grid search over ReLU and LeakyReLU. All baseline algorithms update the neural network by 100 gradient descent steps every round. The parameters of all methods are tuned in the same way as in Section~\ref{sec:simulation} for quadratic bandits.

\textbf{Results:} We plot the average regret with std over $10$ repeats for all algorithms on UCI datasets in Figure~\ref{fig:regret_uci}. It can be seen that our algorithm $\algname$ consistently outperforms other neural network based algorithms. The results for linear bandit based approaches are the worst on all datasets. The results on CIFAR10 is displayed in Figure~\ref{fig:regret_cifar}. Note that the dimension of this dataset is much higher than those of UCI datasets that are used in existing neural contextual bandit papers~\citep{riquelme2018deep,zhou2020neural,zhang2021neural,xu2022neural}. We plot the cumulative regret in Figure~\ref{fig:regret_cifar_cumulative} and the average regret in Figure~\ref{fig:regret_cifar_average}, which shows that $\algname$ achieves a sublinear cumulative regret and significantly outperforms all baseline methods except Neural-LinUCB which converges almost as fast as LMC-TS. This is due to the shallow exploration of Neural-LinUCB which only perform UCB exploration on the last layer parameter of the neural network. However, it also leads to a higher asymptotic regret than LMC-TS accoding to Figure~\ref{fig:regret_cifar_average}, which explores over the whole parameter space. This shows the potential of our algorithm in high dimensional bandit problems that use deep neural networks. 

We also compare the runtime of our algorithm with NeuralTS, which is a Laplace approximation based Thompson sampling algorithm, to demonstrate the computational efficiency of $\algname$.
The runtime for running $1000$ rounds is shown in Table~\ref{table:profiling_neural}. We present both the total runtime of these rounds and the time only for arm selection. It can be seen that $\algname$ is  more computationally efficient than NeuralTS, which is due to the fact that we do not need to calculate the inverse of a high dimensional matrix or sample from a nonisotropic Gaussian distribution. 

\begin{table}[h]
{\small
	\centering
	\caption{Runtime (seconds) on the CIFAR10 dataset for running 1000 rounds. 
	\label{table:profiling_neural}}
    \begin{sc}
    \begin{tabular}{cccc}
    \toprule
    & Time for arm selection & Total time\\
    \midrule
    NeuralTS  & 174.1& 776.0\\
    LMC-TS & 9.0& 662.7\\
    \bottomrule
    \end{tabular}
    \end{sc}
}
\end{table}

\section{Conclusions}
In this paper, we proposed the Langevin Monte Carlo Thompson Sampling (\algname) algorithm, which uses Langevin Monte Carlo
to approximately sample the model parameter in contextual bandit problems from the posterior distribution. Unlike existing Thompson sampling based bandit algorithms that need to construct a Laplace approximation of the posterior which has a fixed approximation error, our method can directly sample from the posterior distribution with an arbitrarily small approximation error. Moreover, our algorithm only requires to perform noisy gradient descent updates which is more computationally efficient than sampling from a high dimensional non-isotropic Gaussian distribution. It is also worth noting that the proposed $\algname$ algorithm works for a large class of contextual bandit problems where the sampling distribution may not necessarily be a conjugate posterior and the reward models are general functions.

As a sanity check, we proved that $\algname$ enjoys the same regret upper bound as best Thompson sampling algorithms for linear contextual bandits, which theoretically demonstrates the competitiveness of $\algname$ in terms of regret minimization. Due to the efficiency and flexibility of our method, it would be an interesting and promising future direction to further investigate its theoretical performance in more general contextual bandits including generalized linear bandits and neural  bandits. 

We also conducted numerical experiments on both synthetic datasets and real-world datasets to empirically evaluate the performance of our method. We compared our method $\algname$ with various baseline algorithms in linear contextual bandits, generalized linear bandits, and neural contextual bandits. We observed that our method consistently outperforms existing baseline algorithms in these settings respectively. Moreover, our method is much more scalable to high dimensional problems where deep neural networks are used owing to its simplicity and efficiency. 

\section*{Acknowledgements}

The authors would like to thank the anonymous reviewers for their invaluable comments. PX is supported by PIMCO Postdoctoral Fellowship. AA is partially supported by Bren Named Chair Professorship at Caltech. 

\bibliography{example_paper}
\bibliographystyle{icml2022}

\newpage
\appendix
\onecolumn

\section{Proof of the Regret Bound for Linear Contextual Bandits}
In this section, we study the regret of Algorithm~\ref{alg:ts_lmc_general}. Before we analyze the regret of the proposed algorithm,
define $\xb_t^*=\argmax_{\xb\in\cX_t}\xb^{\top}\btheta^*$ to be the optimal arm at time step $t$. Define $\Delta_t(\xb)=\xb_{t}^*\btheta^*-\xb_{}^{\top}\btheta^*$ to be the gap between the expected reward of the best arm and an arbitrary arm $\xb\in\cX_t$ at time $t$. 

We follow a similar proof in~\citet{agrawal2013thompson} based on the notion of saturated and unsaturated arm sets, which are defined as follows respectively
\begin{align}
    \text{Saturated arms: }\cS_t&=\{\xb\in\cX_t:\Delta_t(\xb)> g_t(\xb)\},\label{eq:def_saturated}\\
    \text{Unsaturated arms: }\cU_t&=\{\xb\in\cX_t:\Delta_t(\xb)\leq g_t(\xb)\},\label{eq:def_unsaturated}
\end{align}
where $g_t(\xb)$ is a function of the time index $t$, arm $\xb$, and historical data up to time $t$, which will be determined in later proofs. 
Intuitively, saturated arms are already well estimated and the suboptimal gap of these arms are large enough such that they can be easily distinguished from the best arm. Define the filtration $\cF_t=\{\xb_1,r_1,\ldots,\xb_{t}\}$. The following proposition implies that the distribution of the iterate $\btheta_{t,K_t}$ of Algorithm~\ref{alg:ts_lmc_general} is a Gaussian distribution.
\begin{proposition}\label{lemma:distribution_of_theta_t_k}
Under filtration $\cF_{t-1}$, the parameter $\btheta_{t,K_t}$ used in the decision at round $t$ of Algorithm~\ref{alg:ts_lmc_general} follows a Gaussian distribution $\cN(\bmu_{t,K_t},\bSigma_{t,K_t})$, where the mean vector and the covariance matrix are defined as
\begin{align}
     \bmu_{t,K_t}&=\Ab_t^{K_t}\ldots\Ab_1^{k_1}\btheta_{1,0}+\sum_{i=1}^{t}\Ab_t^{K_t}\ldots\Ab_{i+1}^{k_{i+1}}\big(\Ib-\Ab_{i}^{K_t}\big)\hbtheta_{i},\\
     \bSigma_{t,K_t}&=\sum_{i=1}^{t}\frac{1}{\beta_i}\Ab_{t}^{K_t}\ldots\Ab_{i+1}^{k_{i+1}}\big(\Ib-\Ab_i^{2K_i}\big)\Vb_i(\Ib+\Ab_i)^{-1}\Ab_{i+1}^{k_{i+1}}\ldots\Ab_{t}^{K_t},
\end{align}
where $\Ab_i=\Ib-2\eta_{i}\Vb_{i}$ for $i=1,2,\ldots$.
\end{proposition}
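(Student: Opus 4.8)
The plan is to exploit the fact that, for the linear loss in \eqref{def:loss_time_t}, one step of the inner loop of Algorithm~\ref{alg:ts_lmc_general} is an \emph{affine} function of the current iterate plus an independent Gaussian increment; iterating such a map keeps the law Gaussian, and the mean and covariance are recovered by unrolling the recursion. Concretely, I would first rewrite \eqref{eq:def_lmc}: since $\nabla L_t(\btheta)=2(\Vb_t\btheta-\bbb_t)$ and $2\eta_t\bbb_t=2\eta_t\Vb_t\hbtheta_t=(\Ib-\Ab_t)\hbtheta_t$ with $\Ab_t=\Ib-2\eta_t\Vb_t$, the update reads
\begin{align*}
\btheta_{t,k}=\Ab_t\,\btheta_{t,k-1}+(\Ib-\Ab_t)\hbtheta_t+\sqrt{2\eta_t\beta_t^{-1}}\;\bepsilon_{t,k}.
\end{align*}
Throughout I condition on $\cF_{t-1}$, which makes $\Vb_i,\bbb_i$ (hence $\Ab_i$ and $\hbtheta_i=\Vb_i^{-1}\bbb_i$) deterministic for every $i\le t$ while the $\bepsilon_{i,k}$ remain independent standard Gaussians; the step-size choice $\eta_i=1/(4\lambda_{\max}(\Vb_i))$ keeps the eigenvalues of $\Ab_i$ in $(0,1)$, so $\Ib-\Ab_i$, $\Ib+\Ab_i$ and $\Ib-\Ab_i^2$ are invertible, which I use for the geometric-series manipulations.

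Next I would unroll. Within epoch $t$, iterating the display and using the algebraic identity $\big(\sum_{m=0}^{K_t-1}\Ab_t^m\big)(\Ib-\Ab_t)=\Ib-\Ab_t^{K_t}$ gives
\begin{align*}
\btheta_{t,K_t}=\Ab_t^{K_t}\btheta_{t,0}+(\Ib-\Ab_t^{K_t})\hbtheta_t+\sqrt{2\eta_t\beta_t^{-1}}\sum_{j=1}^{K_t}\Ab_t^{K_t-j}\bepsilon_{t,j}.
\end{align*}
Substituting $\btheta_{t,0}=\btheta_{t-1,K_{t-1}}$ and continuing recursively down to $\btheta_{1,0}=\zero$ --- carefully keeping the matrix powers $\Ab_t^{K_t}\cdots\Ab_{i+1}^{K_{i+1}}$ in order, since $\Ab_i$ and $\Ab_{i'}$ need not commute for $i\neq i'$ --- expresses $\btheta_{t,K_t}$ as a fixed vector plus a linear combination of the independent Gaussians $\{\bepsilon_{i,j}\}_{i\le t,\,j\le K_i}$. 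Hence $\btheta_{t,K_t}\mid\cF_{t-1}$ is Gaussian, and setting all $\bepsilon_{i,j}=\zero$ identifies the mean as the claimed $\bmu_{t,K_t}$ (the $\btheta_{1,0}$ term being retained for generality even though $\btheta_{1,0}=\zero$).

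It then remains to compute the covariance. The contribution of epoch $i$ to $\btheta_{t,K_t}$ is $\Ab_t^{K_t}\cdots\Ab_{i+1}^{K_{i+1}}\big(\sqrt{2\eta_i\beta_i^{-1}}\sum_{j=1}^{K_i}\Ab_i^{K_i-j}\bepsilon_{i,j}\big)$; since the $\bepsilon_{i,j}$ are i.i.d.\ $\cN(\zero,\Ib)$ and each $\Ab_i$ is symmetric, the covariance of the parenthesized term is $2\eta_i\beta_i^{-1}\sum_{m=0}^{K_i-1}\Ab_i^{2m}=2\eta_i\beta_i^{-1}(\Ib-\Ab_i^{2K_i})(\Ib-\Ab_i^2)^{-1}$, which via $\Ib-\Ab_i^2=(\Ib-\Ab_i)(\Ib+\Ab_i)=2\eta_i\Vb_i(\Ib+\Ab_i)$ collapses to the per-epoch block appearing in $\bSigma_{t,K_t}$. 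Conjugating this block by $\Ab_t^{K_t}\cdots\Ab_{i+1}^{K_{i+1}}$ on the left and its transpose on the right, and summing over $i$ (noises from distinct epochs are independent), yields $\bSigma_{t,K_t}$.

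I expect the main obstacle to be organizational rather than conceptual: the bookkeeping of the ordered, non-commuting products $\Ab_t^{K_t}\cdots\Ab_{i+1}^{K_{i+1}}$ when propagating each epoch's contribution forward, and the care needed in the conditioning step --- one must record that $\Vb_i,\bbb_i$ are $\cF_{t-1}$-measurable even though the arms $\xb_i$ are chosen adaptively, so that after conditioning $\btheta_{t,K_t}$ is genuinely an affine image of freshly drawn Gaussian noise. Everything else reduces to the geometric-series identities above.
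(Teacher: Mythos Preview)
Your proposal is correct and follows essentially the same route as the paper: rewrite each inner-loop step as the affine map $\btheta_{t,k}=\Ab_t\btheta_{t,k-1}+(\Ib-\Ab_t)\hbtheta_t+\sqrt{2\eta_t\beta_t^{-1}}\bepsilon_{t,k}$, unroll within an epoch via the geometric-series identity, then unroll across epochs using the warm start to obtain $\btheta_{t,K_t}$ as a deterministic vector plus a linear combination of independent Gaussians, from which the mean and covariance are read off exactly as you describe. Your treatment is arguably a bit more careful than the paper's in flagging the non-commutativity of the $\Ab_i$ across epochs and in making the conditioning on $\cF_{t-1}$ explicit.
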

It can be seen that the variance term depends more on the recent noise than those noises added in the history. 

The following lemma upper bounds the distance between the solution $\hbtheta_t$ of the ridge regression in \eqref{def:loss_time_t} and the true weight parameter $\btheta^*$ of the bandit model.
\begin{lemma}[\citet{agrawal2013thompson}]\label{lemma:concentration_events_regression}
Let $\delta\in(0,1)$. For any $\xb\in\RR^d$, it holds that
\begin{align}
    \PP\big(\big|\xb^{\top}(\hbtheta_t-\btheta^*)\big|&\leq \big(R\sqrt{d\log(t^3/\delta)}+1\big)\|\xb\|_{\Vb_t^{-1}}\big)\geq 1-\frac{\delta}{t^2}.
\end{align}
For the simplicity of notations, we denote  $g_R(t)=R\sqrt{d\log(t^3/\delta)}+1$ in the rest of the paper.
\end{lemma}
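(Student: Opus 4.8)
The plan is to follow the standard self-normalized martingale argument of \citet{abbasi2011improved,agrawal2013thompson}, which the paper's notation is already set up for. First I would write the estimation error in closed form. Since $\bbb_t=\sum_{s=1}^{t-1}r_s\xb_s=\sum_{s=1}^{t-1}(\xb_s^\top\btheta^*+\xi_s)\xb_s=(\Vb_t-\lambda\Ib)\btheta^*+\Sb_t$ with $\Sb_t:=\sum_{s=1}^{t-1}\xi_s\xb_s$, and $\hbtheta_t=\Vb_t^{-1}\bbb_t$, we obtain
$$\hbtheta_t-\btheta^*=\Vb_t^{-1}\Sb_t-\lambda\Vb_t^{-1}\btheta^*,$$
so that for any fixed $\xb$,
$$\big|\xb^\top(\hbtheta_t-\btheta^*)\big|\le\big|\xb^\top\Vb_t^{-1}\Sb_t\big|+\lambda\big|\xb^\top\Vb_t^{-1}\btheta^*\big|.$$
The second (bias) term is handled deterministically: by Cauchy--Schwarz in the $\Vb_t^{-1}$ inner product, $\lambda|\xb^\top\Vb_t^{-1}\btheta^*|\le\lambda\|\xb\|_{\Vb_t^{-1}}\|\btheta^*\|_{\Vb_t^{-1}}\le\sqrt\lambda\,\|\btheta^*\|_2\,\|\xb\|_{\Vb_t^{-1}}$, which is at most $\|\xb\|_{\Vb_t^{-1}}$ under the usual normalization $\lambda=1$, $\|\btheta^*\|_2\le1$. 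This accounts for the ``$+1$'' in the stated radius $g_R(t)$.

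For the first (noise) term, Cauchy--Schwarz again gives $|\xb^\top\Vb_t^{-1}\Sb_t|\le\|\xb\|_{\Vb_t^{-1}}\|\Sb_t\|_{\Vb_t^{-1}}$, so it remains to show $\|\Sb_t\|_{\Vb_t^{-1}}\le R\sqrt{d\log(t^3/\delta)}$ with probability at least $1-\delta/t^2$. The key point is that $\Sb_t$ is a martingale whose normalizer $\Vb_t$ is itself adapted, so a fixed-design tail bound does not suffice; instead I would invoke the self-normalized concentration inequality for vector-valued martingales (Theorem~1 of \citet{abbasi2011improved}): with probability at least $1-\delta'$ one has
$$\|\Sb_t\|_{\Vb_t^{-1}}^2\le 2R^2\log\!\Big(\frac{\det(\Vb_t)^{1/2}\det(\lambda\Ib)^{-1/2}}{\delta'}\Big),$$
using that $\xi_s$ is $R$-subGaussian conditional on $\cF_{s-1}=\{\xb_1,r_1,\dots,\xb_s\}$ (immediate from Assumption~\ref{assump:linear_bandit_sugaussian}, since $\xb_s$ is chosen before $\xi_s$ is realized). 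Then I would bound the determinant by the trace--determinant (AM--GM) inequality $\det(\Vb_t)\le(\lambda+(t-1)L^2/d)^d$, where $L$ bounds the arm norms $\sup_s\|\xb_s\|_2$ (w.l.o.g. $L=1$), giving $\log(\det(\Vb_t)^{1/2}\det(\lambda\Ib)^{-1/2})\le\tfrac d2\log(1+(t-1)/(\lambda d))$. Choosing $\delta'=\delta/t^2$ and absorbing the logarithmic factors into $d\log(t^3/\delta)$ yields $\|\Sb_t\|_{\Vb_t^{-1}}\le R\sqrt{d\log(t^3/\delta)}$; combining with the bias bound gives the claim, and a union bound over $t$ (or the uniform-in-$t$ form of Theorem~1 of \citet{abbasi2011improved}) makes the events hold simultaneously as needed downstream.

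The main obstacle is really the self-normalized inequality itself --- the ``method of mixtures''/pseudo-maximization argument that turns the adapted normalizer $\Vb_t$ into something controllable --- but since this is exactly Theorem~1 of \citet{abbasi2011improved}, it is used here as a black box. The only genuine care needed in our setting is verifying the filtration/subGaussianity hypothesis (so that the arm selection precedes the noise at each round) and tracking constants so that the noise contribution and the regularization bias combine into the single radius $g_R(t)=R\sqrt{d\log(t^3/\delta)}+1$; the precise bookkeeping of the logarithmic factors follows \citet{agrawal2013thompson}.
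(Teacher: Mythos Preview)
Your proposal is correct and is exactly the standard argument the paper defers to: the lemma is stated with a citation to \citet{agrawal2013thompson} and the paper offers no independent proof beyond the remark that it ``can be bounded using the same bound of the least square solution in~\citet{agrawal2013thompson}.'' Your decomposition into the self-normalized noise term (controlled via Theorem~1 of \citet{abbasi2011improved}) plus the $\sqrt{\lambda}\|\btheta^*\|_2$ regularization bias is precisely that argument, so there is nothing to add.
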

This can be bounded using the same bound of the least square solution in~\citet{agrawal2013thompson}. 

\begin{lemma}\label{lemma:subgaussian-tail}
Let $R>0$. For a $R$-subGaussian random variable $\xi$, it holds that $\PP(|\xi|>x)\leq \exp(1-x^2/R^2)$. In other words, $\PP(|\xi|>R\sqrt{1+2\log t})\leq1/t^2$ for any $t>1$. Define the following event 
\begin{align}\label{def:event_regression}
    E_{R,t}&=\big\{|\xb^{\top}\hbtheta_t-\xb^{\top}\btheta^*|\leq g_R(t)\|\xb\|_{\Vb_t^{-1}}, \forall\xb\in\cX_t\big\}\cap\{|\xi_t|\leq R\sqrt{1+2\log t}\}.
\end{align}
Then it holds that $\PP(E_{R,t})\geq 1-2/t^2$.
\end{lemma}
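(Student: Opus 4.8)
The statement has three parts; the first two are restatements of standard facts, so the plan is to dispatch them quickly and then control the event $E_{R,t}$ by a union bound over the two events in its definition. For the sub-Gaussian tail, if $\xi$ is $R$-sub-Gaussian then Markov's inequality applied to $\exp(\xi^2/R^2)$ (equivalently, the Chernoff bound applied to $\exp(\lambda\xi)$, using that $-\xi$ is $R$-sub-Gaussian as well) gives $\PP(|\xi|>x)\le\exp(1-x^2/R^2)$, which I would invoke as a standard property. Substituting $x=R\sqrt{1+2\log t}$ yields $\PP\big(|\xi|>R\sqrt{1+2\log t}\big)\le\exp\big(1-(1+2\log t)\big)=\exp(-2\log t)=1/t^2$ for every $t>1$; in particular, under Assumption~\ref{assump:linear_bandit_sugaussian} the round-$t$ noise satisfies $\PP\big(|\xi_t|> R\sqrt{1+2\log t}\big)\le 1/t^2$.

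For the event bound, write $E_{R,t}=A_t\cap B_t$ with $A_t=\{\,|\xb^{\top}\hbtheta_t-\xb^{\top}\btheta^*|\le g_R(t)\|\xb\|_{\Vb_t^{-1}}\ \text{for all }\xb\in\cX_t\,\}$ and $B_t=\{\,|\xi_t|\le R\sqrt{1+2\log t}\,\}$. For $A_t$ I would apply Lemma~\ref{lemma:concentration_events_regression}: the high-probability event behind it is a self-normalized bound of the form $\|\hbtheta_t-\btheta^*\|_{\Vb_t}\le g_R(t)$ (in the style of \citet{abbasi2011improved}), which does not involve $\xb$; on that event, Cauchy--Schwarz in the $\Vb_t$-norm gives $|\xb^{\top}(\hbtheta_t-\btheta^*)|\le\|\hbtheta_t-\btheta^*\|_{\Vb_t}\,\|\xb\|_{\Vb_t^{-1}}\le g_R(t)\|\xb\|_{\Vb_t^{-1}}$ simultaneously for all $\xb$, so $\PP(A_t^c)\le\delta/t^2$. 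Combining with the tail bound above, $\PP(E_{R,t}^c)\le\PP(A_t^c)+\PP(B_t^c)\le\delta/t^2+1/t^2$, and since $\delta\in(0,1)$ this is at most $2/t^2$, i.e.\ $\PP(E_{R,t})\ge 1-2/t^2$.

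The only genuine subtlety is the bound for $A_t$: Lemma~\ref{lemma:concentration_events_regression} is stated for a single fixed $\xb$, whereas $E_{R,t}$ demands the inequality uniformly over $\cX_t$, which may be infinite, so a union bound over arms is unavailable. The resolution is that the underlying concentration actually controls $\|\hbtheta_t-\btheta^*\|_{\Vb_t}$, a quantity free of $\xb$, and Cauchy--Schwarz transfers the estimate to all $\xb$ at once. A secondary point to state carefully is the normalization convention for $R$-sub-Gaussianity, since the claimed tail $\exp(1-x^2/R^2)$ (rather than, say, $2\exp(-x^2/(2R^2))$) is exactly what makes the substitution $x=R\sqrt{1+2\log t}$ collapse to $1/t^2$. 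Beyond these, the argument is a one-line substitution and a two-term union bound.
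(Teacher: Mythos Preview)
Your proposal is correct and follows essentially the same approach as the paper: invoke the standard sub-Gaussian tail bound (the paper cites \citet{vershynin2010introduction}), then bound $\PP(E_{R,t}^c)$ by a union bound using Lemma~\ref{lemma:concentration_events_regression} for the regression event and the tail bound for the noise event. Your treatment is in fact more careful than the paper's, since you explicitly address the uniformity over $\cX_t$ via the underlying $\Vb_t$-norm bound and Cauchy--Schwarz, and you make explicit the step $\delta/t^2+1/t^2\le 2/t^2$; the paper glosses over both.
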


We define the event $E_{S,t}$ as follows, which provides an upper bound of the distance between the iterate $\btheta_{t,k}$ and the mode $\hbtheta_t$ of density $\pi_t\propto\exp(-\beta_t L_t(\btheta))$.

\begin{align}\label{def:event_sampling}
    E_{S,t}&=\Bigg\{|\xb^{\top}\btheta_{t,k}-\xb^{\top}\hbtheta_t|\leq \Bigg(5\sqrt{\frac{2d\log t}{3\beta_T}}+3/2\Bigg)\|\xb\|_{\Vb_t^{-1}}\Bigg\}.
\end{align}
The following lemma shows that $E_{S,t}$ happens with high probability.
\begin{lemma}\label{lemma:concentration_events_sampling}
Under event $E_{R,t}$, for any $\xb\in\cX_t$, with probability at least $1-1/t^2$ we have $\PP(E_{S,t})\geq 1-1/T^2$.
\end{lemma}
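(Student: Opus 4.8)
The plan is to combine the exact Gaussian characterization of the LMC iterate from Proposition~\ref{lemma:distribution_of_theta_t_k} with a bias/variance decomposition. Conditioned on $\cF_{t-1}$, we have $\btheta_{t,K_t}\sim\cN(\bmu_{t,K_t},\bSigma_{t,K_t})$, so it suffices to (i) bound the ``mean bias'' $\|\bmu_{t,K_t}-\hbtheta_t\|_{\Vb_t}$ by a fixed constant, which accounts for the $3/2$ in the definition of $E_{S,t}$; (ii) bound the covariance in the Loewner order by a constant multiple of $\beta_T^{-1}\Vb_t^{-1}$; and (iii) apply a Gaussian tail inequality to produce the $5\sqrt{2d\log t/(3\beta_T)}\,\|\xb\|_{\Vb_t^{-1}}$ term, uniformly over $\xb\in\cX_t$. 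A preliminary step is to record the spectral facts implied by the choices in Theorem~\ref{thm:regret_lcb}: with $\eta_j=1/(4\lambda_{\max}(\Vb_j))$ the matrix $\Ab_j=\Ib-2\eta_j\Vb_j$ is symmetric, commutes with $\Vb_j$, satisfies $\tfrac12\Ib\preceq\Ab_j$, $\|\Ab_j\|_2\le 1-\tfrac{1}{2\kappa_j}$, and $(\Ib+\Ab_j)^{-1}=\tfrac12(\Ib-\eta_j\Vb_j)^{-1}\preceq\tfrac23\Ib$; hence $K_j=\kappa_j\log\bigl(3R\sqrt{2dT\log(T^3/\delta)}\bigr)$ yields the contraction $\|\Ab_j^{K_j}\|_2\le\bigl(3R\sqrt{2dT\log(T^3/\delta)}\bigr)^{-1/2}$, which is the workhorse of the argument.

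For step~(i) I would use the telescoping identity $\sum_{i=1}^{t}\Ab_t^{K_t}\cdots\Ab_{i+1}^{K_{i+1}}(\Ib-\Ab_i^{K_i})=\Ib-\Ab_t^{K_t}\cdots\Ab_1^{K_1}$ together with $\btheta_{1,0}=\zero$ to rewrite $\bmu_{t,K_t}-\hbtheta_t=\sum_{i=1}^{t-1}\Ab_t^{K_t}\cdots\Ab_{i+1}^{K_{i+1}}(\Ib-\Ab_i^{K_i})(\hbtheta_i-\hbtheta_t)-\Ab_t^{K_t}\cdots\Ab_1^{K_1}\hbtheta_t$, where the $i=t$ summand vanishes. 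Every surviving term carries the leading contraction factor $\Ab_t^{K_t}$ (and the earliest ones a geometric product of such factors), while $\|\hbtheta_i\|_2$ and $\|\hbtheta_i-\hbtheta_t\|_2$ are controlled by a fixed polynomial in $d$, $\log T$ and $\sqrt T$ via the standard ridge-regression / self-normalized bound behind Lemma~\ref{lemma:concentration_events_regression} (valid on $E_{R,t}$) and $\Vb_i\preceq\Vb_t$; the $\sqrt T$ planted inside the logarithm defining $K_j$ is exactly what absorbs these polynomial factors and leaves $\|\bmu_{t,K_t}-\hbtheta_t\|_{\Vb_t}\le 3/2$. For step~(ii) I would bound $\bSigma_{t,K_t}$ term by term: the $i=t$ term is $\beta_t^{-1}\Vb_t^{-1}(\Ib+\Ab_t)^{-1}(\Ib-\Ab_t^{2K_t})\preceq\tfrac23\beta_T^{-1}\Vb_t^{-1}$ by commutativity with $\Vb_t$, and each $i<t$ term is suppressed by $\|\Ab_t^{K_t}\|_2^2$ (factors on both sides), so $\Vb_t^{1/2}\bSigma_{t,K_t}\Vb_t^{1/2}\preceq \tfrac{25}{24}\beta_T^{-1}\Ib$, say.

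For step~(iii), $\Vb_t^{1/2}(\btheta_{t,K_t}-\bmu_{t,K_t})$ is a Gaussian vector whose covariance is dominated by $\tfrac{25}{24}\beta_T^{-1}\Ib$, so a standard Gaussian-norm concentration bound gives $\|\btheta_{t,K_t}-\bmu_{t,K_t}\|_{\Vb_t}\le 5\sqrt{2d\log t/(3\beta_T)}$ with probability at least $1-1/T^2$ (the $\sqrt d$ coming from the ambient dimension and the $\sqrt{\log t}$ from the deviation level). Combining with (i) through Cauchy--Schwarz in the $\Vb_t$-geometry, $|\xb^{\top}(\btheta_{t,K_t}-\hbtheta_t)|\le\|\xb\|_{\Vb_t^{-1}}\bigl(\|\btheta_{t,K_t}-\bmu_{t,K_t}\|_{\Vb_t}+\|\bmu_{t,K_t}-\hbtheta_t\|_{\Vb_t}\bigr)$, yields $E_{S,t}$ simultaneously for all $\xb\in\cX_t$, conditionally on $E_{R,t}$.

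I expect steps~(i)--(ii) to be the main obstacle. Because the inner loop at round $t$ is warm-started from $\btheta_{t-1,K_{t-1}}$ rather than reinitialized, $\bmu_{t,K_t}$ and $\bSigma_{t,K_t}$ accumulate the biases and injected noises of every previous epoch, and the matrices $\Vb_i,\Ab_i$ do not commute across rounds; showing that these historical contributions are genuinely washed out by the contraction factors $\Ab_j^{K_j}$ once everything is re-expressed in the time-$t$ Mahalanobis geometry—while keeping all constants and the condition-number dependence in $K_j$ honest—is the delicate part, and it is precisely the coupling-of-noises difficulty the paper highlights as distinguishing its analysis from that of Laplace-approximation Thompson sampling.
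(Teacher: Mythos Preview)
Your high-level strategy---use Proposition~\ref{lemma:distribution_of_theta_t_k} to write $\btheta_{t,K_t}\sim\cN(\bmu_{t,K_t},\bSigma_{t,K_t})$, split into mean bias plus Gaussian fluctuation, kill the historical contributions via the contractions $\|\Ab_j^{K_j}\|$, and finish with a Gaussian norm tail---is exactly the paper's. The execution differs in two places, and one of them hides a real gap.

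For the mean bias the paper telescopes with \emph{consecutive} differences, $\bmu_{t,K_t}-\hbtheta_t=\sum_{i=1}^{t-1}\Ab_t^{K_t}\cdots\Ab_{i+1}^{K_{i+1}}(\hbtheta_i-\hbtheta_{i+1})$ (the boundary term vanishes since $\btheta_{1,0}=\hbtheta_1=\zero$), and then applies Sherman--Morrison to get $\hbtheta_{i+1}-\hbtheta_i=\Vb_i^{-1}\xb_i(r_i-\xb_i^{\top}\hbtheta_i)/(1+\|\xb_i\|_{\Vb_i^{-1}}^2)$. This plants a factor $\|\xb_i\|_{\Vb_i^{-1}}$ in each summand and lets the bound be expressed through $|\xi_i|$ and $|\xb_i^{\top}(\btheta^*-\hbtheta_i)|$, both controlled on the event. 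For the covariance the paper again uses Sherman--Morrison, now for $\Vb_i^{-1}-\Vb_{i+1}^{-1}$, to telescope $\xb^{\top}\bSigma_{t,K_t}\xb$ down to $\tfrac{2}{3\beta_T}\|\xb\|_{\Vb_t^{-1}}^2$ plus contracted cross-terms. Your alternative---controlling $\|\hbtheta_i-\hbtheta_t\|_2$ by ridge bounds and bounding $\Vb_t^{1/2}\bSigma_{t,K_t}\Vb_t^{1/2}$ in Loewner order---is coarser but is a legitimate route; it delivers the uniformity in $\xb$ automatically via Cauchy--Schwarz, whereas the paper carries $\|\xb\|_{\Vb_i^{-1}}$ through an intermediate quantity $\hat g_t(\xb)$ and only converts to $\|\xb\|_{\Vb_t^{-1}}$ at the end via $\|\xb\|_2\le\sqrt{t}\,\|\xb\|_{\Vb_t^{-1}}$.

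The gap is in your contraction bookkeeping. From $K_j=\kappa_j\log\bigl(3R\sqrt{2dT\log(T^3/\delta)}\bigr)$ you correctly deduce $\|\Ab_j^{K_j}\|_2\le\bigl(3R\sqrt{2dT\log(T^3/\delta)}\bigr)^{-1/2}$, which is only of order $T^{-1/4}$. In step~(i) you must then absorb $\|\Vb_t^{1/2}\|\le\sqrt{\lambda+t}$ together with $\|\hbtheta_i-\hbtheta_t\|_2\lesssim g_R(T)/\sqrt{\lambda}$, and the assertion that ``the $\sqrt T$ planted inside the logarithm \dots\ is exactly what absorbs these polynomial factors'' fails at this rate: the leading summand is of order $T^{-1/4}\cdot\sqrt{T}\cdot\sqrt{d\log T}\sim T^{1/4}\sqrt{d\log T}$, not $O(1)$, so $\|\bmu_{t,K_t}-\hbtheta_t\|_{\Vb_t}\le 3/2$ does not follow. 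The paper's proof in fact uses the stronger contraction $(1-1/(2\kappa_j))^{K_j}\le\epsilon=(3R\sqrt{2dT\log(T^3/\delta)})^{-1}$, which it derives from the requirement $K_j\ge 2\kappa_j\log(1/\epsilon)$---twice the value quoted in Theorem~\ref{thm:regret_lcb}. With that (harmless) correction both your route and the paper's go through; with the $-1/2$ exponent you wrote, step~(i) cannot close.
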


\begin{lemma}[Optimistic estimation]\label{lemma:optimism}
Based on the parameter estimation at time step $t$, the best arm is over estimated. In particular, let $K_j\geq\kappa_j\log(3R\sqrt{2dT\log(T^3/\delta)})$ and $1/\beta_j=4(R\sqrt{d\log(T^3/\delta)}+2)$ for all $j\in[T]$. Then conditional on event $E_{R,t}$, we have
\begin{align}
P\big(\xb_t^{*\top}\btheta_{t,k}>\xb_{t}^{*\top}\btheta^*\big)\geq \frac{1}{2\sqrt{2e\pi}}.
\end{align}
\end{lemma}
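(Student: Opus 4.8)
The plan is to show that the random variable $\xb_t^{*\top}\btheta_{t,K_t}$ is, conditional on $\cF_{t-1}$ and on the event $E_{R,t}$, a Gaussian whose mean is close to $\xb_t^{*\top}\hbtheta_t$ and whose standard deviation (measured in the $\Vb_t^{-1}$ geometry) is large enough to dominate the estimation error $|\xb_t^{*\top}\hbtheta_t - \xb_t^{*\top}\btheta^*|$. Concretely, by Proposition~\ref{lemma:distribution_of_theta_t_k}, $\xb_t^{*\top}\btheta_{t,K_t}\sim\cN(\xb_t^{*\top}\bmu_{t,K_t},\,\xb_t^{*\top}\bSigma_{t,K_t}\xb_t^{*})$. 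Write $Z = \xb_t^{*\top}\btheta_{t,K_t}$. Then
\begin{align*}
\PP\big(Z > \xb_t^{*\top}\btheta^*\big) \;=\; \PP\!\left(\frac{Z - \xb_t^{*\top}\bmu_{t,K_t}}{\sigma_t} > \frac{\xb_t^{*\top}\btheta^* - \xb_t^{*\top}\bmu_{t,K_t}}{\sigma_t}\right),
\end{align*}
where $\sigma_t^2 = \xb_t^{*\top}\bSigma_{t,K_t}\xb_t^{*}$. So I need (i) a lower bound on $\sigma_t$ of order $\beta_T^{-1/2}\|\xb_t^*\|_{\Vb_t^{-1}}$ — roughly the same scale as the LinTS posterior standard deviation — and (ii) an upper bound on $|\xb_t^{*\top}\bmu_{t,K_t} - \xb_t^{*\top}\btheta^*|$ of the form $c\,\|\xb_t^*\|_{\Vb_t^{-1}}$ for a constant $c$ not too large relative to $\sigma_t$. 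Given these, the ratio on the right-hand side is bounded above by an absolute constant $c/\sigma_t$-type quantity, and the standard Gaussian anti-concentration bound $\PP(\mathcal{N}(0,1) > a)\ge \frac{1}{\sqrt{2\pi}}\,\frac{a}{a^2+1}e^{-a^2/2} \ge \frac{1}{2\sqrt{2e\pi}}$ for $a\le 1$ gives the claimed constant $\frac{1}{2\sqrt{2e\pi}}$.

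For step (ii), the key is that $\bmu_{t,K_t}$ is a product-of-contractions average of the ridge solutions $\hbtheta_1,\dots,\hbtheta_t$ (from the explicit formula in Proposition~\ref{lemma:distribution_of_theta_t_k}), not $\hbtheta_t$ itself. Here the choice $K_j \ge \kappa_j\log(3R\sqrt{2dT\log(T^3/\delta)})$ enters: it forces $\Ab_j^{K_j} = (\Ib - 2\eta_j\Vb_j)^{K_j}$ to have operator norm at most $\big(1 - \lambda_{\min}(\Vb_j)/(2\lambda_{\max}(\Vb_j))\big)^{K_j}$, which is $\le 1/(3R\sqrt{2dT\log(T^3/\delta)})$ once we plug in $\eta_j = 1/(4\lambda_{\max}(\Vb_j))$. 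Telescoping the recursion $\hbtheta_{t} = \hbtheta_{t-1} + \Vb_t^{-1}(\xb_{t-1}r_{t-1} + \dots)$-type relation (or directly bounding $\|\hbtheta_i - \hbtheta_t\|$ via the boundedness assumptions and the $E_{R,\cdot}$ events), the contamination from the old iterates $\hbtheta_i$, $i<t$, is damped by a factor $\le 1/(3R\sqrt{2dT\cdots})$, so its total contribution to $\xb_t^{*\top}\bmu_{t,K_t}$, summed over $i$, is $O(\|\xb_t^*\|_{\Vb_t^{-1}})$ or even $o(1)$ in the relevant norm; meanwhile the $\hbtheta_t$ term is weighted by $(\Ib - \Ab_t^{K_t})$, which is close to identity. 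Combining with Lemma~\ref{lemma:concentration_events_regression} (which controls $|\xb_t^{*\top}\hbtheta_t - \xb_t^{*\top}\btheta^*|\le g_R(t)\|\xb_t^*\|_{\Vb_t^{-1}}$ on $E_{R,t}$), we get $|\xb_t^{*\top}\bmu_{t,K_t} - \xb_t^{*\top}\btheta^*| \le (g_R(t) + O(1))\|\xb_t^*\|_{\Vb_t^{-1}}$, and with the choice $\beta_t^{-1} = 4(R\sqrt{d\log(T^3/\delta)}+2)$ this is dominated by $\sigma_t$ so that the normalized threshold $a$ is at most $1$.

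For step (i), the lower bound on $\sigma_t^2 = \xb_t^{*\top}\bSigma_{t,K_t}\xb_t^*$: only the $i=t$ summand of $\bSigma_{t,K_t}$ needs to be retained (all summands are PSD), giving $\sigma_t^2 \ge \beta_t^{-1}\,\xb_t^{*\top}(\Ib - \Ab_t^{2K_t})\Vb_t(\Ib+\Ab_t)^{-1}\xb_t^*$; since $\Ab_t = \Ib - 2\eta_t\Vb_t$ with $\eta_t = 1/(4\lambda_{\max}(\Vb_t))$, we have $(\Ib+\Ab_t)^{-1}\succeq \frac12 \Ib$ roughly and $\Vb_t(\Ib+\Ab_t)^{-1}\succeq \frac14\eta_t^{-1}(\Ib - \Ab_t)(\Ib+\Ab_t)^{-1}\cdot\frac{1}{?}$ — more carefully, $\Vb_t(\Ib+\Ab_t)^{-1} = \frac{1}{2\eta_t}(\Ib - \Ab_t)(\Ib + \Ab_t)^{-1}\succeq \frac{1}{2\eta_t}\cdot\frac{1}{2}(\Ib-\Ab_t)$, and combined with $(\Ib - \Ab_t^{2K_t})$ being close to $\Ib$ one recovers $\sigma_t^2 \gtrsim \beta_t^{-1}\|\xb_t^*\|_{\Vb_t^{-1}}^2$ up to an absolute constant. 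This is exactly the same order as the variance $v_t\|\xb_t^*\|_{\Vb_t^{-1}}^2$ used in the LinTS optimism lemma of~\citet{agrawal2013thompson}.

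The main obstacle I expect is step (ii): carefully bounding the cross-contamination terms $\xb_t^{*\top}\Ab_t^{K_t}\cdots\Ab_{i+1}^{k_{i+1}}(\Ib - \Ab_i^{K_i})\hbtheta_i$ in the $\Vb_t^{-1}$-weighted norm rather than the Euclidean norm — the operators $\Ab_j$ are $\Vb_j$-dependent and do not commute with $\Vb_t$, so one cannot naively pass between $\|\cdot\|_2$ and $\|\cdot\|_{\Vb_t^{-1}}$. The fix is to absorb these terms crudely into an $O(1)$ additive slack using the exponential smallness of $\prod_j \|\Ab_j^{K_j}\|_2 \le (3R\sqrt{2dT\log(T^3/\delta)})^{-(t-i)}$ against the at-most-polynomial-in-$T$ growth of $\|\hbtheta_i\|_2$ and $\|\xb_t^*\|_2$ (both bounded by boundedness of features/parameters and the regularization $\lambda$), so that the whole tail sum contributes at most a constant — indeed at most the $3/2$ or $2$ slack that already appears in the statement. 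Once that is done, plugging the chosen $K_j$, $\eta_j$, $\beta_j$ into the anti-concentration inequality is routine.
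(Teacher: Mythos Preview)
Your plan is essentially the paper's proof: both reduce to showing $|Z_t|\le 1$ where $Z_t=(\xb_t^{*\top}\btheta^*-\xb_t^{*\top}\bmu_{t,K_t})/\|\xb_t^*\|_{\bSigma_{t,K_t}}$, bound the numerator via the triangle inequality through $\hbtheta_t$ (using $E_{R,t}$ for the $\hbtheta_t\!-\!\btheta^*$ part and the Sherman--Morrison telescoping of $\hbtheta_{i+1}-\hbtheta_i$ together with the contraction $\|\Ab_j^{K_j}\|\le(3R\sqrt{2dT\log(T^3/\delta)})^{-1}$ for the $\bmu_{t,K_t}\!-\!\hbtheta_t$ part), and then invoke the Gaussian anti-concentration lemma to extract the constant $1/(2\sqrt{2e\pi})$.

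The one genuine difference is your step~(i). You lower-bound $\xb_t^{*\top}\bSigma_{t,K_t}\xb_t^*$ by discarding every summand except $i=t$, arguing each summand is PSD. The paper instead keeps the full sum, rewrites it as the telescoping sum $\sum_i \Ab_t^{K_t}\cdots\Ab_{i+1}^{K_{i+1}}(\Vb_i^{-1}-\Vb_{i+1}^{-1})\Ab_{i+1}^{K_{i+1}}\cdots\Ab_t^{K_t}$, and then shows all the cross terms are negligible using Sherman--Morrison plus the $K_j$-contraction again. Your route is shorter and yields the same final bound $\|\xb_t^*\|_{\bSigma_{t,K_t}}\gtrsim \beta_T^{-1/2}\|\xb_t^*\|_{\Vb_t^{-1}}$; the PSD claim is correct because each summand has the form $M_i P_i M_i^\top$ with $M_i=\Ab_t^{K_t}\cdots\Ab_{i+1}^{K_{i+1}}$ (recall each $\Ab_j$ is symmetric, so $M_i^\top$ is exactly the reversed product appearing on the right) and $P_i=(\Ib-\Ab_i^{2K_i})\Vb_i^{-1}(\Ib+\Ab_i)^{-1}$ a product of commuting PSD matrices. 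One caveat: the statement of Proposition~\ref{lemma:distribution_of_theta_t_k} writes $\Vb_i$ in the covariance formula, but the derivation (and the paper's own use of it in the proof of this lemma) has $\Vb_i^{-1}$; your intermediate algebra in step~(i) inherits this typo, though your target $\sigma_t^2\gtrsim\beta_t^{-1}\|\xb_t^*\|_{\Vb_t^{-1}}^2$ is the right one.
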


\begin{lemma}\label{lemma:prob_pull_unsaturated}
The probability of playing unsaturated arms is at least a constant. That is, 
\begin{align*}
    \PP(\xb_t\in\cU_t|E_{R,t})\geq \frac{1}{2\sqrt{2e\pi}}-\frac{1}{t^2}.
\end{align*}
\end{lemma}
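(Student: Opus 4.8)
The plan is to combine the optimism lemma (Lemma~\ref{lemma:optimism}) with the anti-concentration/concentration bound on the sampled parameter (Lemma~\ref{lemma:concentration_events_sampling}) to show that whenever the sample $\btheta_{t,K_t}$ overestimates the value of the optimal arm, the arm actually played $\xb_t = \argmax_{\xb\in\cX_t}\xb^\top\btheta_{t,K_t}$ must be unsaturated. First I would recall the standard argument: on the event $E_{S,t}$, for every arm $\xb\in\cX_t$ we have $\xb^\top\btheta_{t,K_t} \le \xb^\top\hbtheta_t + g_S(t)\|\xb\|_{\Vb_t^{-1}}$, where $g_S(t) = 5\sqrt{2d\log t/(3\beta_T)}+3/2$, and combining with $E_{R,t}$, $\xb^\top\btheta_{t,K_t} \le \xb^\top\btheta^* + (g_R(t)+g_S(t))\|\xb\|_{\Vb_t^{-1}}$. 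If in addition $\xb$ is saturated, i.e. $\Delta_t(\xb) > g_t(\xb)$ with $g_t(\xb) := (g_R(t)+g_S(t))\|\xb\|_{\Vb_t^{-1}}$, then $\xb^\top\btheta_{t,K_t} < \xb_t^{*\top}\btheta^* - \Delta_t(\xb) + \Delta_t(\xb) = \xb_t^{*\top}\btheta^*$.

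Next I would argue by contradiction on the choice of $\xb_t$. Suppose the event $\{\xb_t^{*\top}\btheta_{t,K_t} > \xb_t^{*\top}\btheta^*\}$ holds and also $E_{S,t}$ holds for all arms in $\cX_t$ (the latter with a union bound, but note $E_{S,t}$ as stated already has the $T$-uniform bound $1-1/T^2$). Then for any saturated arm $\xb\in\cS_t$ we showed $\xb^\top\btheta_{t,K_t} < \xb_t^{*\top}\btheta^* < \xb_t^{*\top}\btheta_{t,K_t}$, so $\xb$ cannot be the maximizer of $\xb\mapsto\xb^\top\btheta_{t,K_t}$; hence $\xb_t\notin\cS_t$, i.e. $\xb_t\in\cU_t$. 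Therefore
\begin{align*}
\PP(\xb_t\in\cU_t\mid E_{R,t}) \ge \PP\big(\{\xb_t^{*\top}\btheta_{t,K_t} > \xb_t^{*\top}\btheta^*\}\cap E_{S,t}\mid E_{R,t}\big).
\end{align*}
Applying Lemma~\ref{lemma:optimism} to lower bound the first event by $1/(2\sqrt{2e\pi})$, and Lemma~\ref{lemma:concentration_events_sampling} to bound $\PP(E_{S,t}^c\mid E_{R,t}) \le 1/T^2 \le 1/t^2$, a union bound gives $\PP(\xb_t\in\cU_t\mid E_{R,t}) \ge 1/(2\sqrt{2e\pi}) - 1/t^2$, as claimed. (I would double-check whether the intended slack term is $1/t^2$ or $1/T^2$; since the statement says $1/t^2$ I would use the cruder bound $1/T^2\le 1/t^2$ wherever the uniform sampling event is invoked.)

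The only delicate point is bookkeeping the conditioning: the overestimation event from Lemma~\ref{lemma:optimism} and the sampling-concentration event $E_{S,t}$ are both statements about the law of $\btheta_{t,K_t}$ conditional on $\cF_{t-1}$ (equivalently on $E_{R,t}$), so they live on the same probability space and the union bound is legitimate. I would also make sure that the function $g_t(\xb)$ used to define saturated/unsaturated arms in \eqref{eq:def_saturated}–\eqref{eq:def_unsaturated} is exactly $g_t(\xb)=(g_R(t)+g_S(t))\|\xb\|_{\Vb_t^{-1}}$, so that the inequality chain closes without any leftover constant. The main obstacle is not any single estimate but ensuring consistency of these definitions across the lemmas; the probabilistic content is entirely supplied by Lemmas~\ref{lemma:optimism} and~\ref{lemma:concentration_events_sampling}.
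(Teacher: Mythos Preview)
Your proposal is correct and follows essentially the same route as the paper's own proof: both arguments show that on $E_{R,t}\cap E_{S,t}$ the overestimation event $\{\xb_t^{*\top}\btheta_{t,K_t}>\xb_t^{*\top}\btheta^*\}$ forces every saturated arm to score below $\xb_t^*$, hence $\xb_t\in\cU_t$, and then combine Lemma~\ref{lemma:optimism} with Lemma~\ref{lemma:concentration_events_sampling} via a union bound. Your presentation is in fact slightly cleaner in handling the conditioning, and your remark that the paper's threshold $g_t(\xb)$ in \eqref{def:choice_of_saturated_threshold} must dominate $(g_R(t)+g_S(t))\|\xb\|_{\Vb_t^{-1}}$ is exactly the consistency check the argument needs.
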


The following lemma is a combination of Lemma 10 and Lemma 11 in~\citet{abbasi2011improved}, which is standard in the analysis of linear contextual bandits.
\begin{lemma}\label{lemma:det_sum}
Let $\{\xb_t\}_{t=1}^{\infty}$ be a sequence in $\RR^d$ and $\lambda>0$. Suppose $\|\xb_t\|_2\leq 1$ and $\lambda\geq 1$. Define $\Vb_t=\lambda\Ib+\sum_{s=1}^{t}\xb_t\xb_t^{\top}$. Then we have
\begin{align*}
    \det(\Vb_t)\leq(\lambda+t/d)^d,\quad\text{and }\sum_{t=1}^{T}\|\xb_t\|_{\Vb_{t-1}^{-1}}^2\leq2\log\frac{\det(\Vb_T)}{\det(\lambda\Ib)}\leq 2d\log(1+T/(\lambda d)).
\end{align*}
\end{lemma}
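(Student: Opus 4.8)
The plan is to reproduce the classical "elliptical potential" argument of \citet{abbasi2011improved}: I would first prove the determinant upper bound, then the telescoping determinant identity, and finally chain the two to control the sum of squared Mahalanobis norms.

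For the determinant bound, I would use that $\Vb_t$ is symmetric positive definite, so $\det(\Vb_t)$ equals the product of its eigenvalues $\lambda_1,\dots,\lambda_d>0$, whose sum is $\mathrm{tr}(\Vb_t) = \lambda d + \sum_{s=1}^{t}\|\xb_s\|_2^2 \le \lambda d + t$ by the assumption $\|\xb_s\|_2\le 1$. By AM--GM, the product of nonnegative reals with a fixed sum is largest when they are all equal, so $\det(\Vb_t) \le \big(\tfrac1d\sum_j\lambda_j\big)^d \le (\lambda + t/d)^d$, which is the first displayed inequality.

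Next, setting $\Vb_0=\lambda\Ib$ so that $\Vb_t = \Vb_{t-1}+\xb_t\xb_t^\top$, I would factor $\Vb_t = \Vb_{t-1}^{1/2}\big(\Ib + \Vb_{t-1}^{-1/2}\xb_t\xb_t^\top\Vb_{t-1}^{-1/2}\big)\Vb_{t-1}^{1/2}$ and apply Sylvester's determinant identity to the rank-one term:
\begin{align*}
\det(\Vb_t) = \det(\Vb_{t-1})\big(1 + \xb_t^\top\Vb_{t-1}^{-1}\xb_t\big) = \det(\Vb_{t-1})\big(1 + \|\xb_t\|_{\Vb_{t-1}^{-1}}^2\big).
\end{align*}
Iterating over $t=1,\dots,T$ yields $\det(\Vb_T) = \det(\lambda\Ib)\prod_{t=1}^{T}\big(1+\|\xb_t\|_{\Vb_{t-1}^{-1}}^2\big)$. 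Then, since $\Vb_{t-1}\succeq\lambda\Ib$ and $\lambda\ge1$, I have $\|\xb_t\|_{\Vb_{t-1}^{-1}}^2 \le \|\xb_t\|_2^2/\lambda \le 1$, so the elementary inequality $x\le 2\log(1+x)$, valid on $[0,1]$, gives
\begin{align*}
\sum_{t=1}^{T}\|\xb_t\|_{\Vb_{t-1}^{-1}}^2 \le 2\sum_{t=1}^{T}\log\big(1+\|\xb_t\|_{\Vb_{t-1}^{-1}}^2\big) = 2\log\frac{\det(\Vb_T)}{\det(\lambda\Ib)}.
\end{align*}
Plugging in $\det(\lambda\Ib)=\lambda^d$ together with the determinant bound $\det(\Vb_T)\le(\lambda+T/d)^d$ bounds the right-hand side by $2d\log\big(1+T/(\lambda d)\big)$, which finishes the argument.

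I do not expect any real obstacle — every step is standard. The only points needing care are getting the rank-one determinant update to produce exactly the factor $1+\|\xb_t\|_{\Vb_{t-1}^{-1}}^2$, and checking $\|\xb_t\|_{\Vb_{t-1}^{-1}}^2\le1$ so that $x\le2\log(1+x)$ applies; this is precisely where the hypotheses $\lambda\ge1$ and $\|\xb_t\|_2\le1$ enter.
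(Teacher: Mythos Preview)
Your proposal is correct and is exactly the standard elliptical potential argument of \citet{abbasi2011improved} (their Lemmas~10 and~11) that the paper defers to; the paper does not supply its own proof beyond this citation. Every step you outline---the AM--GM bound on $\det(\Vb_t)$, the rank-one determinant update via Sylvester, the use of $x\le 2\log(1+x)$ on $[0,1]$ enabled by $\lambda\ge1$ and $\|\xb_t\|_2\le1$, and the final chaining---matches that reference precisely.
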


Recall the definition of in \eqref{eq:def_saturated} and \eqref{eq:def_unsaturated}, where $g_t(\xb)$ was not immediately specified. Now based on the knowledge of Lemmas~\ref{lemma:concentration_events_regression} and~\ref{lemma:concentration_events_sampling}, let us choose $g_t(\xb)$ as follows:
\begin{align}\label{def:choice_of_saturated_threshold}
    g_t(\xb)&=\big(R\sqrt{d\log(t^3/\delta)}+10Rd\sqrt{\log T\log(T^3/\delta)}+5/2\big)\|\xb_t\|_{\Vb_t^{-1}}.
\end{align}

\begin{proof}[Proof of Theorem~\ref{thm:regret_lcb}]
By definition, the regret of Algorithm~\ref{alg:ts_lmc_general} is $R(T)=\sum_{t=1}^{T}\Delta_{t}(\xb_t)$. 
At each time step, define $\bar\xb_t=\argmin_{\xb\in\cU_t}g_t(\xb)$ to be the arm in the unsaturated arm set that has the smallest value $\|\xb\|_{\Vb_t^{-1}}$. Then conditional on events $E_{R,t}$ and $E_{S,t}$, we have
\begin{align}\label{eq:bound_gap_with_unsaturatured_arms}
    \Delta_{t}(\xb_t)&=\xb_{t}^{*\top}\btheta^*-\xb_{t}^{\top}\btheta^*\notag\\
    &=\xb_{t}^{*\top}\btheta^*-\bar\xb_{t}^{\top}\btheta^*+\bar\xb_{t}^{\top}\btheta^*-\xb_{t}^{\top}\btheta^*\notag\\
    &\leq g_t(\bar\xb_t)+\big(\bar\xb_t^{\top}\btheta_{t,K_t}+g_t(\bar\xb_t)\big)-\big(\xb_t^{\top}\btheta_{t,K_t}-g_t(\xb_t)\big)\notag\\
    &\leq 2g_t(\bar\xb_t)+g_t(\xb_t),
\end{align}
where the first inequality is due to $\bar\xb_t\in\cU_t$, Lemma~\ref{lemma:concentration_events_regression} and Lemma~\ref{lemma:concentration_events_sampling} respectively, and the last inequality is due to the choice of $\xb_t$ in our algorithm.  We denote $p=1/(2\sqrt{2e\pi})$. Note that $g_t(\xb)>0$. We have
\begin{align}
    \EE[g_t(\xb_t)|\cF_t, E_{R,t}]&= \EE[g_t(\xb_t)|\cF_t, E_{R,t},  \xb_t\in\cU_t]\PP(\xb_t\in\cU_t)\notag\\
    &\qquad
    +\EE[g_t(\xb_t)|\cF_t, E_{R,t}, \xb_t\in\cS_t]\PP(\xb_t\in\cS_t)\notag\\
    &\geq (p-1/t^2)g_t(\bar\xb_t),
\end{align}
where the inequality holds due to the definition of $\bar\xb_t$ and Lemma~\ref{lemma:prob_pull_unsaturated}. Therefore, we have
\begin{align}
    \EE[\Delta_{t}(\xb_t)|\cF_t]&=\EE[\Delta_{t}(\xb_t)|\cF_t, E_{R,t}]\PP(E_{R,t})+\EE[\Delta_{t}(\xb_t)|\cF_t, E_R^c(t)]\PP(E_R^c(t))\notag\\
    &\leq \EE[2g_t(\bar\xb_t)+g_t(\xb_t)|\cF_t, E_{R,t}]\PP(E_{R,t})+\PP(E^c_R(t))\notag\\
    &\leq   \bigg(\frac{2}{p-1/t^2}+1\bigg)\EE[g_t(\xb_t)|\cF_t, E_{R,t}]+\frac{\delta}{t^2}\notag\\
    &\leq  \frac{c}{p}\EE[g_t(\xb_t)|\cF_t, E_{R,t}]+\frac{\delta}{t^2},
\end{align}
where we assume $\|\Delta_t(\xb)\|\leq 1$ for any $\xb\in\cX$ since both $\xb$ and $\btheta^*$ are bounded, and $c>0$ is a constant. Define $Y_t=\sum_{s=1}^{t}(\Delta_s(\xb_s)-c/p g_s(\xb_s)-\delta/s^2)$ and $Y_0=0$. Then we obtain $\EE[Y_{t}-Y_{t-1}|\cF_{t}]\leq0$, which implies that $\{Y_t\}_{t=0,1,\ldots}$ is a super martingale, corresponding a filtration $\cF_t$. Note that
\begin{align}
    |Y_t-Y_{t-1}|=|\Delta_t(\xb_t)-c/p g_t(\xb_t)-\delta/t^2|\leq  3c/p g_t(\xb_t).
\end{align}
Let $\epsilon^2=2\log (1/\delta)\sum_{t=1}^T9(c/p)^2 g_t(\xb_t)^2$. By Azuma-Hoeffding inequality in Lemma~\ref{lemma:azuma_martingale}, we know with probability at least $1-\delta$ it holds that
\begin{align*}
    Y_T=\sum_{t=1}^{T}(\Delta_t(\xb_t)-c/p g_t(\xb_t)-\delta/t^2)&\leq\sqrt{2\log (1/\delta)\sum_{t=1}^{T}9(c/p)^2 g_t(\xb_t)^2},
\end{align*}
which immediately implies with probability at least $1-\delta$ that
\begin{align*}
    R(T)\leq (1+3\sqrt{2\log(1/\delta)})c/p\sum_{t=1}^{T}g_t(\xb_t)+\sum_{t=1}^{T}\delta/t^2.
\end{align*}
Recall the definition of $g_t(\xb_t)$ in \eqref{def:choice_of_saturated_threshold}, we further have  with probability at least $1-\delta$ it holds
\begin{align*}
    R(T)&\leq (1+3\sqrt{2\log(1/\delta)})c/p\sum_{t=1}^{T}\big(R\sqrt{d\log(t^3/\delta)}+10Rd\sqrt{\log T\log(T^3/\delta)}+5/2\big)\|\xb_t\|_{\Vb_t^{-1}}+\frac{\pi^2\delta}{6}\notag\\
    &\leq C_0Rd\sqrt{dT\log T\log(T^3/\delta)\log(1/\delta)\log(1+T/(\lambda d))}\\
    &\leq C_0Rd\log(1/\delta)\sqrt{dT\log^3(1+T/(\lambda d))},
\end{align*}
where in the first inequality we used the fact that $\sum_{t=1}^{\infty}1/t^2=\pi^2/6$, the second inequality is due to Lemmas~\ref{lemma:optimism} and~\ref{lemma:det_sum} and Cauchy inequality, and $C_0$ is a constant independent of the problem.
\end{proof}

\section{Proof of Technical Lemmas}

\subsection{Proof of Proposition~\ref{lemma:distribution_of_theta_t_k}}
\begin{proof}[Proof of Proposition~\ref{lemma:distribution_of_theta_t_k}]
By the updating rule of $\btheta_{t,K_t}$ in Algorithm~\ref{alg:ts_lmc_general}, we have
\begin{align}\label{eq:expand_theta_tk_one_epoch}
    \btheta_{t,K_t}&=\btheta_{t,K_t-1}-2\eta_{t}(\Vb_t\btheta_{t,K_t-1}-\bbb_t)+\sqrt{2\eta_{t}\beta_t^{-1}}\epsilon_{t,K_t}\notag\\
    &=(\Ib-2\eta_{t}\Vb_t)\btheta_{t,K_t-1}+2\eta_{t}\bbb_t+\sqrt{2\eta_{t}\beta_t^{-1}}\epsilon_{t,K_t}\notag\\
    &=(\Ib-2\eta_{t}\Vb_t)^{K_t}\btheta_{t,0}+\sum_{l=0}^{K_t-1}(\Ib-2\eta_{t}\Vb_t)^l \big(2\eta_{t}\bbb_t+\sqrt{2\eta_{t}\beta_t^{-1}}\epsilon_{t,k-l}\big)\notag\\
    &=(\Ib-2\eta_{t}\Vb_t)^{K_t}\btheta_{t,0}+2\eta_{t}\sum_{l=0}^{K_t-1}(\Ib-2\eta_t\Vb_t)^l \bbb_t+\sqrt{2\eta_{t}\beta_t^{-1}}\sum_{l=0}^{K_t-1}(\Ib-2\eta_{t}\Vb_t)^l\epsilon_{t,K_t-l}.
\end{align}
Recall that in the inner loop of Algorithm~\ref{alg:ts_lmc_general}, we use a warm start from previous iteration, namely $\btheta_{t,0}=\btheta_{t-1,k_{t-1}}$. To simplify the notation, we denote $\Ab_i=\Ib-2\eta_{i}\Vb_{i}$ for $i=1,2,\ldots$. Note that $\Ab_i$ is symmetric and satisfies $\Ib\succ\Ib-2\eta_i\Vb_i\succ\zero$ if the step size is chosen such that $0<\eta_i<1/(2\lambda_{\max}(\Vb_i))$. Therefore, we further have
\begin{align*}
    \btheta_{t,K_t}
    &=\Ab_t^{K_t}\btheta_{t-1,k_{t-1}}+\big(\Ib-\Ab_t^{K_t}\big)\hbtheta_t+\sqrt{2\eta_{t}\beta_t^{-1}}\sum_{l=0}^{K_t-1}\Ab_t^{l}\epsilon_{t,K_t-l}\\
    &=\Ab_t^{K_t}\ldots\Ab_1^{k_1}\btheta_{1,0}+\sum_{i=1}^{t}\Ab_t^{K_t}\ldots\Ab_{i+1}^{k_{i+1}}\big(\Ib-\Ab_i^{K_t}\big)\hbtheta_{i}+\sum_{i=1}^{t}\sqrt{\frac{2\eta_i}{\beta_i}}\Ab_t^{K_t}\ldots\Ab_{i+1}^{k_{i+1}}\bigg(\sum_{l=0}^{K_t-1}\Ab_i^{l}\bepsilon_{i,K_i-l}\bigg),
\end{align*}
where in the first equality we used
$\Ib+\Ab+\ldots+\Ab^{n-1}=(\Ib-\Ab^n)(\Ib-\Ab)^{-1}$ and $\Vb_t^{-1}\bbb_t=\hbtheta_t$. Conditional on $\cF_{t-1}$ and the initialization $\btheta_{1,0}$, we know that $\btheta_{t,K_t}$ follows the Gaussian distribution $\cN(\bmu_{t,K_t},\bSigma_{t,K_t})$. Based on the property of multivariate Gaussian distribution, if $\bepsilon\sim\cN(\zero,\Ib_{d\times d})$, then we have $\Ab\bepsilon+\bmu\sim\cN(\bmu,\Ab\Ab^{\top})$ for any $\Ab\in\RR^{d\times d}$ and $\bmu\in\RR^d$. Then the mean vector is defined as
\begin{align}
    \bmu_{t,K_t}&=\Ab_t^{K_t}\ldots\Ab_1^{k_1}\btheta_{1,0}+\sum_{i=1}^{t}\Ab_t^{K_t}\ldots\Ab_{i+1}^{k_{i+1}}\big(\Ib-\Ab_{i}^{K_t}\big)\hbtheta_{i}.\label{eq:mu_theta_tk}
\end{align}
Similarly, the covariance matrix is defined as \begin{align}
    \bSigma_{t,K_t}&=\sum_{i=1}^{t}\frac{2\eta_i}{\beta_i}\Ab_{t}^{K_t}\ldots\Ab_{i+1}^{k_{i+1}}\sum_{l=0}^{K_i-1}\Ab_i^{2l}\Ab_{i+1}^{k_{i+1}}\ldots\Ab_{t}^{K_t}\notag\\
    &=\sum_{i=1}^{t}\frac{2\eta_i}{\beta_i}\Ab_{t}^{K_t}\ldots\Ab_{i+1}^{k_{i+1}}\big(\Ib-\Ab_i^{2K_i}\big)(\Ib-\Ab_i^2)^{-1}\Ab_{i+1}^{k_{i+1}}\ldots\Ab_{t}^{K_t}\label{eq:sigma_theta_tk}  \\
    &=\sum_{i=1}^{t}\frac{1}{\beta_i}\Ab_{t}^{K_t}\ldots\Ab_{i+1}^{k_{i+1}}\big(\Ib-\Ab_i^{2K_i}\big)\Vb_i(\Ib+\Ab_i)^{-1}\Ab_{i+1}^{k_{i+1}}\ldots\Ab_{t}^{K_t},
\end{align}
where we used the fact that $\Ib+\Ab+\Ab^2+\ldots+\Ab^{n-1}=(\Ib-\Ab^n)(\Ib-\Ab)^{-1}=(\Ib-\Ab)^{-1}(\Ib-\Ab^n)$ if $\Ab$ is symmetric.
\end{proof}

\subsection{Proof of Lemma~\ref{lemma:subgaussian-tail}}
\begin{proof}[Proof of Lemma~\ref{lemma:subgaussian-tail}]
The first statement can be proved by standard concentration techniques ~\citep{vershynin2010introduction}. Note that
\begin{align*}
    \PP(E_{R,t}^c)&=\PP\big(\{|\xb^{\top}\hbtheta_t-\xb^{\top}\btheta^*|\leq g_R(t)\|\xb\|_{\Vb_t^{-1}}, \forall\xb\in\cX_t\}\cup\{|\xi_t|\leq R\sqrt{1+2\log t}\}\big)\\
    &\leq\PP\big(\{|\xb^{\top}\hbtheta_t-\xb^{\top}\btheta^*|\leq g_R(t)\|\xb\|_{\Vb_t^{-1}}, \forall\xb\in\cX_t\}\big)+\PP\big(\{|\xi_t|\leq R\sqrt{1+2\log t}\}\big).
\end{align*}
 Substituting the first statement and the result in Lemma~\ref{lemma:concentration_events_regression} into the above inequality, we know that $\PP(E_{R,t})\geq 1-2/t^2$.
\end{proof}

\subsection{Proof of Lemma~\ref{lemma:concentration_events_sampling}}
\begin{proof}[Proof of Lemma~\ref{lemma:concentration_events_sampling}]
For any $\xb\in\RR^d$, we can decompose $\xb^{\top}(\btheta_{t,k}-\hbtheta_t)$ as follows.
\begin{align}\label{eq:thetak_htheta_t_decomp}
    \xb^{\top}(\btheta_{t,k}-\hbtheta_t)=\xb^{\top}(\btheta_{t,k}-\bmu_{t,k})+\xb^{\top}(\bmu_{t,k}-\hbtheta_t).
\end{align}
\noindent\textbf{Bounding term $\xb^{\top}(\btheta_{t,k}-\bmu_{t,k})$:} we have
\begin{align*}
    \big|\xb_{}^{\top}\btheta_{t,k}-\xb_{}^{\top}\bmu_{t,k}\big|\leq\big\|\xb^{\top}\bSigma_{t,K_t}^{1/2}\big\|_2\big\|\bSigma_{t,K_t}^{-1/2}(\btheta_{t,k}-\bmu_{t,k})\big\|_2.
\end{align*}
According to Proposition~\ref{lemma:distribution_of_theta_t_k},  $\bSigma_{t,K_t}^{-1/2}(\btheta_{t,k}-\bmu_{t,k})$ is a standard Gaussian random vector in $\RR^d$. Therefore, we have
\begin{align}\label{eq:gaussian_vector_tail}
    \PP\Big(\big\|\bSigma_{t,K_t}^{-1/2}(\btheta_{t,k}-\bmu_{t,k})\big\|_2\geq \sqrt{4d\log t}\Big)\geq \frac{1}{t^2}.
\end{align}
Note that when we choose $\eta_i\leq 1/(4\lambda_{\max}(\Vb_i))$ for all $i$,  we have
\begin{align}\label{eq:def_contraction_matrix_step_size}
\begin{split}
    \frac{1}{2}\Ib&\prec\Ab_i=\Ib-2\eta_i\Vb_i\prec (1-2\eta_i\lambda_{\min}(\Vb_i))\Ib,\\
    \frac{3}{2}\Ib&\prec \Ib+\Ab_i=2\Ib-2\eta_i\Vb_i \prec2\Ib.
\end{split}
\end{align}
By definition $\Ab_i=\Ib-2\eta_i\Vb_i$ and $\Vb_i$ is symmetric. Therefore, $\Ab_i$ and $\Vb_i^{-1}$ commute, which implies
\begin{align}
    \Ab_i^{2K_i}\Vb_i^{-1}&=(\Ib-2\eta_i\Vb_i)\ldots(\Ib-2\eta_i\Vb_i)(\Ib-2\eta_i\Vb_i)\Vb_i^{-1}\notag\\
    &=(\Ib-2\eta_i\Vb_i)\ldots(\Ib-2\eta_i\Vb_i)\Vb_i^{-1}(\Ib-2\eta_i\Vb_i)\notag\\
    &=\Ab_i^{K_i}\Vb_i^{-1}\Ab_{i}^{K_i}.
\end{align}
Recall the definition of $\bSigma_{t,K_t}$ in Proposition~\ref{lemma:distribution_of_theta_t_k},  we have
\begin{align}
     \xb^{\top}\bSigma_{t,K_t}\xb&=\sum_{i=1}^{t}\frac{1}{\beta_i}\xb^{\top}\Ab_{t}^{K_t}\ldots\Ab_{i+1}^{k_{i+1}}\big(\Ib-\Ab_i^{2K_i}\big)\Vb_i^{-1}(\Ib+\Ab_i)^{-1}\Ab_{i+1}^{k_{i+1}}\ldots\Ab_{t}^{K_t}\xb\notag\\
     &\leq \sum_{i=1}^{t}\frac{2}{3\beta_i}\xb^{\top}\Ab_{t}^{K_t}\ldots\Ab_{i+1}^{k_{i+1}}\big(\Vb_i^{-1}-\Ab_i^{K_i}\Vb_i^{-1}\Ab_i^{K_i}\big)\Ab_{i+1}^{k_{i+1}}\ldots\Ab_{t}^{K_t}\xb^{}\notag\\
     &=\frac{2}{3\beta_T}\sum_{i=1}^{t-1}\xb^{\top}\Ab_{t}^{K_t}\ldots\Ab_{i+1}^{k_{i+1}}\big(\Vb_{i}^{-1}-\Vb_{i+1}^{-1}\big)\Ab_{i+1}^{k_{i+1}}\ldots\Ab_{t}^{K_t}\xb^{}\notag\\
     &\qquad-\frac{2}{3\beta_T}\xb^{\top}\Ab_{t}^{K_t}\ldots\Ab_{1}^{k_{1}}\Vb_1^{-1}\Ab_{1}^{k_1}\ldots\Ab_{t}^{K_t}\xb^{} +\frac{2}{3\beta_T}\xb^{\top}\Vb_t^{-1}\xb,
\end{align}
where the fist inequality is due to \eqref{eq:def_contraction_matrix_step_size}, and the last equality is due to the choice of $1/\beta_i=1/\beta_T$ for all $i$. 
By the definition in \eqref{eq:V_b_update} and Sherman-Morrison formula, we have
\begin{align}
    \Vb_{i}^{-1}-\Vb_{i+1}^{-1}=\Vb_{i}^{-1}-\big(\Vb_i+\xb_{i}\xb_i^{\top}\big)^{-1}=\frac{\Vb_i^{-1}\xb_{i}\xb_i^{\top}\Vb_i^{-1}}{1+\|\xb_i\|_{\Vb_i^{-1}}^2},
\end{align}
which immediately implies
\begin{align}
    \xb^{\top}\Ab_{t}^{K_t}\ldots\Ab_{i+1}^{k_{i+1}}\big(\Vb_{i}^{-1}-\Vb_{i+1}^{-1}\big)\Ab_{i+1}^{k_{i+1}}\ldots\Ab_{t}^{K_t}\xb^{}
    &=\xb^{\top}\Ab_{t}^{K_t}\ldots\Ab_{i+1}^{k_{i+1}}\frac{\Vb_i^{-1}\xb_{i}\xb_i^{\top}\Vb_i^{-1}}{1+\|\xb_i\|_{\Vb_i^{-1}}^2}\Ab_{i+1}^{k_{i+1}}\ldots\Ab_{t}^{K_t}\xb^{}\notag\\
    &\leq \big(\xb^{\top}\Ab_{t}^{K_t}\ldots\Ab_{i+1}^{k_{i+1}}\Vb_i^{-1}\xb_{i}\big)^2\notag\\
    &\leq\|\Ab_{t}^{K_t}\ldots\Ab_{i+1}^{k_{i+1}}\Vb_i^{-1/2}\xb\|_2^2\cdot\|\Vb_i^{-1/2}\xb_i\|_2^2\notag\\
    &\leq \prod_{j=i+1}^{t}(1-2\eta_j\lambda_{\min}(\Vb_j))^{2K_j}\|\xb_i\|_{\Vb_i^{-1}}^2\|\xb\|_{\Vb_i^{-1}}^2,
\end{align}
where we used $0<1/i\leq\|\xb\|_{\Vb_i^{-1}}\leq1$ and the last inequality is due to \eqref{eq:def_contraction_matrix_step_size}. Therefore, we have
\begin{align}
    \xb^{\top}\bSigma_{t,k}\xb &\leq  \frac{2}{3\beta_T}\|\xb\|_{\Vb_t^{-1}}^2
    +\frac{2}{3\beta_T}\sum_{i=1}^{t-1}\prod_{j=i+1}^{t}(1-2\eta_j\lambda_{\min}(\Vb_j))^{2K_j}\|\xb_i\|_{\Vb_i^{-1}}^2\|\xb\|_{\Vb_i^{-1}}^2,
\end{align}
which immediately implies
\begin{align*}
    \|\xb\|_{\bSigma_{t,K_t}}\leq \sqrt{\frac{2}{3\beta_T}}\bigg(\|\xb\|_{\Vb_t^{-1}}+\sum_{i=1}^{t-1}\prod_{j=i+1}^{t}(1-2\eta_j\lambda_{\min}(\Vb_j))^{K_j}\|\xb_i\|_{\Vb_i^{-1}}\|\xb\|_{\Vb_i^{-1}}\bigg):=\hat g_t(\xb).
\end{align*}
Therefore, it holds that
\begin{align}\label{eq:thetak_htheta_t_decomp_term1}
    &\PP\big(\big|\xb_{}^{\top}\btheta_{t,k}-\xb_{}^{\top}\bmu_{t,K_t}\big|\geq  2\hat g_t(\xb)\sqrt{d\log t}\big)\notag\\
    &\leq\PP\big(\big|\xb_{}^{\top}\btheta_{t,k}-\xb_{}^{\top}\bmu_{t,K_t}\big|\geq 2\sqrt{d\log t}\|\xb\|_{\bSigma_{t,K_t}}\big)\notag\\
    &\leq\PP\big(\big\|\xb^{\top}\bSigma_{t,K_t}^{1/2}\big\|_2\big\|\bSigma_{t,K_t}^{-1/2}(\btheta_{t,k}-\bmu_{t,k})\big\|_2\geq 2\sqrt{d\log t}\|\xb\|_{\bSigma_{t,K_t}}\big) \notag\\
    &\leq\frac{1}{t^2},
\end{align}
where the last inequality follows from \eqref{eq:gaussian_vector_tail}.

\noindent\textbf{Bounding term $\xb^{\top}(\bmu_{t,k}-\hbtheta_t)$:}
recall  $\bmu_{t,k}$ defined in  \eqref{eq:mu_theta_tk}, we have
\begin{align}
    \bmu_{t,K_t}&=\Ab_t^{K_t}\ldots\Ab_1^{k_1}\btheta_{1,0}+\sum_{i=1}^{t}\Ab_t^{K_t}\ldots\Ab_{i+1}^{k_{i+1}}\big(\Ib-\Ab_{i}^{K_t}\big)\hbtheta_{i}\notag\\
    &=\Ab_t^{K_t}\ldots\Ab_1^{k_1}\btheta_{1,0}+\sum_{i=1}^{t-1}\Ab_t^{K_t}\ldots\Ab_{i+1}^{k_{i+1}}\big(\hbtheta_{i}-\hbtheta_{i+1}\big)-\Ab_t^{K_t}\ldots\Ab_1^{k_1}\hbtheta_1+\hbtheta_t\notag\\
    &=\Ab_t^{K_t}\ldots\Ab_1^{k_1}(\btheta_{1,0}-\hbtheta_1)+\sum_{i=1}^{t-1}\Ab_t^{K_t}\ldots\Ab_{i+1}^{k_{i+1}}\big(\hbtheta_{i}-\hbtheta_{i+1})+\hbtheta_t,
\end{align}
which immediately implies
\begin{align}
    \xb^{\top}(\bmu_{t,K_t}-\hbtheta_t)&=\underbrace{\xb^{\top}\Ab_t^{K_t}\ldots\Ab_1^{k_1}\big(\btheta_{1,0}-\hbtheta_1\big)}_{I_1}+\underbrace{\xb^{\top}\sum_{i=1}^{t-1}\Ab_t^{K_t}\ldots\Ab_{i+1}^{k_{i+1}}\big(\hbtheta_{i}-\hbtheta_{i+1})}_{I_2}.
\end{align}
For term $I_1$, recall the inequalities in \eqref{eq:def_contraction_matrix_step_size}, we have
\begin{align}
   \xb^{\top} \Ab_t^{K_t}\ldots\Ab_1^{k_1}\big(\btheta_{1,0}-\hbtheta_1\big)
    &\leq\prod_{i=1}^{t}(1-2\eta_i\lambda_{\min}(\Vb_i))^{K_i}\|\xb\|_2\|\btheta_{1,0}-\hbtheta_1\|_2.
\end{align}
Recall that in Algorithm~\ref{alg:ts_lmc_general}, we choose $\btheta_{1,0}=\zero$ and $\hbtheta_1=\Vb_1^{-1}\bbb_1=\zero$. We have $I_1=0$.
Now let's look at term $I_2$. Note that by Sherman–Morrison formula we have
\begin{align*}
    \hbtheta_{t+1}-\hbtheta_{t}&=\Vb_{t+1}^{-1}\bbb_{t+1}-\Vb_{t}^{-1}\bbb_{t}\\
    &=\frac{\Vb_{t}^{-1}\xb_{t}\big(r_t-\xb_{t}^{\top}\hbtheta_{t}\big)}{1+\xb_{t}^{\top}\Vb_{t}^{-1}\xb_{t}}\\
    &=\frac{\Vb_{t}^{-1}\xb_{t}\big(\xb_t^{\top}\btheta^*-\xb_{t}^{\top}\hbtheta_{t}\big)}{1+\xb_{t}^{\top}\Vb_{t}^{-1}\xb_{t}}+\frac{\Vb_{t}^{-1}\xb_{t}\xi_t}{1+\xb_{t}^{\top}\Vb_{t}^{-1}\xb_{t}}.
\end{align*}
Therefore, $I_2$ can be bounded as follows.
\begin{align}
   I_2&\leq \Bigg|\xb^{\top}\sum_{i=1}^{t-1}\Ab_t^{K_t}\ldots\Ab_{i+1}^{k_{i+1}}\Vb_i^{-1}\xb_i\frac{\xi_i+\xb_i^{\top}\big(\btheta^*-\hbtheta_t\big)}{1+\xb_i^{\top}\Vb_i^{-1}\xb_i}\Bigg|\notag\\
   &\leq \sum_{i=1}^{t-1}\big|\xb^{\top}\Ab_t^{K_t}\ldots\Ab_{i+1}^{k_{i+1}}\Vb_i^{-1}\xb_i\big|\Bigg|\frac{\xi_i+\xb_i^{\top}\big(\btheta^*-\hbtheta_t\big)}{1+\xb_i^{\top}\Vb_i^{-1}\xb_i}\Bigg|\notag\\
   &\leq \sum_{i=1}^{t-1}\big|\xb^{\top}\Ab_t^{K_t}\ldots\Ab_{i+1}^{k_{i+1}}\Vb_i^{-1}\xb_i\big| \big[|\xi_i|+\big|\xb_i^{\top}\big(\btheta^*-\hbtheta_t\big)\big|\big]\notag\\
   &\leq \sum_{i=1}^{t-1}\|\Ab_t^{K_t}\ldots\Ab_{i+1}^{k_{i+1}}\xb\|_{\Vb_i^{-1}}\|\xb_i\|_{\Vb_i^{-1}}\big| \big[|\xi_i|+\big|\xb_i^{\top}\big(\btheta^*-\hbtheta_t\big)\big|\big]\notag\\
   &\leq \sum_{i=1}^{t-1}\prod_{j=i+1}^{t}(1-2\eta_{j}\lambda_{\min}(\Vb_{j}))^{K_j}\|\xb\|_{\Vb_i^{-1}}\|\xb_i\|_{\Vb_i^{-1}}\big| \big(|\xi_i|+\big|\xb_i^{\top}\big(\btheta^*-\hbtheta_t\big)\big|\big).
\end{align}
Note that under event $E_{R,t}$, by Lemma~\ref{lemma:subgaussian-tail}, we have $\big|\xb_i^{\top}\big(\btheta^*-\hbtheta_t\big)\big|\leq g_R(t)\|\xb_i\|_{\Vb_t^{-1}}\leq g_R(t)$ and that $|\xi_i|\leq R\sqrt{1+2\log t}$. Combining the above results, we have
\begin{align}\label{eq:thetak_htheta_t_decomp_term2}
    |\xb^{\top}(\bmu_{t,K_t}-\hbtheta_t)|&\leq 
    \sum_{i=1}^{t-1}\prod_{j=i+1}^{t}(1-2\eta_{j}\lambda_{\min}(\Vb_{j}))^{K_j}\|\xb_i\|_{\Vb_i^{-1}}\|\xb\|_{\Vb_i^{-1}} \big(R\sqrt{1+2\log t}+g_{R}(i)\big).
\end{align}
Substituting \eqref{eq:thetak_htheta_t_decomp_term1} and \eqref{eq:thetak_htheta_t_decomp_term2} into \eqref{eq:thetak_htheta_t_decomp}, we have with probability at least $1-1/t^2$ that
\begin{align}\label{eq:thetak_htheta_errorbound_original}
    |\xb^{\top}(\btheta_{t,K_t}-\hbtheta_t)|&\leq  
    \sum_{i=1}^{t-1}\prod_{j=i+1}^{t}(1-2\eta_{j}\lambda_{\min}(\Vb_{j}))^{K_j} \big(R\sqrt{1+2\log t}+g_{R}(i)\big)\|\xb_i\|_{\Vb_i^{-1}}\|\xb\|_{\Vb_i^{-1}}\notag\\
    &\qquad+2\sqrt{\frac{2d\log t}{3\beta_T}}\bigg(\|\xb\|_{\Vb_t^{-1}}+\sum_{i=1}^{t-1}\prod_{j=i+1}^{t}(1-2\eta_j\lambda_{\min}(\Vb_j))^{K_j}\|\xb_i\|_{\Vb_i^{-1}}\|\xb\|_{\Vb_i^{-1}}\bigg)\notag\\
    &:=Q,
\end{align}
where we denote the upper bound as $Q$. For any $j\in[t]$, recall that we require  $\eta_j<1/(2\lambda_{\max}(\Vb_j))$. Now we choose $\eta_j=1/(4\lambda_{\max}(\Vb_j))$. Then it holds that
\begin{align*}
    (1-2\eta_j\lambda_{\min}(\Vb_j))^{K_j}=(1-1/(2\kappa_j))^{K_j},
\end{align*}
where $\kappa_j=\lambda_{\max}(\Vb_j)/\lambda_{\min}(\Vb_j)$. In order to ensure the above quantity be smaller than $\epsilon$, we need
\begin{align}
    K_j\geq \frac{\log (1/\epsilon)}{\log\frac{1}{1-1/(2\kappa_j)}}.
\end{align}
Note that $e^{-x}>1-x$ for any $0<x<1$. Since $1/(2\kappa_j)\leq 1/2$, we have $\log(1/(1-1/(2\kappa_j)))\geq 1/(2\kappa_j)$. Therefore, it suffices to set $K_j\geq 2\kappa_j\log(1/\epsilon)$ to ensure $(1-1/(2\kappa_j))^{K_j}\leq \epsilon$. Recall the definition $g_R(t)=R\sqrt{d\log(t^3/\delta)}+1$. Then we have 
\begin{align*}
    R\sqrt{1+2\log t}+g_{R}(i)&\leq R\sqrt{2\log t}+R\sqrt{d\log(t^3/\delta)}+R+1\\
    &\leq 3R\sqrt{2d\log(t^3/\delta)}.
\end{align*}
By setting $\epsilon=(3R\sqrt{2dt\log(t^3/\delta)})^{-1}$, we obtain 
\begin{align}
    Q&\leq\sum_{i=1}^{t-1}\epsilon^{t-i}3R\sqrt{2d\log(t^3/\delta)}\|\xb\|_{2}+2\sqrt{\frac{2d\log t}{3\beta_T}}\bigg(\|\xb\|_{\Vb_t^{-1}}+\sum_{i=1}^{t-1}\epsilon^{t-i}\|\xb\|_{2}\bigg)\notag\\
    &\leq \sum_{i=1}^{t-1}\epsilon^{t-1-i}\|\xb\|_{\Vb_t^{-1}}+ 2\sqrt{\frac{2d\log t}{3\beta_T}}\bigg(\|\xb\|_{\Vb_t^{-1}}+\sum_{i=1}^{t-1}\epsilon^{t-1-i}\|\xb\|_{\Vb_t^{-1}}\bigg)\notag \\
    &\leq\bigg(5\sqrt{\frac{2d\log t}{3\beta_T}}+3/2\bigg)\|\xb\|_{\Vb_t^{-1}},
\end{align}
where the second inequality is due to $\|\xb\|_{\Vb_t^{-1}}\geq 1/\sqrt{t}\|\xb\|_2$, and the third inequality is due to $\sum_{i=1}^{t-1}\epsilon^{t-1-i}=\sum_{i=0}^{t-2}\epsilon^i<1/(1-\epsilon)\leq 3/2$. Therefore, it holds that 
\begin{align*}
    \PP(E_{S,t})
    &\geq
    \PP\big(\big|\xb^{\top}(\btheta_{t,k}-\hbtheta_t)\big|\leq Q\big)\\
    &\geq 1-1/t^2,
\end{align*}
where the last inequality is due to \eqref{eq:thetak_htheta_errorbound_original}.
\end{proof}

\subsection{Proof of Lemma~\ref{lemma:optimism}}
\begin{proof}[Proof of Lemma~\ref{lemma:optimism}]
Based on the mean and covariance matrix defined in \eqref{eq:mu_theta_tk} and \eqref{eq:sigma_theta_tk}, we have that $\xb_{t}^{*\top}\btheta_{t,k}$ follows the distribution $\cN(\xb_{t}^{*\top}\bmu_{t,k},\xb_{t}^{*\top}\bSigma_{t,k}\xb_{t}^*)$. By Lemma~\ref{lem:gaussian-tail}, we have
\begin{align}
P\big(\xb_{t}^{*\top}\btheta_{t,k}>\xb_{t}^{*\top}\btheta^*\big)&=P\Bigg(\frac{\xb_{t}^{*\top}\btheta_{t,k}-\xb_{t}^{*\top}\bmu_{t,k}}{\sqrt{\xb_{t}^{*\top}\bSigma_{t,k}\xb_{t}^{*}}}>\frac{\xb_{t}^{*\top}\btheta^*-\xb_{t}^{*\top}\bmu_{t,k}}{\sqrt{\xb_{t}^{*\top}\bSigma_{t,k}\xb_{t}^{*}}}\Bigg)\notag\\
&\geq\frac{1}{2\sqrt{2\pi}}e^{-Z_t^2/2},
\end{align}
where the inequality holds when $|Z_t|<1$ and we define $Z_t=\big(\xb_{t}^{*\top}\btheta^*-\xb_{t}^{*\top}\bmu_{t,k}\big)/\sqrt{\xb_t^{*\top}\bSigma_{t,k}\xb_t^{*}}$. In the rest of the proof, we will show that $|Z_t|<1$ under event $E_{R,t}$. First note that by triangle inequality we have
\begin{align}
|\xb_{t}^{*\top}\btheta^*-\xb_{t}^{*\top}\bmu_{t,k}|&\leq |\xb_{t}^{*\top}\btheta^*-\xb_{t}^{*\top}\hbtheta_t|+|\xb_{t}^{*\top}\hbtheta_t-\xb_{t}^{*\top}\bmu_{t,k}|\notag\\
&\leq \sum_{i=1}^{t-1}\prod_{j=i+1}^{t}(1-2\eta_{j}\lambda_{\min}(\Vb_{j}))^{K_j}\|\xb_i\|_{\Vb_i^{-1}}\|\xb_t^{*}\|_{\Vb_i^{-1}} \big(R\sqrt{1+2\log t}+g_{R}(i)\big)\notag\\
&\qquad+g_R(t)\|\xb_{t}^{*}\|_{\Vb_t^{-1}},
\end{align}
where in the second inequality we used the conclusion in Lemma~\ref{lemma:subgaussian-tail} since event $E_{R,t}$ holds and \eqref{eq:thetak_htheta_t_decomp_term2}. When we choose $K_j\geq \kappa_j\log(3R\sqrt{2dt\log(t^3/\delta)})$, we further have
\begin{align}\label{eq:optimisim_numerator}
    |\xb_{t}^{*\top}\btheta^*-\xb_{t}^{*\top}\bmu_{t,k}|&\leq\sum_{i=1}^{t-1} t^{-(t-i)}\|\xb_t^*\|_2\big(R\sqrt{1+2\log t}+g_{R}(t)\big)+g_R(t)\|\xb_{t}^{*}\|_{\Vb_t^{-1}}\notag\\
    &\leq \frac{1}{3R\sqrt{2dt\log(t^3/\delta)}}\|\xb_t^*\|_2\big(R\sqrt{1+2\log t}+g_{R}(t)\big)+g_R(t)\|\xb_{t}^{*}\|_{\Vb_t^{-1}}\notag\\
    &\leq (R\sqrt{d\log(t^3/\delta)}+2)\|\xb_{t}^{*}\|_{\Vb_t^{-1}},
\end{align}
where in the last inequality we used the fact that $g_R(t)=R\sqrt{d\log(t^3/\delta)}+1$.

On the other hand, recall the definition of $\bSigma_{t,K_t}$ in Proposition~\ref{lemma:distribution_of_theta_t_k}. Following similar proof as in the previous lemma, we have 
\begin{align}
    \xb_t^{*\top}\bSigma_{t,k}\xb_t^{*}&=\sum_{i=1}^{t}\frac{1}{\beta_i}\xb_t^{*\top}\Ab_{t}^{K_t}\ldots\Ab_{i+1}^{k_{i+1}}\big(\Ib-\Ab_i^{2K_i}\big)\Vb_i^{-1}(\Ib+\Ab_i)^{-1}\Ab_{i+1}^{k_{i+1}}\ldots\Ab_{t}^{K_t}\xb_t^{*}\notag\\
    &\geq \sum_{i=1}^{t}\frac{1}{2\beta_i}\xb_t^{*\top}\Ab_{t}^{K_t}\ldots\Ab_{i+1}^{k_{i+1}}\big(\Ib-\Ab_i^{2K_i}\big)\Vb_i^{-1}\Ab_{i+1}^{k_{i+1}}\ldots\Ab_{t}^{K_t}\xb_t^{*}.
\end{align}
Note that by definition $\Ab_i=\Ib-2\eta_i\Vb_i$ and $\Vb_i$ is symmetric. Therefore, $\Ab_i$ and $\Vb_i^{-1}$ commute, and it holds that
\begin{align}
    \Ab_i^{2K_i}\Vb_i^{-1}&=(\Ib-2\eta_i\Vb_i)\ldots(\Ib-2\eta_i\Vb_i)(\Ib-2\eta_i\Vb_i)\Vb_i^{-1}\notag\\
    &=(\Ib-2\eta_i\Vb_i)\ldots(\Ib-2\eta_i\Vb_i)\Vb_i^{-1}(\Ib-2\eta_i\Vb_i)\notag\\
    &=\Ab_i^{K_i}\Vb_i^{-1}\Ab_{i}^{K_i}.
\end{align}
Hence we have
\begin{align}
     \xb_t^{*\top}\bSigma_{t,k}\xb_t^{*}&\geq \sum_{i=1}^{t}\frac{1}{2\beta_i}\xb_t^{*\top}\Ab_{t}^{K_t}\ldots\Ab_{i+1}^{k_{i+1}}\big(\Vb_i^{-1}-\Ab_i^{K_i}\Vb_i^{-1}\Ab_i^{K_i}\big)\Ab_{i+1}^{k_{i+1}}\ldots\Ab_{t}^{K_t}\xb_t^{*}\notag\\
     &=\frac{1}{2\beta_T}\sum_{i=1}^{t-1}\xb_t^{*\top}\Ab_{t}^{K_t}\ldots\Ab_{i+1}^{k_{i+1}}\big(\Vb_{i}^{-1}-\Vb_{i+1}^{-1}\big)\Ab_{i+1}^{k_{i+1}}\ldots\Ab_{t}^{K_t}\xb_t^{*}\notag\\
     &\qquad-\frac{1}{2\beta_T}\xb_t^{*\top}\Ab_{t}^{K_t}\ldots\Ab_{1}^{k_{1}}\Vb_1^{-1}\Ab_{1}^{k_1}\ldots\Ab_{t}^{K_t}\xb_t^{*} +\frac{1}{2\beta_T}\xb_t^{*\top}\Vb_t^{-1}\xb_t^*.
\end{align}
where we used the choice of $1/\beta_i=1/\beta_T$ for all $i$. 
By the definition in \eqref{eq:V_b_update} and Sherman-Morrison formula, we have
\begin{align}
    \Vb_{i}^{-1}-\Vb_{i+1}^{-1}=\Vb_{i}^{-1}-\big(\Vb_i+\xb_{i}\xb_i^{\top}\big)^{-1}=\frac{\Vb_i^{-1}\xb_{i}\xb_i^{\top}\Vb_i^{-1}}{1+\|\xb_i\|_{\Vb_i^{-1}}^2},
\end{align}
which immediately implies
\begin{align}
    \xb_t^{*\top}\Ab_{t}^{K_t}\ldots\Ab_{i+1}^{k_{i+1}}\big(\Vb_{i}^{-1}-\Vb_{i+1}^{-1}\big)\Ab_{i+1}^{k_{i+1}}\ldots\Ab_{t}^{K_t}\xb_t^{*}
    &=\xb_t^{*\top}\Ab_{t}^{K_t}\ldots\Ab_{i+1}^{k_{i+1}}\frac{\Vb_i^{-1}\xb_{i}\xb_i^{\top}\Vb_i^{-1}}{1+\|\xb_i\|_{\Vb_i^{-1}}^2}\Ab_{i+1}^{k_{i+1}}\ldots\Ab_{t}^{K_t}\xb_t^{*}\notag\\
    &\leq \big(\xb_t^{*\top}\Ab_{t}^{K_t}\ldots\Ab_{i+1}^{k_{i+1}}\Vb_i^{-1}\xb_{i}\big)^2\notag\\
    &\leq\|\Ab_{t}^{K_t}\ldots\Ab_{i+1}^{k_{i+1}}\Vb_i^{-1/2}\xb_t^*\|_2^2\cdot\|\Vb_i^{-1/2}\xb_i\|_2^2\notag\\
    &\leq \prod_{j=i+1}^{t}(1-2\eta_j\lambda_{\min}(\Vb_j))^{2K_j}\|\xb_i\|_{\Vb_i^{-1}}^2\|\xb_t^*\|_{\Vb_i^{-1}}^2,\notag
\end{align}
where we used $0<1/i\leq\|\xb_t^*\|_{\Vb_i^{-1}}\leq1$. Therefore, we have
\begin{align}
    \xb_t^{*\top}\bSigma_{t,K_t}\xb_t^* &\geq  \frac{1}{2\beta_T}\xb_t^{*\top}\Vb_t^{-1}\xb_t^*-\frac{1}{2\beta_T}\prod_{i=1}^{t}(1-2\eta_i\lambda_{\min}(\Vb_i))^{2K_i}\|\xb_t^*\|_{\Vb_1^{-1}}^2\notag\\
    &\qquad-\frac{1}{2\beta_T}\sum_{i=1}^{t-1}\prod_{j=i+1}^{t}(1-2\eta_j\lambda_{\min}(\Vb_j))^{2K_j}\|\xb_i\|_{\Vb_i^{-1}}^2\|\xb_t^*\|_{\Vb_i^{-1}}^2.
\end{align}
Similar to the proof of Lemma~\ref{lemma:concentration_events_sampling}, when we choose  $K_j\geq \kappa_j\log(3\sqrt{t})$, we have
\begin{align}\label{eq:optimisim_denominator}
   \|\xb_t^*\|_{\bSigma_{t,K_t}} &\geq\frac{1}{2\beta_T}\bigg(\|\xb_t^*\|_{\Vb_{t}^{-1}}-\frac{\|\xb_t^*\|_2}{(3\sqrt{t})^t}-\sum_{i=1}^{t-1}\frac{1}{(3\sqrt{t})^{t-i}}\|\xb_t^*\|_2\bigg)\notag\\
   &\geq\frac{1}{2\beta_T}\bigg(\|\xb_t^*\|_{\Vb_{t}^{-1}}-\frac{1}{3\sqrt{t}}\|\xb_t^*\|_2-\frac{1}{6\sqrt{t}}\|\xb_t^*\|_2\bigg)\notag\\
   &\geq \frac{1}{4\beta_T}\|\xb_t^*\|_{\Vb_{t}^{-1}},
\end{align}
where we used the fact that $\lambda_{\min}(\Vb_{t}^{-1})\geq 1/t$.

Therefore, according to \eqref{eq:optimisim_numerator} and \eqref{eq:optimisim_denominator}, it holds that
\begin{align}
    |Z_t|=\bigg|\frac{\xb_{a_t}^{\top}\btheta^*-\xb_{a_t}^{\top}\bmu_{t,k}}{\sqrt{\xb^{\top}\bSigma_{t,k}\xb}}\bigg|\leq\frac{R\sqrt{d\log(t^3/\delta)}+2}{1/(4\beta_T)},
\end{align}
which implies $|Z_t|<1$ when $\beta_t^{-1}=4R\sqrt{d\log\frac{t^3}{\delta}}+8$. This completes our proof.
\end{proof}

\subsection{Proof of Lemma~\ref{lemma:prob_pull_unsaturated}}
\begin{proof}[Proof of Lemma~\ref{lemma:prob_pull_unsaturated}]
Since the algorithm chooses the arm to pull based on the estimated reward $\xb^{\top}\btheta_{t,k}$, as long as we can find an arm in the unsaturated set that beats all arms in the saturated set, we will have $\xb_t\in\cU_t$. Recall the definition in \eqref{eq:def_unsaturated}, we know that the best arm is in the unsaturated set, i.e., $\xb_t^*\in \cU_t$. Therefore, it holds that 
\begin{align}
   \{\xb_t\in\cU_t\}\supseteq \big\{\xb_{t}^{*\top}\btheta_{t,k}>\xb^{\top}\btheta_{t,k}, \forall\xb\in\cS_t\big\}.
\end{align}
Conditional on event $E_{R,t}$, we have
\begin{align}
    \PP\big(\xb_t^{*\top}\btheta_{t,k}>\xb_{t}^{*\top}\btheta^*\big)&= \PP\big(\xb_t^{*\top}\btheta_{t,k}>\xb_{t}^{*\top}\btheta^*|E_{S,t}\big)\PP(E_{S,t})+\PP\big(\xb_t^{*\top}\btheta_{t,k}>\xb_{t}^{*\top}\btheta^*|E_{S,t}^c\big)\PP(E_{S,t}^c)\notag\\
    &\leq \PP\big(\xb_t^{*\top}\btheta_{t,k}>\xb_{t}^{*\top}\btheta^*|E_{S,t}\big)+\PP(E_{S,t}^c).
\end{align}
Recall the definition of the gap and the saturated set in \eqref{eq:def_saturated}, we have that $\xb_{t}^{*\top}\btheta^*=\xb^{\top}\btheta^*+\Delta(t)(\xb)\geq \xb^{\top}\btheta^* +g_t(\xb)$ for any $\xb\in\cS_t$, where $g_t(\xb)$ is defined as in \eqref{def:choice_of_saturated_threshold}. Then it holds that
\begin{align}
   \PP\big(\xb_t^{*\top}\btheta_{t,k}>\xb_{t}^{*\top}\btheta^*|E_{S,t}\big)&\leq \PP\big(\xb_t^{*\top}\btheta_{t,k}>\xb^{\top}\btheta^*+g_t(\xb), \forall\xb\in\cS_t|E_{S,t}\big)\notag\\
   &\leq\PP\big(\xb_t^{*\top}\btheta_{t,k}>\xb^{\top}\btheta_{t,k}, \forall\xb\in\cS_t|E_{S,t}\big),
\end{align}
where the second inequality is true  since $|\xb^{\top}(\btheta_{t,k}-\btheta^*)|\leq g_t(\xb)$ based on the definition of events $E_{R,t}$ and $E_{S,t}$ in \eqref{def:event_regression} and \eqref{def:event_sampling} respectively.  Therefore, we have
\begin{align*}
    \PP(\xb_t\in\cU_t)&\geq\PP\big(\xb_t^{*\top}\btheta_{t,k}>\xb_{t}^{*\top}\btheta^*|E_{S,t}\big)\notag\\
    &\geq \PP\big(\xb_t^{*\top}\btheta_{t,k}>\xb_{t}^{*\top}\btheta^*\big)-\PP(E_{S,t}^c(t))\notag\\
    &\geq \frac{1}{2\sqrt{2e\pi}}-\frac{1}{t^2},
\end{align*}
where the last inequality holds due to Lemma~\ref{lemma:concentration_events_sampling} and Lemma~\ref{lemma:optimism}.
\end{proof}

\section{Auxiliary Lemmas}
\begin{lemma}\label{lem:gaussian-tail}\citep{abramowitz1964handbook}
Suppose $Z$ is a Gaussian random variable $Z\sim\cN(\mu,\sigma^2)$, where $\sigma>0$. For $0\leq z\leq 1$, we have
\begin{align*}
    \PP(Z>\mu+z\sigma)\geq \frac{1}{\sqrt{8\pi}} e^{-\frac{z^2}{2}}, \quad
     \PP(Z<\mu-z\sigma)\geq \frac{1}{\sqrt{8\pi}} e^{-\frac{z^2}{2}}.
\end{align*}
And for $z\geq 1$, we have
\begin{align*}
    \frac{e^{-z^2/2}}{2z \sqrt{\pi}} \leq \PP(|Z-\mu|>z\sigma)\leq \frac{e^{-\frac{z^2}{2}}}{z\sqrt{\pi}}  .
\end{align*}
\end{lemma}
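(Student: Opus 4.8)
The plan is to first reduce everything to the standard normal and then treat the two regimes $0\le z\le 1$ and $z\ge 1$ separately. Writing $W=(Z-\mu)/\sigma$, we have $W\sim\cN(0,1)$, and by translation and scaling $\PP(Z>\mu+z\sigma)=\PP(W>z)$, $\PP(Z<\mu-z\sigma)=\PP(W<-z)=\PP(W>z)$ (by symmetry of the density), and $\PP(|Z-\mu|>z\sigma)=\PP(|W|>z)=2\PP(W>z)$. So it suffices to prove the three inequalities for $W$. Throughout I write $\phi(x)=\tfrac{1}{\sqrt{2\pi}}e^{-x^2/2}$ for the standard density and $Q(z)=\PP(W>z)=\int_z^\infty\phi(x)\,\dd x$; note $\tfrac{1}{\sqrt{8\pi}}=\tfrac{1}{2\sqrt{2\pi}}$, so the central claim is $Q(z)\ge\tfrac{1}{\sqrt{8\pi}}e^{-z^2/2}$ and the symmetric claim is identical after using $\PP(W<-z)=Q(z)$.

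For the central regime $0\le z\le 1$, I would study the single-variable difference $h(z)=Q(z)-\tfrac{1}{\sqrt{8\pi}}e^{-z^2/2}$ and show $h\ge0$ on $[0,1]$. Differentiating, $h'(z)=-\phi(z)+\tfrac{z}{\sqrt{8\pi}}e^{-z^2/2}=\tfrac{1}{\sqrt{2\pi}}e^{-z^2/2}\big(\tfrac{z}{2}-1\big)$, which is strictly negative for $0\le z\le 1$; hence $h$ is decreasing on $[0,1]$ and attains its minimum at $z=1$. It then remains only to verify the single endpoint inequality $h(1)=Q(1)-\tfrac{1}{\sqrt{8\pi}}e^{-1/2}\ge0$ (equivalently $Q(1)\ge\tfrac{1}{2\sqrt{2\pi}}e^{-1/2}$), a finite numerical check. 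This one computation covers both the $\PP(W>z)$ and $\PP(W<-z)$ lower bounds.

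For the tail regime $z\ge 1$ I would invoke the classical Mills-ratio estimates, which are exactly the Abramowitz--Stegun inequalities cited. The upper bound comes from the density-integration trick: since $x/z\ge 1$ on $[z,\infty)$ and $\int_z^\infty x\phi(x)\,\dd x=\phi(z)$,
\begin{align*}
    Q(z)=\int_z^\infty\phi(x)\,\dd x\le\int_z^\infty\frac{x}{z}\phi(x)\,\dd x=\frac{\phi(z)}{z}=\frac{e^{-z^2/2}}{z\sqrt{2\pi}},
\end{align*}
and doubling $Q$ yields the two-sided upper tail, up to the precise constant discussed below. For the lower bound I would use the companion differential-inequality (Komatsu) argument: $\tfrac{\phi(z)}{z}$ has derivative $-\big(1+z^{-2}\big)\phi(z)$, which yields $Q(z)\ge\tfrac{z}{z^2+1}\phi(z)$; doubling and then using $z\ge1$ to bound $\tfrac{z^2}{z^2+1}\ge\tfrac12$ produces the stated lower tail $\PP(|W|>z)\ge\tfrac{1}{2z\sqrt{\pi}}e^{-z^2/2}$. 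Both reductions are routine once the exact constants are tracked.

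The step I expect to be most delicate is matching the precise constants in the $z\ge 1$ regime, rather than the qualitative decay: the density-integration method naturally produces the two-sided upper tail with the Gaussian constant $\tfrac{2}{\sqrt{2\pi}}=\sqrt{2/\pi}$ multiplying $e^{-z^2/2}/z$, so one must check carefully how this compares to the stated coefficient $1/\sqrt{\pi}$ (the two coincide for the \emph{one-sided} tail $Q(z)$, where $\tfrac{1}{\sqrt{2\pi}}<\tfrac{1}{\sqrt{\pi}}$ makes the bound valid). The central-regime argument, by contrast, is self-contained modulo the single endpoint evaluation at $z=1$, and it is this anti-concentration lower bound $\PP(W>z)\ge\tfrac{1}{\sqrt{8\pi}}e^{-z^2/2}$ for $|z|\le 1$ that is actually invoked downstream in the proof of Lemma~\ref{lemma:optimism}.
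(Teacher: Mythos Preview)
The paper does not prove this lemma; it is listed as an auxiliary result with a citation to Abramowitz and Stegun and is used without proof. Your argument is the standard one and is correct where the statement is correct: the reduction to the standard normal, the monotonicity of $h(z)=Q(z)-\tfrac{1}{\sqrt{8\pi}}e^{-z^2/2}$ on $[0,1]$ via $h'(z)<0$, the endpoint check $Q(1)\approx 0.1587>\tfrac{1}{2\sqrt{2\pi}}e^{-1/2}\approx 0.1210$, and the Mills-ratio lower bound $Q(z)\ge \tfrac{z}{z^2+1}\phi(z)$ combined with $z^2/(z^2+1)\ge 1/2$ for $z\ge 1$ all go through exactly as you describe.

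Your hesitation about the two-sided upper bound is warranted and worth making explicit rather than leaving as a caveat: the inequality $\PP(|W|>z)\le e^{-z^2/2}/(z\sqrt{\pi})$ as stated is actually \emph{false} for, e.g., $z=2$, where $2Q(2)\approx 0.0455$ exceeds $e^{-2}/(2\sqrt{\pi})\approx 0.0382$. The Mills-ratio upper bound yields the coefficient $\sqrt{2/\pi}$ for the two-sided tail, not $1/\sqrt{\pi}$, so the lemma as written contains a typo in that term. Since the paper only ever invokes the anti-concentration lower bound for $0\le z\le 1$ (in the proof of Lemma~\ref{lemma:optimism}), this does not affect any downstream claim, and your proof of that part is complete.
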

\begin{lemma}\label{lemma:subgaussian_norm_bound}\citep{vershynin2010introduction}
Let $\bX$ be a $\sigma^2$-subGaussian vector in $\RR^d$. Then we have $\EE[\|\bX\|_2]\leq 4\sigma\sqrt{ d}$. For $\delta\in(0,1)$, with probability at least $1-\delta$ that $\|\bX\|_2\leq 4\sigma\sqrt{d}+2\sigma\sqrt{\log(1/\delta)}$.
\end{lemma}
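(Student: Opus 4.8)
The plan is to realize $\|\bX\|_2$ as the supremum of the one-dimensional projections $u^{\top}\bX$ over the unit sphere $S^{d-1}$ and then discretize that sphere. First I would fix a $(1/2)$-net $\mathcal{M}$ of $S^{d-1}$; a standard volumetric bound gives $|\mathcal{M}|\le 5^d$, and for every $y\in\RR^d$ one has $\|y\|_2\le 2\max_{u\in\mathcal{M}}u^{\top}y$. By the definition of a $\sigma^2$-subGaussian vector, for each fixed unit vector $u$ the scalar $u^{\top}\bX$ is $\sigma^2$-subGaussian, so it obeys the moment generating function bound $\EE[e^{\lambda u^{\top}\bX}]\le e^{\lambda^2\sigma^2/2}$ for all $\lambda$ and the tail bound $\PP(u^{\top}\bX>s)\le e^{-s^2/(2\sigma^2)}$ for all $s\ge 0$.

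For the bound on $\EE[\|\bX\|_2]$ I would apply the standard maximal inequality for a finite family of $\sigma^2$-subGaussian variables, namely $\EE[\max_{u\in\mathcal{M}}u^{\top}\bX]\le\sigma\sqrt{2\log|\mathcal{M}|}\le\sigma\sqrt{2d\log 5}$, and then multiply by $2$ using $\|\bX\|_2\le 2\max_{u\in\mathcal{M}}u^{\top}\bX$; since $2\sqrt{2\log 5}<4$ this gives $\EE[\|\bX\|_2]\le 4\sigma\sqrt d$. An even shorter route to the same inequality is $\EE[\|\bX\|_2]\le\sqrt{\EE[\|\bX\|_2^2]}=\sqrt{\sum_i\EE[X_i^2]}\le\sigma\sqrt d$, using that each coordinate $X_i$ is $\sigma^2$-subGaussian and hence has second moment at most $\sigma^2$.

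For the high-probability statement I would union-bound over the net: $\PP(\|\bX\|_2>2s)\le\sum_{u\in\mathcal{M}}\PP(u^{\top}\bX>s)\le 5^d e^{-s^2/(2\sigma^2)}$. Choosing $s$ so that the right-hand side equals $\delta$ and using $\sqrt{a+b}\le\sqrt a+\sqrt b$ yields $\|\bX\|_2\le 2\sigma\sqrt{2d\log 5}+2\sigma\sqrt{2\log(1/\delta)}$ with probability at least $1-\delta$, which is of the form $4\sigma\sqrt d+c\,\sigma\sqrt{\log(1/\delta)}$ for an absolute constant $c$. To hit exactly the constants in the statement, the cleaner route is to first establish $\EE[\|\bX\|_2]\le 4\sigma\sqrt d$ as above and then use concentration of the $1$-Lipschitz map $y\mapsto\|y\|_2$ about its mean for a $\sigma^2$-subGaussian vector, giving $\PP(\|\bX\|_2>\EE[\|\bX\|_2]+t)\le e^{-t^2/(2\sigma^2)}$ and hence $\|\bX\|_2\le 4\sigma\sqrt d+2\sigma\sqrt{\log(1/\delta)}$; alternatively one simply quotes the corresponding theorem of \citet{vershynin2010introduction}, as the statement does.

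The result is entirely standard, so the hard part is only bookkeeping: the crude $(1/2)$-net union bound overshoots the coefficient of $\sqrt{\log(1/\delta)}$, so to match the stated constant one has to either refine the net/chaining step or route the tail through the Lipschitz-concentration formulation rather than a naive union bound. A secondary point worth being careful about is the precise meaning of ``$\sigma^2$-subGaussian vector'' — the argument only uses directional control $\EE[e^{\lambda u^{\top}\bX}]\le e^{\lambda^2\sigma^2/2}$ for all unit $u$ and all $\lambda$ (and implicitly that $\bX$ is centered, so that $\EE[u^{\top}\bX]=0$), which should be extracted from whatever definition the paper adopts.
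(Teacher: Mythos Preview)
The paper does not give its own proof of this lemma: it appears in the ``Auxiliary Lemmas'' appendix as a cited fact from \citet{vershynin2010introduction} with no argument supplied. Your $\varepsilon$-net sketch is essentially the standard proof from that reference, so there is nothing substantive to compare against and your final fallback (``simply quote the corresponding theorem'') is exactly what the paper does.

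One caveat worth flagging in your write-up: the step $\PP(\|\bX\|_2>\EE[\|\bX\|_2]+t)\le e^{-t^2/(2\sigma^2)}$ via ``Lipschitz concentration for a $\sigma^2$-subGaussian vector'' is not a free consequence of the directional MGF bound $\EE[e^{\lambda u^{\top}\bX}]\le e^{\lambda^2\sigma^2/2}$ alone; Lipschitz concentration with that exact constant holds for Gaussians and certain other classes but not for arbitrary subGaussian vectors. Your union-bound route is fully rigorous but, as you note, gives $2\sqrt{2}$ rather than $2$ in front of $\sigma\sqrt{\log(1/\delta)}$. Since the lemma is used in the paper only as a black box with a citation, matching the precise constants is not essential, but if you want to reproduce them exactly you should either invoke the specific theorem in Vershynin rather than the generic Lipschitz-concentration heuristic, or be explicit about which definition of ``$\sigma^2$-subGaussian vector'' licenses that step.
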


The following lemma introduces the Azuma-Hoeffding inequality for super-martingale. 
\begin{lemma}\label{lemma:azuma_martingale}
Suppose $\{X_k\}_{k=0,1,\ldots}$ is a super-martingale and satisfies $|X_{k+1}-X_{k}|<c_{k+1}$ for all $k\geq 0$. Then for any $\epsilon>0$, we have 
\begin{align*}
    \PP(X_T-X_0\geq \epsilon)\leq\exp\bigg(-\frac{\epsilon^2}{2\sum_{t=1}^{T}c_{t}^2}\bigg).
\end{align*}
\end{lemma}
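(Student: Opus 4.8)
The plan is to prove this via the exponential moment (Chernoff) method applied to the increments of the super-martingale. First I would introduce the martingale differences $D_k = X_k - X_{k-1}$ for $k=1,\ldots,T$, so that $X_T - X_0 = \sum_{k=1}^{T} D_k$, and record the two properties we will use: the super-martingale condition gives $\EE[D_k \mid \cF_{k-1}] \leq 0$, and the bounded-increment hypothesis gives $|D_k| < c_k$ almost surely, where $\cF_{k-1}$ is the natural filtration. The Chernoff step is then, for any $s>0$,
\begin{align*}
    \PP(X_T - X_0 \geq \epsilon) \leq e^{-s\epsilon}\, \EE\big[e^{s(X_T - X_0)}\big],
\end{align*}
by Markov's inequality applied to the nonnegative random variable $e^{s(X_T - X_0)}$. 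The remaining work is to control the moment generating function $\EE[e^{s\sum_k D_k}]$ and then optimize over $s$.

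The key step is a conditional Hoeffding-type bound: for an increment $D$ with $\EE[D \mid \cF] \leq 0$ and $|D| < c$, I claim $\EE[e^{sD}\mid \cF] \leq e^{s^2 c^2/2}$ for $s>0$. I would establish this by convexity of $d \mapsto e^{sd}$ on $[-c,c]$, bounding it above by the chord through the endpoints, so that taking conditional expectations yields $\EE[e^{sD}\mid\cF] \leq \cosh(sc) + (\mu/c)\sinh(sc)$ where $\mu = \EE[D\mid\cF]$. Here is exactly where the super-martingale (rather than martingale) property is used: since $\mu \leq 0$ and $\sinh(sc)\geq 0$ for $s>0$, the second term is nonpositive and can be dropped, leaving $\EE[e^{sD}\mid\cF]\leq \cosh(sc) \leq e^{s^2c^2/2}$. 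Thus the one-sided super-martingale structure only helps, and no two-sided martingale assumption is needed.

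With the per-step bound in hand, I would peel off the increments one at a time using the tower property: conditioning on $\cF_{T-1}$ and applying the bound to $D_T$ gives
\begin{align*}
    \EE\big[e^{s\sum_{k=1}^{T} D_k}\big] = \EE\Big[e^{s\sum_{k=1}^{T-1} D_k}\,\EE\big[e^{sD_T}\mid\cF_{T-1}\big]\Big] \leq e^{s^2 c_T^2/2}\,\EE\big[e^{s\sum_{k=1}^{T-1} D_k}\big],
\end{align*}
and iterating this down to $k=1$ produces $\EE[e^{s(X_T-X_0)}] \leq \exp\big(\tfrac{s^2}{2}\sum_{k=1}^{T} c_k^2\big)$. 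Substituting back into the Chernoff bound and minimizing $-s\epsilon + \tfrac{s^2}{2}\sum_k c_k^2$ over $s>0$, which is achieved at $s = \epsilon/\sum_k c_k^2$, yields the claimed tail bound $\exp\big(-\epsilon^2/(2\sum_{k=1}^{T} c_k^2)\big)$.

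I expect the only genuinely delicate point to be the convexity/chord argument for the conditional moment generating function together with the correct handling of the $\mu \leq 0$ term; everything else (the Chernoff inequality, the tower-property peeling, and the final one-dimensional optimization over $s$) is routine. Care must also be taken that the bound $|D_k|<c_k$ is invoked conditionally on $\cF_{k-1}$ so that the per-step estimate is valid inside each nested conditional expectation.
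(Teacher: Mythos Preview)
Your proof is correct and is the standard Chernoff/exponential-moment argument for the Azuma--Hoeffding inequality in the super-martingale case. The paper does not supply its own proof of this lemma: it is listed in the ``Auxiliary Lemmas'' section as a known result and simply invoked where needed, so there is nothing to compare against beyond noting that your argument fills in exactly the classical proof one would expect.
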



\end{document}